\newcommand{\barsigma}{\bar{\sigma}}
\newcommand{\checkA}{\check{A}}
\newcommand{\checkS}{\check{S}}
\newcommand{\checkV}{\check{V}}
\newcommand{\hatbeta}{\hat{\beta}}
\newcommand{\hatf}{\hat{f}}
\newcommand{\seq}[1]{\overline{[#1]}}
\newcommand{\sigmamin}{\sigma_{\mathrm{min}}}
\newcommand{\tildeA}{\tilde{A}}
\newcommand{\tildeS}{\tilde{S}}
\newcommand{\tildeV}{\tilde{V}}
\newcommand{\algomdp}{\text{UCRL-WVTR}}
\newcommand{\algovar}{\text{HOME}}
\title{\LARGE Horizon-Free and Instance-Dependent Regret Bounds for Reinforcement Learning with General Function Approximation}
\author{%
Jiayi Huang\thanks{Peking University. Email: \texttt{jyhuang@stu.pku.edu.cn}.}
\and
Han Zhong\thanks{Peking University. Email: \texttt{hanzhong@stu.pku.edu.cn}.}
\and
Liwei Wang\thanks{Peking University. Email: \texttt{wanglw@cis.pku.edu.cn}.}
\and
Lin F. Yang\thanks{University of California, Los Angles. Email: \texttt{linyang@ee.ucla.edu}.}
}
\begin{document}
\maketitle

\begin{abstract}
To tackle long planning horizon problems in reinforcement learning with general function approximation, we propose the first algorithm, termed as UCRL-WVTR, that achieves both \emph{horizon-free} and \emph{instance-dependent}, since it eliminates the polynomial dependency on the planning horizon. The derived regret bound is deemed \emph{sharp}, as it matches the minimax lower bound when specialized to linear mixture MDPs up to logarithmic factors. Furthermore, UCRL-WVTR is \emph{computationally efficient} with access to a regression oracle. The achievement of such a horizon-free, instance-dependent, and sharp regret bound hinges upon (i) novel algorithm designs: weighted value-targeted regression and a high-order moment estimator in the context of general function approximation; and (ii) fine-grained analyses: a novel concentration bound of weighted non-linear least squares and a refined analysis which leads to the tight instance-dependent bound. We also conduct comprehensive experiments to corroborate our theoretical findings.
\end{abstract}

\ifarxiv
\section{Introduction}
\else
\section{INTRODUCTION}
\fi
\label{sec:intro}
Reinforcement Learning (RL) plays a pivotal role in solving complex decision-making problems, where an agent interacts with the environment to learn a policy that maximizes cumulative rewards.
In the context of time-homogeneous episodic RL, where the total rewards are bounded by $1$, an open question arises concerning whether episodic RL is more difficult than bandits problems \citep{jiang2018open}.
While numerous research endeavors have been dedicated to the development of algorithms with up to poly-logarithmic dependence on the planning horizon, thus are \emph{horizon-free}, these efforts are restricted to either tabular \citep{wang2020long,zhang2021reinforcement,li2022settling,zhang2022horizon} or linear mixture Markov Decision Processes (MDPs) \citep{zhang2021improved,kim2022improved,zhou2022computationally,zhao2023variance}.
Consequently, this open question persists as a formidable challenge, necessitating further exploration and innovation to address a broader family of RL problems.

The difficulty of solving RL problems is intrinsically related to complexity of the underlying model.
For instance, the performance of an RL algorithm typically improves when the stochasticity in unknown transition kernel is diminished.
This observation motivated many works to design algorithms that achieve \emph{instance-dependent} regret bounds \citep{zanette2019tighter,zhou2023sharp,zhao2023variance}.
These bounds depend on fine-grained, problem-specific quantities, thus providing tighter guarantees than worst-case regret bounds.
Nevertheless, existing research for RL with general function approximation has primarily focused on minimizing regret in the worst-case scenario \citep{ayoub2020model,foster2023tight,agarwal2023vo}, thereby leaving an instance-dependent guarantee still challenging in this domain.

Thus, a natural question arises:
\begin{center}
    \ifarxiv
    \emph{Is efficient, horizon-free and instance-dependent learning possible\\in RL with general function approximation?}
    \else
    \emph{Is efficient, horizon-free and instance-dependent learning possible in RL with general function approximation?}
    \fi
\end{center}

In this work, we give an affirmative answer to this question by proposing an algorithm, termed as $\algomdp$, for RL with general function approximation. $\algomdp$ enjoys an $\tilde{O}(\sqrt{\dim_\mathcal{F} \log\mathcal{N}_\mathcal{F}}\cdot\sqrt{\mathcal{Q}^*} + \dim_\mathcal{F} \log\mathcal{N}_\mathcal{F})$ regret, where $\dim_\cF$ and $\log\cN_\cF$ respectively denote the generalized Eluder dimension and log-covering number of the function class $\mathcal{F}$, and $\mathcal{Q}^*$ is an instance-dependent quantity defined in Theorem~\ref{thm:mdp-regret}. The generalized Eluder dimension is an extension of Eluder dimension \citep{russo2013eluder} tailored to the context of weighted regression settings.
The derived instance-dependent regret bound exhibits a log-polynomial dependence on the planning horizon $H$. Therefore, we simultaneously achieve the best of both worlds: horizon-free and instance-dependent. For more detailed insights and explanations, we refer readers to Section~\ref{sec:mdp}.
The main theoretical contributions of this paper are summarized as follows:
\begin{itemize}
    \item Our algorithm utilizes the Value-Targeted Regression (VTR) framework proposed by \citet{ayoub2020model}, which was devised for estimating the unknown transition kernel. Our refinement of this framework includes a weighted design, which assigns higher importance to data points with low variance and uncertainty. Additionally, we adopt a high-order moment estimator to achieve more accurate estimations of variances. This improved algorithmic design plays a pivotal role in producing an accurate model estimate. Based on the estimated model, we further implement an efficient planning approach via a regression oracle for general function approximation, thus achieving computational efficiency.
    \item To theoretically characterize the deviation of the estimated model, we propose a novel Bernstein-style concentration bound for weighted non-linear regression and rigorously establish its tightness. This result provides insights for assigning weights to be both variance-aware and uncertainty-aware in our algorithm. Despite the inherent complexities posed by the non-linear function class, we have successfully conducted a refined analysis, resulting in a regret bound that simultaneously achieves horizon-free and instance-dependent.
    \item As a special case, our algorithm achieves a first-order regret scaling as $\tilde{O}(d\sqrt{V^*_1 K}+d^2)$ for linear mixture MDPs, where the transition kernel admits a linear combination of some basis transition models. This regret bound is sharp since it is reduced to $\tilde{O}(d\sqrt{K}+d^2)$ in the worst case, matching the lower bound $\Omega(d\sqrt{K})$ \citep{zhou2021nearly} when $K\ge d^2$ up to logarithmic factors. Considering this emerges as a direct consequence of a broader result, it shows that our novel algorithm designs and fine-grained analyses indeed handle the general RL problems in a sharp manner.
\end{itemize}

\paragraph{Road Map}
The rest of this paper is organized as follows. Section~\ref{sec:related} provides works that are closely related to ours. Section~\ref{sec:preliminary} introduces the formal definition of RL with general function approximation. Section~\ref{sec:mdp} states our main theoretical results. Section~\ref{sec:exp} presents the experimental results that corroborate our theoretical findings. We then make conclusions in Section~\ref{sec:conclu}. Additional experiments and detailed theoretical analyses are left in the Appendix.

\paragraph{Notations}
Let $[n]:=\{1,2,\dots,n\}$.
Let $\seq{n}:=\{0,1,\dots,n\}$.
For a collection of elements $\{X_{t}\}_{t\ge1}$, let $X_{[t]}:=\{X_1,\dots,X_t\}$.
Denote the $\epsilon$-covering number of $\cF$ w.r.t. $\ell_\infty$-norm as $\cN_\cF(\epsilon)$. 

\ifarxiv
\section{Related Work}
\else
\section{RELATED WORK}
\fi
\label{sec:related}

\paragraph{Horizon-Free Regret in RL}
\cite{jiang2018open} raised an open question regarding the comparative difficulty of RL in comparison to bandit problems. Specifically, in the context of time-homogeneous tabular RL and assuming an upper bound of $1$ on total rewards, they posited that any algorithm seeking to find an $\epsilon$-optimal policy would need to exhibit a polynomial dependence on the planning horizon $H$ in the sample complexity.
This conjecture has been challenged by a line of works that have introduced algorithms capable of achieving horizon-free regret bounds \citep{wang2020long,zhang2021reinforcement,li2022settling,zhang2022horizon}. Additionally, a series recent studies \citep{zhang2021improved,kim2022improved,zhou2022computationally,zhao2023variance} further extended horizon-free learning to linear mixture MDPs.
It is noteworthy that these aforementioned studies are based on the assumption of either a finite state space or the linear representation of transition probabilities. Regrettably, such assumptions are often less practical to align with real-world scenarios.

\paragraph{Instance-Dependent Regret in RL}
In recent years, there has been a substantial body of research dedicated to the developingment of algorithms with instance-dependent regret bounds, therefore providing tighter guarantee than traditional worst-case regret \citep{zanette2019tighter,zhou2023sharp,wagenmaker2022first,li2023variance,huang2023tackling,zhao2023variance}.
Notably, \citet{zhao2023variance} introduced a variance-adaptive algorithm with a horizon-free and variance-aware regret bound, but constrained to linear mixture MDPs.
To our best knowledge, hardly few works have explored the concept of instance-dependent regret in RL with general function approximation. The exception to this is \citet{wagenmaker2023instance}, which developed a non-asymptotic theory of instance-optimal RL with general function approximation. However, despite theoretical advancements, their algorithm falls short of practicality since it deviates significantly from minimax optimality and computational efficiency.

\paragraph{RL with General Function Approximation}
We study model-based RL with general function approximation, where the agent learns and employs an explicit model of the environment for planning and decision-making. This area has witnessed a substantial surge of research \citep{osband2014model,sun2019model,ayoub2020model, wang2020reinforcement,foster2021statistical,chen2022general,chen2022unified,zhong2022posterior,foster2023tight,wagenmaker2023instance}.
Notably, \citet{ayoub2020model} stand as the closest precursor to our work. They adopted a novel VTR approach, which evaluates models based on their ability to predict values at the next states. This VTR framework was subsequently extended and refined, culminating in the attainment of horizon-free regrets when applied to linear mixture MDPs \citep{zhou2022computationally}.
Concurrently, there exists a separate line of works concentrated on model-free RL \citep{jiang2017contextual,wang2020reinforcement,du2021bilinear,jin2021bellman,kong2021online,dann2021provably,zhong2022posterior,liu2023one,agarwal2023vo}, where the agent directly learns policies or value functions for decision-making based on their interactions with the environment.
However, it is worth noting that the algorithms mentioned above either suffer from polynomial dependence on the planning horizon $H$ or fail to provide a fine-grained instance-dependent regret guarantee.
Our work achieves the best of both worlds for the first time in RL with general function approximation.

\ifarxiv
\section{Preliminaries}
\else
\section{PRELIMINARIES}
\fi
\label{sec:preliminary}

\paragraph{MDPs with General Function Approximation}
We study time-homogeneous episodic MDPs, which can be described by a tuple $M=(\cS, \cA, H, \PP, \{r_h\}_{h\in[H]})$. Here, $\cS$ and $\cA$ are state space and action space, respectively, $H$ is the length of planning horizon, $\PP:\cS\times\cA\to\Delta(\cS)$ is the transition dynamics, $r_h:\cS\times\cA\to\RR$ is the $h$-th step deterministic reward function known to the agents\footnote{Our result can be generalized to the unknown-reward cases since learning transition dynamics is more challenging than learning rewards.}. We consider the bounded reward setting that $\sum_{h=1}^H r_h(s_h,a_h) \le 1$ for any trajectory $\{s_1,a_1,\dots,s_H,a_H\}$. We consider deterministic policy throughout this paper. A deterministic policy $\pi=\{\pi_h: \cS \to \cA\}_{h\in[H]}$ is a collection of $H$ mappings from state space to action space.
For any state-action pair $(s,a)\in\cS\times\cA$, we define the action value function $Q^\pi_h(s,a)$ and (state) value function $V^\pi_h(s)$ as follows:
\ifarxiv
\begin{align*}
    Q^\pi_h(s,a) := \EE\Big[\sum_{h'=h}^H r(s_{h'},a_{h'}) \Big| s_h=s,a_h=a\Big],\quad V^\pi_h(s) := Q^\pi_h(s,\pi_h(s)),
\end{align*}
\else
\begin{align*}
    Q^\pi_h(s,a) &:= \EE\Big[\sum_{h'=h}^H r(s_{h'},a_{h'}) \Big| s_h=s,a_h=a\Big], \\
    V^\pi_h(s) &:= Q^\pi_h(s,\pi_h(s)),
\end{align*}
\fi
where the expectation is taken with respect to the transition kernel $\PP$ and the agent's policy $\pi$. Denote $V^*_h(s):=\sup_\pi V^\pi_h(s)$ and $Q^*_h(s,a):=\sup_\pi Q^\pi_h(s,a)$ as the optimal value functions. We introduce the following shorthands for simplicity. At the $h$-th step, for any value function $V:\cS\to\RR$, denote
\ifarxiv
\begin{align*}
    [\PP V](s,a) := \EE_{s' \sim \PP(\cdot | s, a)}[V(s')],\quad [\VV V](s,a) := [\PP V^2](s,a) - [\PP V]^2(s,a)
\end{align*}
\else
\begin{align*}
    [\PP V](s,a) &:= \EE_{s' \sim \PP(\cdot | s, a)}[V(s')], \\
    [\VV V](s,a) &:= [\PP V^2](s,a) - [\PP V]^2(s,a)
\end{align*}
\fi
as the conditional expectation and variance of $V$, respectively.
We aim to design efficient algorithms for minimizing the $K$-episode regret defined as
\[
    \mathrm{Regret}(K):=\sum_{k=1}^K[V^*_1(s^k_1)-V^{\pi^k}_1(s^k_1)].
\]
To tackle problems with large state spaces, we consider MDPs with general function approximation.
\begin{assumption}[MDPs with general function approximation]
Let $\cP$ be a general function class composed of transition kernels mapping state-action pairs to measures over $\cS$. We assume the transition model $\PP$ of the MDP satisfies $\PP\in\cP$.
\end{assumption}
To characterize the complexity of the model class $\cP$, we further introduce the function class
\ifarxiv
\begin{align*}
    \cF := \{f:\cS\times\cA\times\cV\to\RR\text{ such that } \exists\PP\in\cP, f(s,a,V)=[\PP V](s,a)\text{ for any }V\in\cV\},
\end{align*}
\else
\begin{align*}
    \cF &:= \{f:\cS\times\cA\times\cV\to\RR\text{ such that } \exists\PP\in\cP, \\
    &\qquad f(s,a,V)=[\PP V](s,a)\text{ for any }V\in\cV\},
\end{align*}
\fi
where $\cV$ encompasses all value functions $V$ such that $V:\cS\to[0,1]$.
As is demonstrated by \citet{ayoub2020model}, a bijection denoted as $\phi:\mathcal{P}\rightarrow\mathcal{F}$ exists, establishing a direct correspondence between $\mathcal{P}$ and $\mathcal{F}$, such that for any $\PP\in\cP$, there exists a corresponding $f=\phi(\PP)\in\cF$ with $f(s,a,V)=[\PP V](s,a)$. For brevity, we denote $f_\PP:=\phi(\PP)$.
Additionally, we assume for any $(s,a,V)\in\cS\times\cA\times\cV$, $f_\PP(s,a,V)=\sum_{s'\in\cS}V(s')\PP(s'|s,a)$ can be efficiently evaluated within $\cO$ time. Such an assumption depends on the intrinsic property of the model class and it holds true for many MDP models. See \citet{ayoub2020model} for more discussions.
We use the covering number and generalized Eluder dimension introduced later to gauge the complexity of $\cF$.

\paragraph{Generalized Eluder Dimension}
Inspired by \citet{agarwal2023vo}, we use generalized Eluder dimension as a complexity measurement for the generalization property of the function class $\cF$. Generalized Eluder dimension is an extension of the Eluder dimension defined in \citet{russo2013eluder} to weighted regression settings. Intuitively, it can be thought of Eluder dimension of a scaling-enlarged function set.
\begin{definition}[Generalized Eluder dimension]
\label{def:general-eluder-RL}
Let $\lambda > 0$, a sequence of random elements $\bX = \{X_t\}_{t\in[T]}$ and $\bsigma = \{\sigma_t\}_{t\in[T]}$ be given. The \emph{generalized Eluder dimension} of a function class $\cF$ consisting of function $f:\cX\to\RR$ is given by
\ifarxiv
\begin{align*}
    \dim_\cF(\sigmamin,T) := \sup_{\bX,\bsigma:|\bX|=T, \bsigma\ge \sigmamin} \sum_{t=1}^T \min\rbr{1,\frac{1}{\sigma_t^2}D_\cF^2(X_t;X_{[t-1]}, \sigma_{[t-1]})},
\end{align*}
\else
\begin{align*}
    &\dim_\cF(\sigmamin,T) \\
    := &{}\sup_{\bX,\bsigma} \sum_{t=1}^T \min\rbr{1,\frac{1}{\sigma_t^2}D_\cF^2(X_t;X_{[t-1]}, \sigma_{[t-1]})}, \\
    &\text{s.t. } |\bX|=T, \bsigma\ge \sigmamin,
\end{align*}
\fi
where uncertainty $D_\cF$ is defined by
\ifarxiv
\begin{align*}
    D^2_\cF(X_t;X_{[t-1]}, \sigma_{[t-1]}) := \sup_{f_1,f_2\in\cF}\frac{\left(f_1(X_t)-f_2(X_t)\right)^2}{\sum_{s=1}^{t-1}\frac{1}{\sigma_s^2}\left(f_1(X_{s})-f_2(X_s)\right)^2+\lambda}.
\end{align*}
\else
\begin{align*}
    &D^2_\cF(X_t;X_{[t-1]}, \sigma_{[t-1]}) \\
    := &{}\sup_{f_1,f_2\in\cF}\frac{\left(f_1(X_t)-f_2(X_t)\right)^2}{\sum_{s=1}^{t-1}\frac{1}{\sigma_s^2}\left(f_1(X_{s})-f_2(X_s)\right)^2+\lambda}.
\end{align*}
\fi
We also use the notation $\dim_\cF := \dim_\cF(\sigmamin,T)$ when $\sigmamin$ and $T$ are clear from the context.
\end{definition}
\begin{remark}
We remark that the notion of generalized Eluder dimension is initially introduced by \citet{agarwal2023vo}, with a primary focus on time-inhomogeneous MDPs under model-free settings.
Furthermore, they require value closedness\footnote{The Bellman projection of any value function lies in the target function class.} \citep{wang2020reinforcement,kong2021online}. In contrast, we study time-homogenous MDPs under model-based settings, where we do not need completeness-type assumptions. More importantly, our work endeavors to pursue horizon-free learning in the context of general functions, whereas their work only obtains a near-optimal regret bound with a polynomial dependency on the planning horizon $H$.
\end{remark}

\paragraph{Linear Mixture MDPs}
We also consider linear mixture MDPs \citep{modi2020sample,jia2020model,ayoub2020model}, which is a special case where the transition kernel admits a linear representation.
\begin{assumption}[Linear mixture MDPs]
\label{ass:linear-mixture-mdp}
There exists an unknown vector $\btheta^*\in\RR^d$ and a known feature $\bphi(\cdot|\cdot,\cdot)\in\RR^d$ such that
\[
    \PP(s'|s,a)=\dotp{\bphi(s'|s,a)}{\btheta^*}
\]
for any $(s,a,s')\in\cS\times\cA\times\cS$. Meanwhile, we assume that $\|\btheta^*\|_2\le B$ and $\|\bphi_V(s,a)\|_2\le1$ for any bounded value function $V:\cS\to[0,1]$ and any state-action pair $(s,a)\in\cS\times\cA$, where
\begin{align*}
  \bphi_V(s,a):=\sum_{s'\in\cS}\bphi(s'|s,a)V(s').
\end{align*}
\end{assumption}
The generalized Eluder dimension and log-covering number of linear mixture MDPs can be simplified.
\begin{proposition}
\label{prop:linear-eluder}
\label{prop:linear-covering}
For $d$-dimensional, $B$-bounded linear mixture MDPs defined in Assumption~\ref{ass:linear-mixture-mdp}, the generalized Eluder dimension and log-covering number satisfies
\ifarxiv
\begin{align*}
    \dim_\cF(\sigmamin,T) = O(d\log(1+TB/(d\lambda\sigmamin^2)),\quad \log\cN_\cF(\epsilon) = \Theta(d\log(B/\epsilon)).
\end{align*}
\else
\begin{align*}
    \dim_\cF(\sigmamin,T) &= O(d\log(1+TB/(d\lambda\sigmamin^2)), \\
    \log\cN_\cF(\epsilon) &= \Theta(d\log(B/\epsilon)).
\end{align*}
\fi
\end{proposition}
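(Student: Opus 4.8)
The plan is to observe that for linear mixture MDPs the class $\cF$ is, up to identification, nothing but the Euclidean ball $\{\btheta\in\RR^d:\|\btheta\|_2\le B\}$ equipped with the evaluation functionals $(s,a,V)\mapsto\dotp{\bphi_V(s,a)}{\btheta}$, and that all feature vectors satisfy $\|\bphi_V(s,a)\|_2\le1$ by Assumption~\ref{ass:linear-mixture-mdp}. Both claimed estimates then reduce to standard facts about covering numbers and ``elliptical potentials'' of the unit ball in $\RR^d$, the only genuine care being how the radius $B$ enters the generalized Eluder dimension.

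For the covering number I would first note that $|f_{\btheta_1}(s,a,V)-f_{\btheta_2}(s,a,V)|=|\dotp{\bphi_V(s,a)}{\btheta_1-\btheta_2}|\le\|\btheta_1-\btheta_2\|_2$ uniformly in $(s,a,V)$ by Cauchy--Schwarz, so any $\epsilon$-net of the parameter ball (of cardinality at most $(1+2B/\epsilon)^d$ by the usual volumetric bound) induces an $\epsilon$-cover of $\cF$ in $\ell_\infty$; this gives $\log\cN_\cF(\epsilon)\le d\log(1+2B/\epsilon)=O(d\log(B/\epsilon))$. For the matching lower bound I would exhibit an instance whose feature map realizes the coordinate directions, so that $\|f_{\btheta_1}-f_{\btheta_2}\|_\infty\ge\|\btheta_1-\btheta_2\|_\infty\ge\|\btheta_1-\btheta_2\|_2/\sqrt d$; a maximal $(\epsilon\sqrt d)$-packing of the parameter ball in $\ell_2$ has at least $(B/(\epsilon\sqrt d))^d$ points, pairwise $\epsilon$-separated in $\ell_\infty$, so $\log\cN_\cF(\epsilon)=\Omega(d\log(B/\epsilon))$ and hence the claimed $\Theta$.

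The substantive part is the generalized Eluder dimension. I would fix an arbitrary admissible sequence $\bX=\{(s_t,a_t,V_t)\}$ with weights $\bsigma\ge\sigmamin$, write $\mathbf{x}_t:=\bphi_{V_t}(s_t,a_t)$ and $\Lambda_{t-1}:=\sum_{s<t}\sigma_s^{-2}\mathbf{x}_s\mathbf{x}_s^\top$. Since every difference $f_1-f_2$ is a linear functional $\dotp{\cdot}{\mathbf{z}}$ with $\mathbf{z}:=\btheta_1-\btheta_2$ ranging over $\|\mathbf{z}\|_2\le2B$, the uncertainty becomes $D_\cF^2(X_t;\cdot)=\sup_{\|\mathbf{z}\|_2\le2B}\dotp{\mathbf{x}_t}{\mathbf{z}}^2/(\mathbf{z}^\top\Lambda_{t-1}\mathbf{z}+\lambda)$. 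The key observation is that along each ray $\mathbf{z}=r\mathbf{u}$ the objective is nondecreasing in $r\ge0$, so the supremum is attained on the sphere of radius $2B$; rescaling to the unit sphere absorbs the additive $\lambda$ into a ridge term $\lambda/(4B^2)$, and the generalized Rayleigh-quotient identity gives $D_\cF^2(X_t;\cdot)=\mathbf{x}_t^\top\widetilde\Lambda_{t-1}^{-1}\mathbf{x}_t$ with $\widetilde\Lambda_{t-1}:=\Lambda_{t-1}+(\lambda/(4B^2))I_d$. Hence $\sigma_t^{-2}D_\cF^2(X_t;\cdot)=\|\mathbf{x}_t/\sigma_t\|_{\widetilde\Lambda_{t-1}^{-1}}^2$ and $\widetilde\Lambda_t=\widetilde\Lambda_{t-1}+(\mathbf{x}_t/\sigma_t)(\mathbf{x}_t/\sigma_t)^\top$ with $\widetilde\Lambda_0=(\lambda/(4B^2))I_d$, so the elliptical potential lemma bounds the truncated sum $\sum_t\min(1,\sigma_t^{-2}D_\cF^2(X_t;\cdot))$ by $2\log(\det\widetilde\Lambda_T/\det\widetilde\Lambda_0)$; using $\|\mathbf{x}_t/\sigma_t\|_2^2\le1/\sigmamin^2$ together with the determinant--trace inequality gives $\det\widetilde\Lambda_T/\det\widetilde\Lambda_0\le(1+4TB^2/(d\lambda\sigmamin^2))^d$, hence the bound $2d\log(1+4TB^2/(d\lambda\sigmamin^2))=O(d\log(1+TB/(d\lambda\sigmamin^2)))$ uniformly over $\bX$ and $\bsigma$, as desired. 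The main obstacle is the bookkeeping around the radius $B$: naively dropping the constraint $\|\mathbf{z}\|_2\le2B$ leaves a stray additive $\lambda$ that will not combine with the quadratic form, so one must recognize that the supremum lives on the boundary and that this is precisely what produces a ridge regularizer of strength $\lambda/(4B^2)$; once this is in place, the elliptical potential lemma and the determinant--trace step are routine.
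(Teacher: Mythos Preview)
Your proposal is correct and follows essentially the same route as the paper: identify $\cF$ with the parameter ball of radius $B$ in $\RR^d$, reduce the uncertainty $D_\cF^2$ to a ridge-regularized quadratic form $\|\bphi_t\|_{(\Lambda_{t-1}+cI)^{-1}}^2$, and apply the elliptical potential lemma. The only differences are presentational---you argue the supremum lies on the boundary sphere (yielding a ridge of strength $\lambda/(4B^2)$ and an exact identity) whereas the paper reaches the same conclusion via the one-line inequality $\lambda\ge(\lambda/B)\|\btheta\|_2^2$, and you spell out the packing lower bound for the covering number while the paper defers it to ``standard covering-number arguments.''
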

\begin{proof}
See Appendix~\ref{proof:linear-eluder} for a detailed proof.
\end{proof}

\ifarxiv
\section{Main Results}
\else
\section{MAIN RESULTS}
\fi
\label{sec:mdp}

In this section, we propose a new algorithm $\algomdp$ for MDPs with general function approximation, as detailed in Algorithm~\ref{algo:mdp}. We first give the high-level idea, then analyze the computational complexity and regret bound.

\begin{algorithm}[t]
\caption{$\algomdp$}
\label{algo:mdp}
\begin{algorithmic}[1]
\REQUIRE $\lambda$, confidence radius $\{\hatbeta_k\}_{k \geq 1}$, level $M$, variance parameters $\sigmamin, \gamma$
\STATE For $m\in\seq{M}$, $\hatf_{1,m}\leftarrow0$, $D_{1,m},D_{1,0,m}\leftarrow\varnothing$
\FOR{$k=1,\ldots, K$}
    \STATE $V_{k,H+1}(\cdot)=0$
    \FOR{$h=H,\dots,1$}
        \STATE $Q_{k,h}(\cdot,\cdot)\leftarrow \min\{r_h(\cdot,\cdot)+\hatf_{k,0}(\cdot,\cdot,V_{k,h+1})+\hatbeta_k\cD_\cF(\cdot,\cdot,V_{k,h+1};D_{k,0}), 1\}$ \label{line:q-function}
        \STATE $V_{k,h}(\cdot)=\max_{a\in\cA}Q_{k,h}(\cdot,a)$, \\
        $\pi^k_h(\cdot)=\argmax_{a\in\cA}Q_{k,h}(\cdot,a)$
    \ENDFOR
    \STATE Receive $s^k_1$
    \FOR{$h = 1,\dots, H$}
        \STATE Take action $a_h^k  \leftarrow \pi_h^k(s_h^k)$, receive $s_{h+1}^k$
        \STATE For $m \in \seq{M}$, $z_{k,h,m} \leftarrow (s^k_h,a^k_h,V_{k,h+1}^{2^m})$, \\
        $y_{k,h,m} \leftarrow V_{k,h+1}^{2^m}(s^k_{h+1})$ \label{line:zkhm}
        \STATE $\{\bar\sigma_{k,h,m}\}_{m \in \seq{M}} \leftarrow \algovar(\{z_{k,h,m},\hatf_{k,m}$, $D_{k,h-1,m},D_{k,m}\}_{m \in \seq{M}},\hatbeta_k, \sigmamin, \gamma$) \label{line:home}
        \STATE For $m \in \seq{M}$, $D_{k,h,m} \leftarrow D_{k,h-1,m}\cup\{z_{k,h,m},\barsigma_{k,h,m}\}$
    \ENDFOR
    \STATE For $m \in \seq{M}$, $D_{k+1,m},D_{k+1,0,m} \leftarrow D_{k,H,m}$
    \STATE For $m \in \seq{M}$, $\hatf_{k+1,m} \leftarrow \argmin_{f\in\cF}$ $\sum_{i=1}^k\sum_{h=1}^H\frac{1}{\barsigma_{i,h}^2}(f(z_{i,h,m})- y_{i,h,m})^2$ \label{line:regression}
\ENDFOR
\end{algorithmic}
\end{algorithm}

\begin{algorithm}[t]
\caption{$\algovar$}
\label{algo:variance}
\begin{algorithmic}[1]
\REQUIRE $\{z_{k,h,m},\hatf_{k,m},D_{k,h-1,m},D_{k,m}\}_{m \in \seq{M}}$, $\hatbeta_k$, $\sigmamin, \gamma$
\ENSURE $\{\barsigma_{k,h,m}\}_{m \in \seq{M}}$
\FOR{$m = 0,\dots, M-1$}
    \STATE $[\bar{\VV}_{k,m}V_{k,h+1}^{2^m}](s_h^k, a_h^k) \leftarrow \hatf_{k,m+1}(z_{k,h,m+1}) - \hatf_{k,m}^2(z_{k,h,m})$
    \STATE $E_{k,h,m} \leftarrow \min\{1,2\hat\beta_k\cD_\cF(z_{k,h,m+1};D_{k,m+1})\} + \min\{1, \hat\beta_k\cD_\cF(z_{k,h,m};D_{k,m})\}$ \label{line:Ekhm}
    \STATE $\barsigma_{k,h,m}^2 \leftarrow \max\{ [\bar{\VV}_{k,m}V_{k, h+1}^{2^m}](s_h^k, a_h^k) + E_{k,h,m}, \sigmamin^2, \gamma^2\cD_\cF(z_{k,h,m};D_{k,h-1,m})\}$ \label{line:barsigma}
\ENDFOR
\STATE  $\barsigma_{k,h,M}^2 \leftarrow \max\{ 1, \sigmamin^2, \gamma^2\cD_\cF(z_{k,h,M};D_{k,h-1,M})\}$
\end{algorithmic}
\end{algorithm}

\subsection{Algorithm Description}

$\algomdp$ capitalizes on the VTR methodology, which draws inspiration from UCRL-VTR presented by \citet{ayoub2020model} on estimating transition dynamics.
Furthermore, we employ \emph{weighted} least squares regression to mitigate sub-optimality resulting from heterogeneous noise levels, specifically, differences in variances at each stage. Additionally, we utilize a high-order moment estimator for accurately estimating variances of the next-state value functions motivated by \citet{zhang2021improved,zhou2022computationally}.
We highlight the primary enhancements of $\algomdp$ in comparison to UCRL-VTR as follows:

\paragraph{Weighted VTR}
$\algomdp$ utilizes a weighted value-targeted regression framework, where the weights are chosen to take advantage of a novel concentration inequality, as articulated in Theorem~\ref{thm:wls}.
Under the guidance of this inequality, at the end of the $k$-th episode, we estimate the ground truth model $f_*=f_\PP$ as the solution of weighted least squares
{\ifarxiv\else\small\fi
\[
    \hatf_{k+1,0} = \argmin_{f\in\cF}\sum_{i=1}^k\sum_{h=1}^H\frac{1}{\barsigma_{i,h,0}^2}(f(z_{i,h,0})-V_{i,h+1}(s^i_{h+1}))^2.
\]}
Here, $z_{i,h,0}$ is defined in Line~\ref{line:zkhm} of Algorithm~\ref{algo:mdp}. And $\barsigma_{i,h,0}^2$ is an upper bound for both the conditional variance of $V_{i,h+1}(s^i_{h+1})$ and the uncertainty of $z_{i,h,0}$, defined as follows:
\ifarxiv
\begin{align*}
    \barsigma_{k,h,0}^2 = \max\{ [\bar{\VV}_{k,0}V_{k, h+1}](s_h^k, a_h^k) + E_{k,h,0}, \sigmamin^2, \gamma^2\cD_\cF(z_{k,h,0};D_{k,h-1,0})\},
\end{align*}
\else
\begin{align*}
    \barsigma_{k,h,0}^2 = \max\{& [\bar{\VV}_{k,0}V_{k, h+1}](s_h^k, a_h^k) + E_{k,h,0}, \sigmamin^2, \\
    & \gamma^2\cD_\cF(z_{k,h,0};D_{k,h-1,0})\},
\end{align*}
\fi
where $[\bar{\VV}_{k,0}V_{k, h+1}](s_h^k, a_h^k)$ is the estimate of $[\VV V_{k, h+1}](s_h^k, a_h^k)$, $E_{k,h,0}$ is the error bound specified in Line~\ref{line:Ekhm} of Algorithm~\ref{algo:variance} that satisfies
\[
    [\bar{\VV}_{k,0}V_{k, h+1}](s_h^k, a_h^k)+E_{k,h,0}\ge[\VV V_{k, h+1}](s_h^k, a_h^k)
\]
with high probability, $\sigmamin$ is a small positive constant to avoid numerical instability, $\gamma$ is a positive constant to be chosen, and $\cD_\cF(z_{k,h,0};D_{k,h-1,0})$ in Definition~\ref{def:general-eluder-RL} is the uncertainty of $z_{k,h,0}$. Then we discuss how to estimate the conditional variance of $V_{k, h+1}(s_{h+1}^k)$. Since
{\ifarxiv\else\small\fi
\[
    [\VV V_{k, h+1}](s_h^k, a_h^k) = [\PP V_{k, h+1}^2](s_h^k, a_h^k) - [\PP V_{k, h+1}]^2(s_h^k, a_h^k),
\]}
we estimate it by $[\bar{\VV}_{k,0}V_{k, h+1}](s_h^k, a_h^k)$ defined as
\[
    [\bar{\VV}_{k,0}V_{k, h+1}](s_h^k, a_h^k) = \hatf_{k,1}(z_{k,h,1}) - \hatf_{k,0}^2(z_{k,h,0}),
\]
where $z_{i,h,1}$ is defined in Line~\ref{line:zkhm} of Algorithm~\ref{algo:mdp} and $\hatf_{k,1}$ will be specified later.

\paragraph{Efficient Planning via Point-Wise Bonus Terms}
Once obtaining the estimate $\hatf_{k,0}$, we construct action value functions $\{Q_{k,h}\}_{h\in[H]}$ following the Upper Confidence Bound (UCB) scheme \citep{azar2017minimax}. More specifically, starting from step $H$ and proceeding to step $1$, $\algomdp$ estimates the upper bound of the optimal value function at each step in a point-wise manner, then takes actions optimistically:
\ifarxiv
\begin{align*}
    Q_{k,h}(s,a) = r_h(s,a)+\hatf_{k,0}(s,a,V_{k,h+1}) + \hatbeta_k\cD_\cF(s,a,V_{k,h+1};D_{k,0}),
\end{align*}
\else
\begin{align*}
    Q_{k,h}(s,a) &= r_h(s,a)+\hatf_{k,0}(s,a,V_{k,h+1}) \\
    &\qquad + \hatbeta_k\cD_\cF(s,a,V_{k,h+1};D_{k,0}),
\end{align*}
\fi
Here $\hatbeta_k=\tilde{O}(\sqrt{\log\cN_\cF})$ according to Theorem~\ref{thm:wls}. However, it is worth noting that in the realm of non-linear functions, uncertainty $\cD_\cF$ generally lacks an analytical expression, impeding the computational efficiency of our algorithm. To address such a challenge, we employ a technique to efficiently compute uncertainty $\cD_\cF$ via a regression oracle, as explained later. In contrast, \citet{ayoub2020model} construct value functions through optimistic planning, which is generally computationally intractable.

\paragraph{High-Order Moment Estimator}
We utilize the high-order moment estimator in Algorithm~\ref{algo:variance} to achieve enhanced precision in estimating the ground truth model $f_*$. As previously discussed, the estimate $\hatf_{k,0}$ necessitates weights $\barsigma_{k,h,0}$, which relies on $\hatf_{k,1}$ for estimating variance of $V_{k, h+1}$. And $\hatf_{k,1}$ can be derived by weighted least squares with predictors $z_{i,h,1}$, weights $\barsigma_{i,h,1}$ and targets $V_{i,h+1}^2(s^i_{h+1})$. Here $\barsigma_{k,h,1}$ are in a similar manner as $\barsigma_{k,h,0}$, depending on the variance of $V_{k, h+1}^2$. Recursively, we estimate variance of $V_{k,h+1}^{2^m}$, which is the conditional $2^{m+1}$-th central moment of $V_{k,h+1}$ until $m=M$, where $M$ is chosen to meet the desired level of precision. Last, variance of $V_{k,h+1}^{2^M}$ is estimated with its upper bound $1$.
It is worth noting that this higher-order moment estimator has been previously utilized in research dedicated to horizon-free learning. However, these works focused on either tabular \citep{zhang2021reinforcement,zhang2022horizon} or linear mixture MDPs \citep{zhang2021improved,zhou2022computationally}, while we study general function approximation.

\subsection{Computational Complexity}
\label{sec:computational-complexity}

Due to the non-linear nature of the function class $\cF$, closed-form solutions of weighted least squares $\hatf_{k,m}$ in Line~\ref{line:regression} and uncertainty $\cD_\cF$ in Definition~\ref{def:general-eluder-RL} are not readily attainable. To assess the computational efficiency of our proposed algorithm in such a context, we turn to the concept of oracle complexity \citep{wang2020reinforcement,kong2021online}, which measures the number of calls to some optimization oracles.

\paragraph{Regression Oracle}
We introduce the regression oracle in Assumption~\ref{ass:regression-oracle} for solving the weighted non-linear least squares regression. Inspired by \citet{kong2021online}, we leverage this oracle to compute uncertainty $\cD_\cF$ through a binary search procedure. As detailed in Appendix~\ref{sec:uncertainty-compute}, we demonstrate that this quantity can be estimated efficiently with a mere $\tilde{O}(1)$ number of calls to the regression oracle.
\begin{assumption}[Regression oracle]
\label{ass:regression-oracle}
We assume access to a weighted least squares \emph{regression oracle}, which takes a function class $\cG$ consisting of function $g:\cX\to\RR$ and a $t$-sized of weighted examples $\{(X_s,v_s,Y_s)\}_{s\in[t]}\subset\cX\times\RR^+\times\RR$ as input, and outputs the solution of weighted least squares $\hat{g}$ within $\cR$ time, where $\hat{g}$ is defined as
\[
    \hat{g} = \argmin_{g\in\cG} \sum_{s=1}^t v_s(g(X_s)-Y_s)^2.
\]
\end{assumption}
The availability of a regression oracle is a reasonably mild assumption, which appeared in many works concerning general function approximation \citep{krishnamurthy2017active,foster2018practical,kong2021online,agarwal2023vo}. It is noteworthy that this regression oracle admits an analytical solution under linear mixture models \citep{ayoub2020model}. In more general scenarios, when the function class $\cF$ is characterized as a collection of differentiable functions, such as neural networks, the implementation of the regression oracle becomes feasible and computationally efficient through the utilization of gradient-based optimization algorithms \citep{bubeck2015convex}.

\paragraph{Computational Complexity of $\algomdp$}
Recall that $\cO$ denotes the evaluation time of any function $f\in\cF$ and $\cR$ represents the computational cost of the regression oracle.
We consider the computation cost of the $k$-th episode and the $h$-th step. First, it takes $\tilde{O}(\cO+\cR)$ time to compute the action value function $Q_{k,h}$ in Line~\ref{line:q-function} for a given state-action pair $(s,a)$, since it requires the evaluation of $\hatf_{k,0}$ and the computation of $\cD_\cF$. Then, to take actions based on $\pi^k_h$, $\algomdp$ needs to compute the action value functions $Q_{k,h}$ for $|\cA|$ actions, with each to be computed within $\tilde{O}(\cO+\cR)$ time. Next, $\{\barsigma_{k,h,m}\}_{m\in\seq{M}}$ in Line~\ref{line:home} can be computed within $\tilde{O}(M(\cO+\cR))$ time since they require the evaluation of $\hatf_{k,m}$, $\hatf_{k,m+1}$ and the computation of $\cD_\cF$ for each $\barsigma_{k,h,m}$. Finally, it takes $M\cR$ time to calculate $\{\hatf_{k+1,m}\}_{m\in\seq{M}}$ in Line~\ref{line:regression}. Therefore, the total time cost of $\algomdp$ is $\tilde{O}(KHM\cO+KHM\cR+|\cA|KH\cO+|\cA|KH\cR)$.

\subsection{Regret Bound}
\label{sec:mdp-regret}

\begin{theorem}[Regret]
\label{thm:mdp-regret}
For the episodic MDPs with general function approximation defined in Section~\ref{sec:preliminary}, we set parameters in Algorithm~\ref{algo:mdp} as follows: $M=\lceil \log_2(3KH) \rceil,\lambda=\log\cN_\cF,\sigmamin^2=\sqrt{\dim_\cF \log\cN_\cF}/(KH),\gamma^2=\sqrt{\log\cN_\cF},\epsilon=\log\cN_\cF\cdot\sigmamin^2/(KH)$ and $\{\hatbeta_k\}_{k\ge 1}$ as
\[
    \hatbeta_k = 3\sqrt{\iota_k} + 2\frac{\iota_k}{\gamma^2} + \sqrt{\lambda} + \sqrt{6kH\epsilon/\sigmamin^2}.
\]
Then for any $\delta>0$, with probability at least $1-4(M+1)\delta$, the regret of $\algomdp$ is bounded by 
\begin{equation}
\label{eq:regret}
    \tilde{O}\rbr{\sqrt{\dim_\cF \log\cN_\cF}\cdot\sqrt{\cQ^*} + \dim_\cF \log\cN_\cF},
\end{equation}
where $\iota_k=\tilde{O}(\log\cN_\cF)$, $\dim_\cF=\dim_\cF(\sigmamin,KH)$, $\cN_\cF=\cN_\cF(\epsilon)$, and $\cQ^*=\min\{\cQ_0^*,V^*_1 K\}$ with
\ifarxiv
\begin{align*}
    Q_m = \sum_{k=1}^K\sum_{h=1}^H [\VV (V^*_{h+1})^{2^m}](s^k_h,a^k_h),\quad \cQ_0^* = \max_{m\in\seq{M}}Q_m,\quad V^*_1 = \frac{1}{K}\sum_{k=1}^K V^*_1(s^k_1).
\end{align*}
\else
\begin{align*}
    Q_m &= \sum_{k=1}^K\sum_{h=1}^H [\VV (V^*_{h+1})^{2^m}](s^k_h,a^k_h), \\
    \cQ_0^* &= \max_{m\in\seq{M}}Q_m,\quad V^*_1 = \frac{1}{K}\sum_{k=1}^K V^*_1(s^k_1).
\end{align*}
\fi
\end{theorem}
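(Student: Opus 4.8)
The plan is to run an optimism-based analysis while keeping three ingredients under tight control: the uniform Bernstein-style concentration for weighted non-linear regression of Theorem~\ref{thm:wls}, a Freedman-type (rather than Azuma-type) bound on the on-policy martingale, and a multi-level self-bounding recursion that converts the empirical variances used as regression weights back into the instance-dependent quantity $\cQ^*$ without ever paying a $\mathrm{poly}(H)$ factor.

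\emph{Good event, optimism, and regret decomposition.} First I would apply Theorem~\ref{thm:wls} to each of the $M+1$ weighted regressions in Line~\ref{line:regression} and union-bound over the levels $m\in\seq{M}$; this is the source of the $1-4(M+1)\delta$ failure probability. On the good event one has $|\hatf_{k,m}(z)-f_*(z)|\le\hatbeta_k\cD_\cF(z;D_{k,m})$ for every $k$, every $m$, and every relevant $z$, with $\hatbeta_k=\tilde O(\sqrt{\log\cN_\cF})$. Two consequences follow: by a short induction, $E_{k,h,m}$ in Line~\ref{line:Ekhm} dominates the gap between $\hatf_{k,m+1}(z_{k,h,m+1})-\hatf_{k,m}^2(z_{k,h,m})$ and $[\PP V_{k,h+1}^{2^{m+1}}]-[\PP V_{k,h+1}^{2^m}]^2$, so $\barsigma_{k,h,m}^2\ge[\VV V_{k,h+1}^{2^m}](s_h^k,a_h^k)$; and by backward induction on $h$, the point-wise bonus $\hatbeta_k\cD_\cF(\cdot,\cdot,V_{k,h+1};D_{k,0})$ dominates $|\hatf_{k,0}(\cdot,\cdot,V_{k,h+1})-[\PP V_{k,h+1}](\cdot,\cdot)|$, which gives optimism $V_{k,h}(s)\ge V^*_h(s)$. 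Unrolling the Bellman recursion along the executed trajectory then yields
\[
  \mathrm{Regret}(K)\le 2\sum_{k=1}^K\sum_{h=1}^H\hatbeta_k\,\cD_\cF(z_{k,h,0};D_{k,0})+\sum_{k=1}^K\sum_{h=1}^H\big([\PP W_{k,h+1}](s_h^k,a_h^k)-W_{k,h+1}(s_{h+1}^k)\big),
\]
where $W_{k,h+1}:=V_{k,h+1}-V_{h+1}^{\pi^k}$. A Freedman bound controls the martingale by $\tilde O\big(\sqrt{\sum_{k,h}[\VV V_{k,h+1}]+\sum_{k,h}[\VV V_{h+1}^{\pi^k}]}+1\big)$ (Azuma here would cost $\sqrt{KH}$ and kill horizon-freeness), and Cauchy--Schwarz against the weights gives
\[
  \sum_{k,h}\cD_\cF(z_{k,h,0};D_{k,0})\le\sqrt{\textstyle\sum_{k,h}\barsigma_{k,h,0}^2}\cdot\sqrt{\textstyle\sum_{k,h}\cD_\cF^2(z_{k,h,0};D_{k,0})/\barsigma_{k,h,0}^2},
\]
where the second factor is $O(\sqrt{\dim_\cF})$ by Definition~\ref{def:general-eluder-RL}: the floor $\gamma^2\cD_\cF$ inside $\barsigma_{k,h,0}^2$ is precisely what allows the clipping $\min\{1,\cD_\cF^2/\barsigma^2\}$, and the $\iota_k/\gamma^2$ term in $\hatbeta_k$ together with that floor absorbs the finitely many large-uncertainty episodes. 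Thus the whole regret reduces to bounding $S_m:=\sum_{k,h}\barsigma_{k,h,m}^2$ for each $m\in\seq{M}$.

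\emph{Variance recursion and solving the inequality.} From the definition of $\barsigma_{k,h,m}^2$ and the good-event bound, $S_m\le\sum_{k,h}[\VV V_{k,h+1}^{2^m}]+\sum_{k,h}E_{k,h,m}+KH\sigmamin^2+\gamma^2\sum_{k,h}\cD_\cF(z_{k,h,m};D_{k,h-1,m})$; the last sum is $O(\gamma^2\dim_\cF)$, and $\sum_{k,h}E_{k,h,m}=\tilde O(\sqrt{\dim_\cF\log\cN_\cF}\cdot\sqrt{S_{m+1}}+\dim_\cF\log\cN_\cF)$ by the same Cauchy--Schwarz/Eluder argument applied at level $m+1$. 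Converting $\sum_{k,h}[\VV V_{k,h+1}^{2^m}]$ into $Q_m=\sum_{k,h}[\VV(V^*_{h+1})^{2^m}]$ costs $\tilde O\big(\mathrm{Regret}(K)+\sqrt{\dim_\cF\log\cN_\cF}\sqrt{S_0}+\dim_\cF\log\cN_\cF\big)$, using $V_{k,h+1}\ge V^*_{h+1}$, the $2^m$-Lipschitzness of $x\mapsto x^{2^m}$ on $[0,1]$, and one more telescoping step. Iterating down the moment ladder for $M=\lceil\log_2(3KH)\rceil$ levels, the top level (where $\barsigma_{k,h,M}^2=\max\{1,\sigmamin^2,\gamma^2\cD_\cF\}$) is damped by the telescoped coefficient to $\tilde O(\dim_\cF\log\cN_\cF)$, so $\max_m S_m=\tilde O\big(\cQ_0^*+\sqrt{\dim_\cF\log\cN_\cF}\cdot\mathrm{Regret}(K)+\dim_\cF\log\cN_\cF\big)$. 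Substituting into the regret decomposition produces the self-bounding inequality
\[
  \mathrm{Regret}(K)=\tilde O\Big(\sqrt{\dim_\cF\log\cN_\cF}\cdot\sqrt{\cQ_0^*+\sqrt{\dim_\cF\log\cN_\cF}\,\mathrm{Regret}(K)+\dim_\cF\log\cN_\cF}+\dim_\cF\log\cN_\cF\Big),
\]
which, solved as a quadratic in $\sqrt{\mathrm{Regret}(K)}$, yields \eqref{eq:regret} with $\cQ_0^*$ in place of $\cQ^*$. The $\min$ with $V^*_1K$ follows by rerunning the recursion with the cruder estimate $[\VV V_{k,h+1}]\le[\PP V_{k,h+1}^2]\le[\PP V_{k,h+1}]$, which by $r_h\ge0$, optimism, and telescoping along trajectories sums to $\tilde O(\sum_k V_{k,1}(s_1^k))=\tilde O(V^*_1K+\mathrm{Regret}(K))$; taking the better of the two alternatives gives $\cQ^*=\min\{\cQ_0^*,V^*_1K\}$.

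\emph{Main obstacle.} The delicate part is the last step: disentangling the $(M{+}1)$-fold self-bounding recursion in which the variance budget $S_m$ at level $m$ is controlled through $S_{m+1}$ via the regression concentration and the generalized Eluder dimension, while at the same time the conversion of empirical variances of $V_{k,h+1}$ into variances of $V^*_{h+1}$ re-injects $\mathrm{Regret}(K)$ itself — and carrying this out with enough bookkeeping that the $H$-fold inner sums and the $M=O(\log KH)$ truncation levels combine into purely logarithmic, rather than polynomial, dependence on $H$.
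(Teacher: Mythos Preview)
Your proposal tracks the paper's argument closely: the same concentration (Theorem~\ref{thm:wls}) and optimism, the same Cauchy--Schwarz/Eluder reduction of the bonus sum, Freedman rather than Azuma on the on-policy martingale, and a multi-level self-bounding recursion through the variance weights to reach $\cQ^*$. The paper additionally pivots the martingale through $V^*$, splitting your $W_{k,h+1}$ into $\check V_{k,h+1}=V_{k,h+1}-V^*_{h+1}$ and $\tilde V_{k,h+1}=V^*_{h+1}-V^{\pi^k}_{h+1}$ (their $\check A_m,\tilde A_m$), and solves the recursion via an explicit sequence lemma (their Lemma~\ref{lem:exp}) rather than ``solving a quadratic'', but these are organizational rather than conceptual differences.

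There is one step you gloss over that the paper handles with real care, and which is the technical crux of making the Eluder-dimension bound apply. Your Cauchy--Schwarz step asserts that $\sum_{k,h}\cD_\cF^2(z_{k,h,0};D_{k,0})/\barsigma_{k,h,0}^2=O(\dim_\cF)$, but the bonus in Line~\ref{line:q-function} uses the \emph{episode-level} dataset $D_{k,0}$ (frozen before episode $k$), whereas Definition~\ref{def:general-eluder-RL} requires the uncertainty to be measured against \emph{all} prior points, i.e.\ $D_{k,h-1,0}$. Since more data only shrinks $\cD_\cF$, and since the floor inside $\barsigma_{k,h,0}^2$ is $\gamma^2\cD_\cF(z_{k,h,0};D_{k,h-1,0})$ (not $D_{k,0}$), neither the floor nor the $\iota_k/\gamma^2$ term in $\hatbeta_k$ by itself bridges this gap. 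The paper supplies a separate bridging inequality (Lemma~\ref{lem:cD-relation}): $\cD_\cF(z;D_{k,0})\le\exp\{\tfrac12\sum_{j<h}\barsigma_{k,j,0}^{-2}\cD_\cF^2(z_{k,j,0};D_{k,j-1,0})\}\,\cD_\cF(z;D_{k,h-1,0})$, and introduces indicators $I^k_h$ that vanish once the in-episode normalized-uncertainty sum exceeds $1$; on $\{I^k_h=1\}$ the exponential is at most $2$, and the number of ``bad'' episodes is at most $G\le(M{+}1)\dim_\cF$ (Lemma~\ref{lem:G}), charged to the lower-order term. Your one-line remark that the floor ``absorbs the finitely many large-uncertainty episodes'' gestures at this but is not the mechanism; without the exponential bridging lemma and the indicator bookkeeping, the Eluder step in your decomposition does not go through.
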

\begin{proof}
The proof features a novel concentration bound and fine-grained analyses on the higher order expansions of MDPs with general function approximation. See Section~\ref{sec:mdp-proof-sketch} for a proof sketch and Appendix~\ref{proof:mdp} for a detailed proof.
\end{proof}
Our instance-dependent result given by Theorem~\ref{thm:mdp-regret} successfully eliminates the dependence on $H$, with the exception of logarithmic factors. This accomplishment represents a groundbreaking advancement in RL with general function approximation, as it achieves both \emph{horizon-free} and \emph{instance-dependent} for the first time. 

\paragraph{The Quantity $\cQ^*$}
Since $\cQ^*=\min\{\cQ_0^*, V^*_1 K\} \le V_1^* K$, our regret bound in Theorem~\ref{thm:mdp-regret} immediately implies a first-order regret 
$$
\tilde{O}\rbr{\sqrt{\dim_\cF \log\cN_\cF}\cdot\sqrt{ V_1^* K} + \dim_\cF \log\cN_\cF}.
$$
The quantity $\cQ_0^*$ defined in Theorem~\ref{thm:mdp-regret} is the maximum of $Q_m$ over $m\in\seq{M}$, with $Q_m$ being the sum of the variance of $2^m$-th order of the optimal value function along the $k$-th trajectory over $k\in[K]$. Thereby, $\cQ_0^*$ quantifies the stochasticity of the MDP under the optimal policy and it vanishes when the transition kernel is deterministic. In specific, our results also indicate that the regret of learning a deterministic MDP is bounded by $\tilde{O}(\dim_\cF \log\cN_\cF)$. By Proposition~\ref{prop:linear-covering}, this further implies a $\tilde{O}(d^2)$ regret bound for linear mixture MDPs, matching the result in \citet{zhou2022computationally}.

To illustrate our theory more, we present implied regret bounds for linear mixture MDPs.

\begin{corollary}[Regret for Linear Mixture MDPs]
\label{cor:regret-linear}
For $d$-dimensional linear mixture MDPs defined in Assumption~\ref{ass:linear-mixture-mdp}, we set parameters in Algorithm~\ref{algo:mdp} according to Theorem~\ref{thm:mdp-regret}. Then for any $\delta>0$, with probability at least $1-4(M+1)\delta$, the regret of $\algomdp$ is bounded by
\[
    \tilde{O}(d\sqrt{\cQ^*}+d^2) \le \tilde{O}(d\sqrt{V_1^* K}+d^2) \le \tilde{O}(d\sqrt{K}+d^2).
\]
\end{corollary}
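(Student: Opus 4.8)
\textbf{Proof proposal for Corollary~\ref{cor:regret-linear}.}

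The plan is to derive the corollary as a direct specialization of Theorem~\ref{thm:mdp-regret} to the linear mixture setting, substituting the simplified complexity measures from Proposition~\ref{prop:linear-covering}. First I would invoke Proposition~\ref{prop:linear-covering} with the parameter choices prescribed in Theorem~\ref{thm:mdp-regret}. Since $\sigmamin^2 = \sqrt{\dim_\cF\log\cN_\cF}/(KH)$ and $\lambda = \log\cN_\cF$, the argument of the logarithm in $\dim_\cF(\sigmamin, KH) = O(d\log(1 + KHB/(d\lambda\sigmamin^2)))$ is polynomial in $K$, $H$, $d$, $B$, and $\log\cN_\cF$; hence $\dim_\cF = \tilde{O}(d)$, where the tilde hides factors polylogarithmic in $K, H, d, B$. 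Similarly, with $\epsilon = \log\cN_\cF\cdot\sigmamin^2/(KH)$ being inverse-polynomial in the relevant parameters, $\log\cN_\cF(\epsilon) = \Theta(d\log(B/\epsilon)) = \tilde{O}(d)$. A minor point to handle carefully is that $\dim_\cF$, $\log\cN_\cF$, and $\sigmamin^2$ are defined in terms of each other (e.g.\ $\sigmamin^2$ depends on $\dim_\cF\log\cN_\cF$), so I would note that solving this coupled system still yields $\dim_\cF = \tilde O(d)$ and $\log\cN_\cF = \tilde O(d)$, since the dependence is only logarithmic and the fixed-point argument converges trivially.

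Next I would substitute these into the regret bound \eqref{eq:regret}. We have $\sqrt{\dim_\cF\log\cN_\cF} = \tilde{O}(\sqrt{d\cdot d}) = \tilde{O}(d)$ and $\dim_\cF\log\cN_\cF = \tilde{O}(d^2)$, so the regret from Theorem~\ref{thm:mdp-regret} becomes $\tilde{O}(d\sqrt{\cQ^*} + d^2)$, establishing the first claimed inequality. The second inequality, $\tilde{O}(d\sqrt{\cQ^*} + d^2) \le \tilde{O}(d\sqrt{V_1^* K} + d^2)$, follows immediately from the definition $\cQ^* = \min\{\cQ_0^*, V_1^* K\} \le V_1^* K$ given in Theorem~\ref{thm:mdp-regret}, since the map $x\mapsto\sqrt{x}$ is monotone. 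Finally, the third inequality $\tilde{O}(d\sqrt{V_1^* K} + d^2) \le \tilde{O}(d\sqrt{K} + d^2)$ follows from the bounded-reward assumption: by construction $V_1^*(s) \le \sum_{h=1}^H r_h(s_h,a_h) \le 1$ along any trajectory, so $V_1^* = \frac{1}{K}\sum_k V_1^*(s_1^k) \le 1$, giving $\sqrt{V_1^* K} \le \sqrt{K}$.

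The argument is essentially bookkeeping, so there is no deep obstacle; the only thing requiring a bit of care is the self-referential parameter setting noted above, and making sure that the polylogarithmic factors absorbed into the $\tilde{O}$ notation genuinely are polylogarithmic (rather than, say, hiding a stray power of $d$) — this amounts to checking that $\log(B/\epsilon)$ and $\log(1 + KHB/(d\lambda\sigmamin^2))$ expand to $\mathrm{polylog}(K,H,d,B)$ after the substitutions, which they do. I would also remark that the confidence level $1 - 4(M+1)\delta$ and the value $M = \lceil\log_2(3KH)\rceil$ carry over verbatim from Theorem~\ref{thm:mdp-regret} with no modification, and point the reader to Proposition~\ref{prop:linear-covering} (proved in Appendix~\ref{proof:linear-eluder}) for the complexity estimates. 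If a fully detailed derivation is wanted, it can be deferred to the appendix, but the three-step chain of inequalities above is complete as stated.
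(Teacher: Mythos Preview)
Your proposal is correct and follows exactly the same approach as the paper's own proof, which simply invokes Proposition~\ref{prop:linear-eluder} to obtain $\dim_\cF,\log\cN_\cF=\tilde O(d)$ and then appeals to Theorem~\ref{thm:mdp-regret}. You have supplied considerably more detail (the self-referential parameter resolution and the explicit chain $\cQ^*\le V_1^*K\le K$), but the underlying argument is identical.
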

\begin{proof}
We have $\dim_\cF, \log\cN_\cF = \tilde{O}(d)$ by Proposition~\ref{prop:linear-eluder}, then the result follows from Theorem~\ref{thm:mdp-regret}.
\end{proof}
Corollary~\ref{cor:regret-linear} demonstrates that we also achieve a first-order regret guarantee for linear mixture MDPs, which covers the state-of-the-art worst-case regret bound $\tilde{O}(d\sqrt{K}+d^2)$ by \citet{zhou2022computationally}, therefore matching the lower bound $\Omega(d\sqrt{K})$ \citep{zhou2021nearly} when $K\ge d^2$ up to logarithmic factors.
The outcome presented in \citet{zhou2022computationally} critically depends on the feature $\phi_V(s,a)\in\RR^d$ defined in Assumption~\ref{ass:linear-mixture-mdp}. This feature endows the value function with a simple linear structure, rendering it amenable to analyze.
Consequently, their proposed algorithms and analytical methodologies encounter limitations when applied to broader function classes characterized by intricate non-linear structures.
We refer readers to Appendix~\ref{sec:comparison-zhou2022} for more explanations.

\subsection{Proof Sketch}
\label{sec:mdp-proof-sketch}

In this section, we provide a proof sketch of the regret bound in Theorem~\ref{thm:mdp-regret}, which primarily relies on the following key lemmas.

\paragraph{Concentration of the Estimated Model}
Our first step is to establish the optimism of the estimated value function as in Line~\ref{line:q-function} of Algorithm~\ref{algo:mdp}, which hinges on a novel concentration inequality in Theorem~\ref{thm:wls}. Notably, this theorem resembles a Bernstein-style bound that substantially expands the scope of weighted linear regression \citep{zhou2021nearly,zhou2022computationally} to non-linear settings, thus accommodating a broader class of functions. We refer readers to Appendix~\ref{sec:comparison-zhou2022} for more technical differences with \citet{zhou2022computationally}.
\begin{theorem}
\label{thm:wls}
Let $\{\cG_t\}_{t\ge1}$ be a filtration, and $\{X_t\}_{t\ge1}, \{Y_t\}_{t\ge1}$ be stochastic processes such that $X_t\in\cX$ is $\cG_{t-1}$-measurable and $Y_t\in[0,L]$ is $\cG_t$-measurable. Let $f_*\in\cF$ with function class $\cF$ consisting of functions $f:\cX\to[0,L]$. Suppose $\EE[Y_t|\cG_{t-1}] = f_*(X_t)$. Let
\begin{equation}
\label{eq:hatf}
    \hatf_{t+1} = \argmin_{f\in\cF} \sum_{s=1}^t w_s^2(f(X_s)-Y_s)^2,
\end{equation}
where the $\cG_{s-1}$-measurable random variable $w_s$ satisfies for all $s\in[t]$, $|w_s|\le W$, $w_s^2\Var[Y_s|\cG_{s-1}]\le\sigma^2$.
Then for any $\delta,\epsilon>0$, with probability at least $1-\delta$, we have for all $t\ge1$,
\ifarxiv
\begin{gather*}
    \sum_{s=1}^t w_s^2(\hatf_{t+1}(X_s)-f_*(X_s))^2 \le \beta_{t+1}^2 \text{ with} \\
    \beta_{t+1} = 3\sqrt{\iota_t}\sigma + 2 \iota_t L \max_{s\in[t]} w_s^2 \cD_\cF(X_s;X_{[s-1]},1/w_{[s-1]}) + \sqrt{\lambda} + \sqrt{6Lt\epsilon/\sigmamin^2},
\end{gather*}
\else
\begin{align*}
    &\sum_{s=1}^t w_s^2(\hatf_{t+1}(X_s)-f_*(X_s))^2 \le \beta_{t+1}^2 \text{ with} \\
    \beta_{t+1} &= 3\sqrt{\iota_t}\sigma + 2 \iota_t L \max_{s\in[t]} w_s^2 \cD_\cF(X_s;X_{[s-1]},1/w_{[s-1]}) \\
    &\qquad+ \sqrt{\lambda} + \sqrt{6Lt\epsilon/\sigmamin^2},
\end{align*}
\fi
where $\cD_\cF$ is in Definition~\ref{def:general-eluder-RL} and $\iota_t=\tilde{O}(\log\cN_\cF)$.
\end{theorem}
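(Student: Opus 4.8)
The plan is to establish this Bernstein-style concentration inequality for weighted non-linear least squares by combining a covering argument over $\cF$ with a self-normalized martingale concentration, and then converting the resulting ``in-sample'' error bound into the stated form involving the uncertainty $\cD_\cF$. First I would fix a minimal $\epsilon$-cover $\cC$ of $\cF$ in $\ell_\infty$-norm, of size $\cN_\cF(\epsilon)$, and pick $\check f\in\cC$ with $\|\check f - \hatf_{t+1}\|_\infty\le\epsilon$; the extra terms $\sqrt{6Lt\epsilon/\sigmamin^2}$ in $\beta_{t+1}$ are exactly the slack needed to absorb the discretization error of replacing $\hatf_{t+1}$ by $\check f$ in the weighted squared-error sums (using $w_s^2\le W^2$ and crude bounds). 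The core quantity to control is, for a \emph{fixed} $f\in\cC$, the random sum $\sum_{s=1}^t w_s^2\big(f(X_s)-f_*(X_s)\big)\big(Y_s-f_*(X_s)\big)$, which via the optimality of $\hatf_{t+1}$ in \eqref{eq:hatf} upper bounds (up to $\epsilon$-slack) the target $\sum_s w_s^2(\hatf_{t+1}(X_s)-f_*(X_s))^2$ after completing the square.

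Next I would apply a Freedman/Bernstein-type self-normalized inequality to this martingale difference sequence. The increments are $\xi_s := w_s^2(f(X_s)-f_*(X_s))(Y_s-f_*(X_s))$, which are $\cG_s$-measurable with $\EE[\xi_s\mid\cG_{s-1}]=0$ since $\EE[Y_s\mid\cG_{s-1}]=f_*(X_s)$. The conditional variance is $\EE[\xi_s^2\mid\cG_{s-1}] = w_s^4(f(X_s)-f_*(X_s))^2\Var[Y_s\mid\cG_{s-1}] \le \sigma^2\, w_s^2(f(X_s)-f_*(X_s))^2$, using the assumption $w_s^2\Var[Y_s\mid\cG_{s-1}]\le\sigma^2$; and each increment is bounded in magnitude by $W^2 L\cdot|f(X_s)-f_*(X_s)|$ or more simply by $W^2L^2$. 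A Bernstein bound then yields, with probability $1-\delta/\cN_\cF(\epsilon)$ and a union bound over $\cC$ (with $\iota_t = \tilde O(\log\cN_\cF)$ collecting the $\log(\cN_\cF(\epsilon)/\delta)$ and $\log t$ factors), a bound of the form $\sum_s\xi_s \lesssim \sqrt{\iota_t}\,\sigma\sqrt{\sum_s w_s^2(f(X_s)-f_*(X_s))^2} + \iota_t W^2 L^2$ — note the square-root self-bounding in the variance term. Writing $S := \sum_s w_s^2(\hatf_{t+1}(X_s)-f_*(X_s))^2$, completing the square gives roughly $S \lesssim \sqrt{\iota_t}\,\sigma\sqrt{S} + \iota_t W^2 L^2 + \lambda + (\text{$\epsilon$-slack})$, and solving this quadratic in $\sqrt S$ produces $\sqrt S \lesssim \sqrt{\iota_t}\,\sigma + \sqrt{\iota_t}\,W L + \sqrt\lambda + \sqrt{Lt\epsilon/\sigmamin^2}$, matching the first, third and fourth terms of $\beta_{t+1}$ after inserting the $\lambda$ regularizer (which is introduced to make $\cD_\cF$ well-defined and appears additively).

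The remaining, and I expect most delicate, step is to replace the crude worst-case term $\sqrt{\iota_t}\,WL$ by the sharper, \emph{uncertainty-weighted} term $2\iota_t L\max_{s\in[t]} w_s^2\,\cD_\cF(X_s;X_{[s-1]},1/w_{[s-1]})$. The idea is that the naive Bernstein large-deviation contribution of any single step $s$ is governed not by its raw magnitude $W^2 L|f(X_s)-f_*(X_s)|$ but by how much that step's contribution can be ``explained'' by the earlier data: $w_s^2(f(X_s)-f_*(X_s))^2 \le D_\cF^2(X_s;X_{[s-1]},1/w_{[s-1]})\cdot\big(\sum_{r<s}w_r^2(f(X_r)-f_*(X_r))^2 + \lambda\big)$ directly from Definition~\ref{def:general-eluder-RL}. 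Plugging this into the range bound for $\xi_s$ — i.e., bounding $|\xi_s|$ by $L\,w_s^2\cD_\cF(X_s;\cdots)\cdot\sqrt{\sum_{r<s}w_r^2(\cdots)^2+\lambda}$ and using monotonicity of the partial sums — lets the peeling/completing-the-square argument carry the $\cD_\cF$ factor through, giving the $\iota_t L\max_s w_s^2\cD_\cF$ term. The subtlety is doing this uniformly over the cover while keeping the self-normalized structure intact, and making sure the $\lambda$ inside $\cD_\cF$ and the $\lambda$ added to $\beta_{t+1}^2$ are reconciled; I would handle the uniformity by the standard trick of a stopping-time / union-bound over a geometric grid of possible values of $\sum_s w_s^2(f(X_s)-f_*(X_s))^2$, or by directly invoking the ``generalized'' self-normalized concentration lemma that \citet{agarwal2023vo} develop for exactly this weighted setting. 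Finally, specializing $w_s\leftarrow 1/\barsigma_s$, $W\leftarrow 1/\sigmamin$ recovers the form used for $\hatbeta_k$ in Theorem~\ref{thm:mdp-regret}.
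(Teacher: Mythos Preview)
Your proposal is correct and follows essentially the same route as the paper: optimality of $\hatf_{t+1}$ gives $\sum_s w_s^2(\hatf_{t+1}-f_*)^2 \le 2\sum_s E_s(\hatf_{t+1})$ with $E_s(f)=w_s^2(f(X_s)-f_*(X_s))(Y_s-f_*(X_s))$, then an $\epsilon$-cover plus Freedman on $E_s(f)$ with the variance bound $\EE[E_s^2\mid\cG_{s-1}]\le\sigma^2 w_s^2(f-f_*)^2$ and the data-dependent range bound $|E_s(f)|\le L\,w_s^2\cD_\cF(X_s;X_{[s-1]},1/w_{[s-1]})\sqrt{\sum_{r<s}w_r^2(f-f_*)^2+\lambda}$, followed by a self-bounding square-completion. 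The only presentational difference is that the paper does not split the argument into a ``crude'' stage and a ``refinement'' stage: it applies the variance-\emph{and}-range-aware Freedman inequality of \citet{agarwal2023vo} (Lemma~\ref{lem:freedman-variant}) once, which already internalizes the geometric peeling you sketch, so the $\cD_\cF$ term appears directly rather than replacing a $W^2L^2$ term after the fact.
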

\begin{proof}
See Appendix~\ref{proof:wls} for a detailed proof.
\end{proof}
\begin{remark}
Theorem~\ref{thm:wls} provides a variance and uncertainty-aware bound of the deviation of $\hatf_{t+1}$.
In contrast, the bound proposed by \citet{ayoub2020model} for unweighted regression gives $\beta'_{t+1} = \tilde{O}(L\sqrt{\log\cN_\cF})$, which is Hoeffding-type, i.e., it scales with the range of noise. Therefore, UCRL-VTR deviates from achieving an instance-dependent regret bound.
\end{remark}
Recall the estimated model $\hatf_{k+1,m}$ in Line~\ref{line:regression} of Algorithm~\ref{algo:mdp} is the solution of weighted least squares regression. We apply Theorem~\ref{thm:wls} for all $m\in\seq{M}$ with $\{X_t,w_t,Y_t\}_{t\ge 1} = \{z_{i,h,m},1/\barsigma_{i,h,m}$, $y_{i,h,m}\}_{i,h\in[k]\times[H]}$, which gives the confidence radius
\[
    \hatbeta_K = \tilde{O}(\log\cN_\cF).
\]
Then the optimism property of the estimated value function in Line~\ref{line:q-function} can be proved with high probability.

\paragraph{Higher Order Expansion}
The regret bound can be related to the summation of bonuses (see Appendix~\ref{proof:regret}):
\[
    \mathrm{Regret}(K) \lesssim \sum_{k=1}^K\sum_{h=1}^H \min\{1,\hatbeta_k\cD_\cF(z_{k,h,0};D_{k,0})\} + C,
\]
where $\lesssim$ hides constants and $C$ represents lower order terms. Let $R_m,S_m$ denote the summation of bonuses and moments with respect to $m$-th level as follows:
\begin{align*}
    R_m &:= \sum_{k=1}^K\sum_{h=1}^H \min\{1,\hatbeta_k\cD_\cF(z_{k,h,m};D_{k,m})\}, \\
    S_m &:= \sum_{k=1}^K\sum_{h=1}^H [\VV V_{k, h+1}^{2^m}](s_h^k, a_h^k).
\end{align*}
Then we can further bound $R_m$ with Lemma~\ref{lem:higher-expansion}.
\begin{lemma}[Informal]
\label{lem:higher-expansion}
We have for all $m \in \seq{M-1}$,
\begin{equation*}
    R_m \lesssim \hat\beta_K\sqrt{\dim_\cF}\cdot\sqrt{S_m + R_{m+1}} + \hatbeta_K^2\dim_\cF + C,
\end{equation*}
\end{lemma}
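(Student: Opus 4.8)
\emph{Proof sketch.} The plan is to first bound $R_m$ in terms of the generalized Eluder dimension and the cumulative squared weights $\sum_{k,h}\barsigma_{k,h,m}^2$, and then to expand the latter---using the model concentration of Theorem~\ref{thm:wls} together with the high-order moment estimator---into the true-variance sum $S_m$, the next-level bonus sum $R_{m+1}$, and lower-order terms; a final self-bounding step closes the estimate.

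\textbf{From bonuses to weights.} Since $\hatbeta_k$ is nondecreasing in $k$, we have $R_m\le\sum_{k,h}\min\{1,\hatbeta_K\cD_\cF(z_{k,h,m};D_{k,m})\}$. Partition the rounds $(k,h)$ according to whether $\hatbeta_K\cD_\cF(z_{k,h,m};D_{k,m})\ge 1$: a round on which it is at least $1$ contributes $1\le\hatbeta_K^2\cD_\cF^2(z_{k,h,m};D_{k,m})$, while a round on which it is below $1$ contributes $\hatbeta_K\barsigma_{k,h,m}\cdot\barsigma_{k,h,m}^{-1}\cD_\cF(z_{k,h,m};D_{k,m})$, which we pass through Cauchy--Schwarz. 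Both estimates reduce to controlling $\sum_{k,h}\min\{1,\barsigma_{k,h,m}^{-2}\cD_\cF^2(z_{k,h,m};D_{k,m})\}$; after a technical reduction relating the uncertainty measured against the between-episode dataset $D_{k,m}$ to the sequential uncertainty of Definition~\ref{def:general-eluder-RL} (for which the uncertainty-aware floor $\gamma^2\cD_\cF$ built into $\barsigma_{k,h,m}^2$ keeps each normalized term below $1$), this sum is $O(\dim_\cF)$. We obtain
\[
    R_m\ \lesssim\ \hatbeta_K\sqrt{\dim_\cF}\cdot\sqrt{\textstyle\sum_{k,h}\barsigma_{k,h,m}^2}\ +\ \hatbeta_K^2\dim_\cF .
\]

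\textbf{Bounding $\sum_{k,h}\barsigma_{k,h,m}^2$.} As $\barsigma_{k,h,m}^2$ is the maximum of three nonnegative quantities, it is at most their sum, and we bound each contribution separately. (i) For the estimated-variance term $[\bar{\VV}_{k,m}V_{k,h+1}^{2^m}](s_h^k,a_h^k)+E_{k,h,m}$, applying Theorem~\ref{thm:wls} to the weighted regressions at levels $m$ and $m+1$ shows, on the good event, that $[\bar{\VV}_{k,m}V_{k,h+1}^{2^m}](s_h^k,a_h^k)\le[\VV V_{k,h+1}^{2^m}](s_h^k,a_h^k)+E_{k,h,m}$, so this contribution is at most $[\VV V_{k,h+1}^{2^m}](s_h^k,a_h^k)+2E_{k,h,m}$; summing over $(k,h)$ and using the definition of $E_{k,h,m}$ from Line~\ref{line:Ekhm} gives a total of at most $S_m+4R_{m+1}+2R_m$. (ii) The floor $\sigmamin^2$ contributes $KH\sigmamin^2=\sqrt{\dim_\cF\log\cN_\cF}$, a lower-order term $C$. (iii) On a round where the floor $\gamma^2\cD_\cF(z_{k,h,m};D_{k,h-1,m})$ is the active term, $\barsigma_{k,h,m}^2=\gamma^2\cD_\cF(z_{k,h,m};D_{k,h-1,m})\le\gamma^4\min\{1,\barsigma_{k,h,m}^{-2}\cD_\cF^2(z_{k,h,m};D_{k,h-1,m})\}$, so by Definition~\ref{def:general-eluder-RL} this contribution totals at most $\gamma^4\dim_\cF=\log\cN_\cF\cdot\dim_\cF\lesssim\hatbeta_K^2\dim_\cF$. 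Altogether $\sum_{k,h}\barsigma_{k,h,m}^2\lesssim S_m+R_{m+1}+R_m+\hatbeta_K^2\dim_\cF+C$.

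\textbf{Closing the estimate.} Substituting into the display above and distributing the square root, the term $\hatbeta_K\sqrt{\dim_\cF}\cdot\sqrt{R_m}$ is absorbed into $\tfrac12 R_m+O(\hatbeta_K^2\dim_\cF)$ by AM--GM, the term $\hatbeta_K\sqrt{\dim_\cF}\cdot\hatbeta_K\sqrt{\dim_\cF}$ merges with the additive $\hatbeta_K^2\dim_\cF$, and $\hatbeta_K\sqrt{\dim_\cF C}$ is lower order; rearranging yields $R_m\lesssim\hatbeta_K\sqrt{\dim_\cF}\cdot\sqrt{S_m+R_{m+1}}+\hatbeta_K^2\dim_\cF+C$, the claimed bound. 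The crux is step (i): it is precisely here that the new Bernstein-type concentration of Theorem~\ref{thm:wls} and the recursive high-order moment estimator are indispensable, since a Hoeffding-type deviation bound or a crude variance estimate would prevent $[\bar{\VV}_{k,m}V_{k,h+1}^{2^m}]$ from being comparable to the true conditional variance $[\VV V_{k,h+1}^{2^m}]$ and would reintroduce polynomial dependence on $H$; a secondary delicacy is the self-referential appearance of $R_m$ (through the error terms $E_{k,h,m}$) and of $R_{m+1}$, which is what forces the self-bounding argument and, downstream, the downward recursion over the moment levels $m$.
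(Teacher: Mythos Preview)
Your proof follows essentially the same route as the paper's formal proof (Lemma~\ref{lem:RS}): Cauchy--Schwarz plus the generalized Eluder dimension to pass from the bonus sum to the weights, the variance-error estimate $|[\bar\VV_{k,m}V^{2^m}_{k,h+1}]-[\VV V^{2^m}_{k,h+1}]|\le E_{k,h,m}$, and then a self-bounding step to remove the self-referential $R_m$ from inside the square root. The minor structural difference---you bound $\sum_{k,h}\barsigma_{k,h,m}^2$ as a whole, while the paper's Lemma~\ref{lem:sum-bonus} splits directly on which branch of the max defining $\barsigma$ is active---is cosmetic.

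There is, however, one genuine gap in your ``technical reduction.'' You need to pass from the between-episode uncertainty $\cD_\cF(z_{k,h,m};D_{k,m})$ (which appears in the bonuses) to the fully sequential uncertainty $\cD_\cF(z_{k,h,m};D_{k,h-1,m})$ (which is what Definition~\ref{def:general-eluder-RL} controls). Since $D_{k,m}\subseteq D_{k,h-1,m}$, the former is \emph{larger}, so this is not free. You attribute the reduction to the floor $\barsigma_{k,h,m}^2\ge\gamma^2\cD_\cF(\cdot;D_{k,h-1,m})$, but that floor only bounds each \emph{sequential} normalized term $\barsigma^{-2}\cD_\cF^2(\cdot;D_{k,h-1,m})\le\cD_\cF/\gamma^2$; it says nothing about the ratio $\cD_\cF(\cdot;D_{k,m})/\cD_\cF(\cdot;D_{k,h-1,m})$, which can in principle be exponentially large in the within-episode cumulative uncertainty. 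The paper's mechanism is different: the formal $R_m$ carries the indicator $I_h^k$ (see~\eqref{eq:def:ind}, \eqref{eq:def:rm}), which restricts to rounds where $\sum_{j<h}\barsigma_{k,j,m}^{-2}\cD_\cF^2(z_{k,j,m};D_{k,j-1,m})\le1$; combined with the telescoping relation of Lemma~\ref{lem:cD-relation} this yields $\cD_\cF(\cdot;D_{k,m})\le2\,\cD_\cF(\cdot;D_{k,h-1,m})$ on those rounds, and the episodes where $I_h^k$ fails are absorbed into $G\le(M{+}1)\dim_\cF$. Without this device the within-episode blow-up reintroduces an $H$ factor, so it is essential for the horizon-free claim rather than a detail you can sweep into $C$.
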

Lemma~\ref{lem:higher-expansion} establishes relationships between $R_m$ and $S_m$, the estimated variance to the higher moment. Such a structure recursively expands for $M=O(\log KH)$ times, resulting in a regret bound only log-polynomially dependent on $H$. We further conduct a fine-grained analysis, rendering these quantities instance-dependent:
\begin{lemma}[Informal]
\label{lem:higher-expansion-instance}
We have
\[
    R_0 \lesssim \hatbeta_K\sqrt{\dim_\cF R_0} + \hatbeta_K\sqrt{\dim_\cF}\cdot\sqrt{\cQ^*}+\hatbeta_K^2\dim_\cF + C.
\]
\end{lemma}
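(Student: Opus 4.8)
The plan is to refine the $m=0$ instance of Lemma~\ref{lem:higher-expansion} by replacing the crude bound on the variance sum $S_0$ with an instance-dependent estimate tied to $\cQ^*$. The starting point is $R_0 \lesssim \hatbeta_K\sqrt{\dim_\cF}\cdot\sqrt{S_0 + R_1} + \hatbeta_K^2\dim_\cF + C$, so the task reduces to controlling $S_0 = \sum_{k,h}[\VV V_{k,h+1}](s_h^k,a_h^k)$ and $R_1$. For $R_1$ (and inductively $R_m$ for $m\ge 1$), I would iterate Lemma~\ref{lem:higher-expansion} up the moment ladder: $R_1 \lesssim \hatbeta_K\sqrt{\dim_\cF}\sqrt{S_1+R_2}+\hatbeta_K^2\dim_\cF+C$, and so on, terminating at $m=M$ where $S_M, R_M$ are trivially bounded since $M = \lceil\log_2(3KH)\rceil$ makes the residual lower order. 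Unrolling the recursion and using $\sqrt{a+b}\le\sqrt a+\sqrt b$ together with AM-GM gives $\sum_{m\ge 1} R_m \lesssim \hatbeta_K\sqrt{\dim_\cF}\cdot\sqrt{\sum_{m\ge 0}S_m} + \hatbeta_K^2\dim_\cF\log(KH) + C$, absorbing log factors into $\tilde O$.

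The heart of the argument is then to bound $\sum_{m\in\seq{M}} S_m$ in terms of $\cQ^* = \min\{\cQ_0^*, V_1^*K\}$. Here I would pass from the \emph{empirical} value functions $V_{k,h+1}$ to the \emph{optimal} ones $V^*_{h+1}$: by the optimism established in the proof sketch (and the first step of the proof of Theorem~\ref{thm:mdp-regret}), $V_{k,h}$ is sandwiched between $V^*_h$ and $V^*_h$ plus an accumulated bonus error, so $[\VV V_{k,h+1}^{2^m}](s_h^k,a_h^k)$ differs from $[\VV (V^*_{h+1})^{2^m}](s_h^k,a_h^k)$ by a term controlled by $\sum_{k,h}\min\{1,\hatbeta_k\cD_\cF(z_{k,h,m};D_{k,m})\} = R_m$ (using $|[\VV V_1^{2^m}]-[\VV V_2^{2^m}]| \lesssim \|V_1-V_2\|_\infty$ when values lie in $[0,1]$, after a telescoping of $V^{2^m}$). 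This feeds $R_m$ back into the bound on $S_m$, which is fine since the resulting self-referential inequality $R_0 \lesssim \hatbeta_K\sqrt{\dim_\cF}\sqrt{R_0} + \dots$ is exactly the form claimed and can be solved for $R_0$ at the end. After this substitution, $\sum_m S_m \lesssim \sum_m Q_m + (\text{bonus terms}) \lesssim (M+1)\cQ_0^* + \sum_m R_m + C$. Finally, to obtain the $\min$ with $V_1^*K$, I would invoke the total-variance / law-of-total-variance argument — in the horizon-free setting this is the statement that $\sum_{h}[\VV V^*_{h+1}](s_h,a_h)$ along a trajectory is $O((V^*_1)^2)$-ish, more precisely that $\sum_{k,h}[\VV V^*_{h+1}](s_h^k,a_h^k) \lesssim V_1^* K + (\text{lower order})$ because rewards are bounded by $1$ — which shows $Q_0 \lesssim V_1^*K$, and for higher $m$ a similar bound holds since $(V^*)^{2^m}\le V^*$ pointwise. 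Taking the minimum gives $\sum_m S_m \lesssim \cQ^* + \sum_m R_m + C$.

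Assembling: $R_0 \lesssim \hatbeta_K\sqrt{\dim_\cF}\cdot\sqrt{\cQ^* + \sum_{m\ge 0}R_m} + \hatbeta_K^2\dim_\cF + C$. Using $\sqrt{\cQ^*+\sum_m R_m}\le\sqrt{\cQ^*}+\sqrt{R_0}+\sqrt{\sum_{m\ge1}R_m}$ and plugging in the bound on $\sum_{m\ge1}R_m$ from the first paragraph (which itself only involves $\sqrt{\sum_m S_m}$, already controlled), then collecting the $\sqrt{R_0}$ terms on the left and applying the elementary fact that $x \le a\sqrt x + b \implies x \le 2a^2 + 2b$, yields $R_0 \lesssim \hatbeta_K\sqrt{\dim_\cF R_0} + \hatbeta_K\sqrt{\dim_\cF}\cdot\sqrt{\cQ^*} + \hatbeta_K^2\dim_\cF + C$ — note the first term is kept on the right-hand side in the informal statement, matching the claim exactly.

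\textbf{Main obstacle.} The delicate point is the passage from empirical to optimal value functions inside the higher-order variance terms: controlling $[\VV V_{k,h+1}^{2^m}] - [\VV(V^*_{h+1})^{2^m}]$ requires a pointwise bound on $\|V_{k,h+1}^{2^m} - (V^*_{h+1})^{2^m}\|_\infty$ that does \emph{not} blow up with $2^m$, which works only because all values lie in $[0,1]$ so that $|a^{2^m}-b^{2^m}|\le 2^m|a-b|\le 2^m\|V_{k,h+1}-V^*_{h+1}\|_\infty$ and the factor $2^m \le 3KH$ is absorbed into the weights $\barsigma$ and ultimately into log factors via the choice of $M$ and $\sigmamin$. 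Getting every such factor to land inside $\tilde O(\cdot)$ rather than as a genuine polynomial in $H$ — i.e., verifying the bookkeeping is genuinely horizon-free — is where the real care is needed, and is precisely the ``refined analysis'' the paper advertises.
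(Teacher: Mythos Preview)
Your high-level plan --- reduce to $S_m$, pass from $V_{k,h+1}$ to $V^*_{h+1}$, then unroll the moment ladder --- matches the paper's structure, but the execution has a real gap exactly where you flag it. The claim that the factor $2^m$ from $|a^{2^m}-b^{2^m}|\le 2^m|a-b|$ is ``absorbed into the weights $\barsigma$ and ultimately into log factors'' is not correct: if you sum over $m$ as you propose, you pick up $\sum_{m=0}^M 2^m = O(KH)$ multiplying $R_0$, so the self-referential inequality becomes $R_0 \lesssim \hatbeta_K\sqrt{\dim_\cF\cdot KH\cdot R_0}$, which solves to $R_0 \lesssim \hatbeta_K^2\dim_\cF\cdot KH$ --- polynomial in $H$, destroying horizon-freeness. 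The weights $\barsigma_{k,h,m}$ govern the regression concentration (Theorem~\ref{thm:wls}), not this variance decomposition; there is no mechanism by which they absorb $2^m$.

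The paper avoids this in two ways you are missing. First, it does \emph{not} bound $|[\VV V_{k,h+1}^{2^m}]-[\VV(V^*_{h+1})^{2^m}]|$ via a pointwise sup-norm estimate (which would also fail for a separate reason: $\|V_{k,h+1}-V^*_{h+1}\|_\infty$ is a supremum over all states, not controlled by per-step bonuses). Instead it writes $V_{k,h+1}=V^*_{h+1}+\checkV_{k,h+1}$, applies $\Var[X+Y]\le 2\Var[X]+2\Var[Y]$ to obtain $S_m\le 2Q_m+2\checkS_m$, and controls $\checkS_m$ by a telescoping argument (Lemma~\ref{lem:SA}): $\checkS_m\le \checkA_{m+1}+G+2^{m+1}\cdot 2R_0$, where $\checkA_m$ is a Freedman-controlled martingale term for $\checkV^{2^m}$ and $G$ counts ``bad'' episodes. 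Second --- and this is the crux --- it never sums over $m$. It keeps a \emph{coupled} recursion
\[
R_m+\checkA_m \;\lesssim\; \hatbeta_K\sqrt{\dim_\cF}\,\sqrt{\,(R_{m+1}+\checkA_{m+1})+2^{m+1}\cdot R_0\,}+\hatbeta_K\sqrt{\dim_\cF}\,\sqrt{\cQ_0^*+G+KH\sigmamin^2}+\hatbeta_K^2\dim_\cF
\]
and then invokes a tailored recursion-solving lemma (Lemma~\ref{lem:exp}): if $a_m\le\lambda_2\sqrt{a_{m+1}+2^{m+1}\lambda_3}+\lambda_4$ for all $m$, then $a_0\le 2\lambda_2\sqrt{\lambda_3}+4\lambda_2^2+3\lambda_4$. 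The nested square roots damp $2^m$ geometrically (morally $(2^m)^{1/2^m}\to 1$), which is precisely why no $H$ factor survives. Your ``unroll, then sum'' step destroys exactly this structure. The $V_1^*K$ branch of $\cQ^*$ is handled by a parallel argument (Lemma~\ref{lem:RA0}) coupling $R_m$ with $A_m$ rather than $\checkA_m$, together with $V_0\le V_1^*K+A_0+\checkA_0$, not by a direct law-of-total-variance bound on $Q_0$.
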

The completion of the proof necessitates in-depth analyses of higher-order quantities to eliminate lower-order terms. We refer readers to Appendix~\ref{proof:mdp} for the formal statements of these critical lemmas, as well as an exhaustive and comprehensive proof of Theorem~\ref{thm:mdp-regret}.

\ifarxiv
\section{Experiments}
\else
\section{EXPERIMENTS}
\fi
\label{sec:exp}

In this section, we conduct numerical experiments to demonstrate the effectiveness of our algorithmic designs. We adopt the episodic RiverSwim environment \citep{strehl2008analysis}, which is also considered in \citet{ayoub2020model}.
Figure~\ref{fig:riverswim} shows a $5$-state RiverSwim environment with action space $\cA=\{a_1,a_2\}$. We choose $H\in\{20,100\}$, where $H=20$ in consistency with \citet{ayoub2020model}, and $H=100$ to demonstrate the performance on long planning horizon problems. And we set the number of episodes $K=5000$. We refer readers to Appendix~\ref{sec:exp-appendix} for implementation details and additional experimental results.

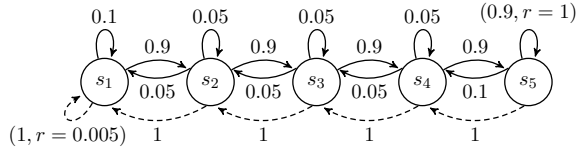
\begin{figure}[ht]
\centering
\resizebox{0.48\textwidth}{!}{
\begin{tikzpicture}[->,>=stealth',shorten >=2pt, 
line width=0.7 pt, node distance=2cm,
scale=1, 
transform shape, align=center, 
state/.style={circle, minimum size=0.3cm, text width=5mm}]
\node[state, draw] (one) {$s_1$};
\node[state, draw] (two) [right of=one] {$s_2$};
\node[state, draw] (three) [right of=two] {$s_3$};
\node[state, draw] (four) [right of=three] {$s_4$};
\node[state, draw] (five) [right of=four] {$s_5$};

\path (one) edge [ loop above ] node {$0.1$} (one) ;
\path (one) edge [ bend left ] node [above]{$0.9$} (two) ;
\draw[->] (two.145) [bend left] to node[below]{$0.05$} (one.35);

\path (two) edge [ loop above ] node {$0.05$} (two) ;
\path (two) edge [ bend left ] node [above]{$0.9$} (three) ;
\draw[->] (three.145) [bend left] to node[below]{$0.05$} (two.35);

\path (three) edge [ loop above ] node {$0.05$} (three) ;
\path (three) edge [ bend left ] node [above]{$0.9$} (four) ;
\draw[->] (four.145) [bend left] to node[below]{$0.05$} (three.35);

\path (four) edge [ loop above ] node {$0.05$} (four) ;
\path (four) edge [ bend left ] node [above]{$0.9$} (five) ;
\draw[->] (five.145) [bend left] to node[below]{$0.1$} (four.35);

\path (five) edge [ loop above ] node {$(0.9, r=1)$} (five);

\path[densely dashed] (one) edge [out=240,in=210,looseness=8] node[below] {$(1, r=0.005)$} (one);
\draw[densely dashed, ->] (two.-100) [bend left] to node[below]{$1$} (one.-80);
\draw[densely dashed, ->] (three.-100) [bend left] to node[below]{$1$} (two.-80);
\draw[densely dashed, ->] (four.-100) [bend left] to node[below]{$1$} (three.-80);
\draw[densely dashed, ->] (five.-100) [bend left] to node[below]{$1$} (four.-80);
\end{tikzpicture}
}
\caption{The transition probabilities and rewards of RiverSwim environment \citep{strehl2008analysis}. The solid arrow shows the transition kernel of action $a_1$. And the dashed arrow represents that of action $a_2$. The reward is denoted with the transition probabilities, if greater than $0$.}
\label{fig:riverswim}
\end{figure}

We mainly consider three approaches to solve the episodic MDPs: (i) our proposed algorithm $\algomdp$; (ii) $\algomdp$ without HOME; (iii) UCRL-VTR \citep{ayoub2020model}. We generate 10 independent paths for each setting. Their average regrets with error bars are shown in Figure~\ref{fig:exp}.

\begin{figure}[th]
\centering
\subfigure[A $5$-state RiverSwim with $H=20$.]{\includegraphics[width=0.48\textwidth]{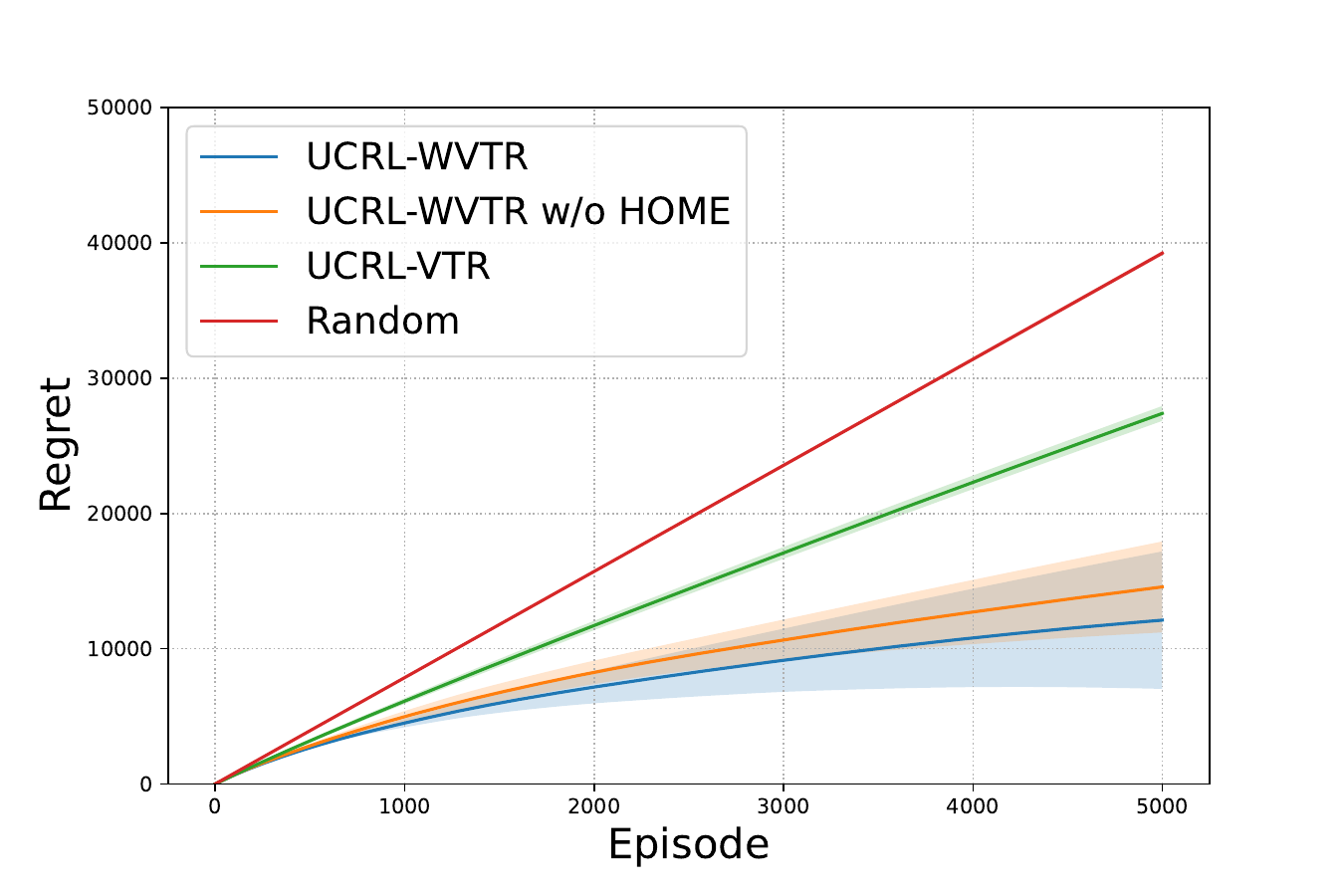}}
\subfigure[A $5$-state RiverSwim with $H=100$.]{\includegraphics[width=0.48\textwidth]{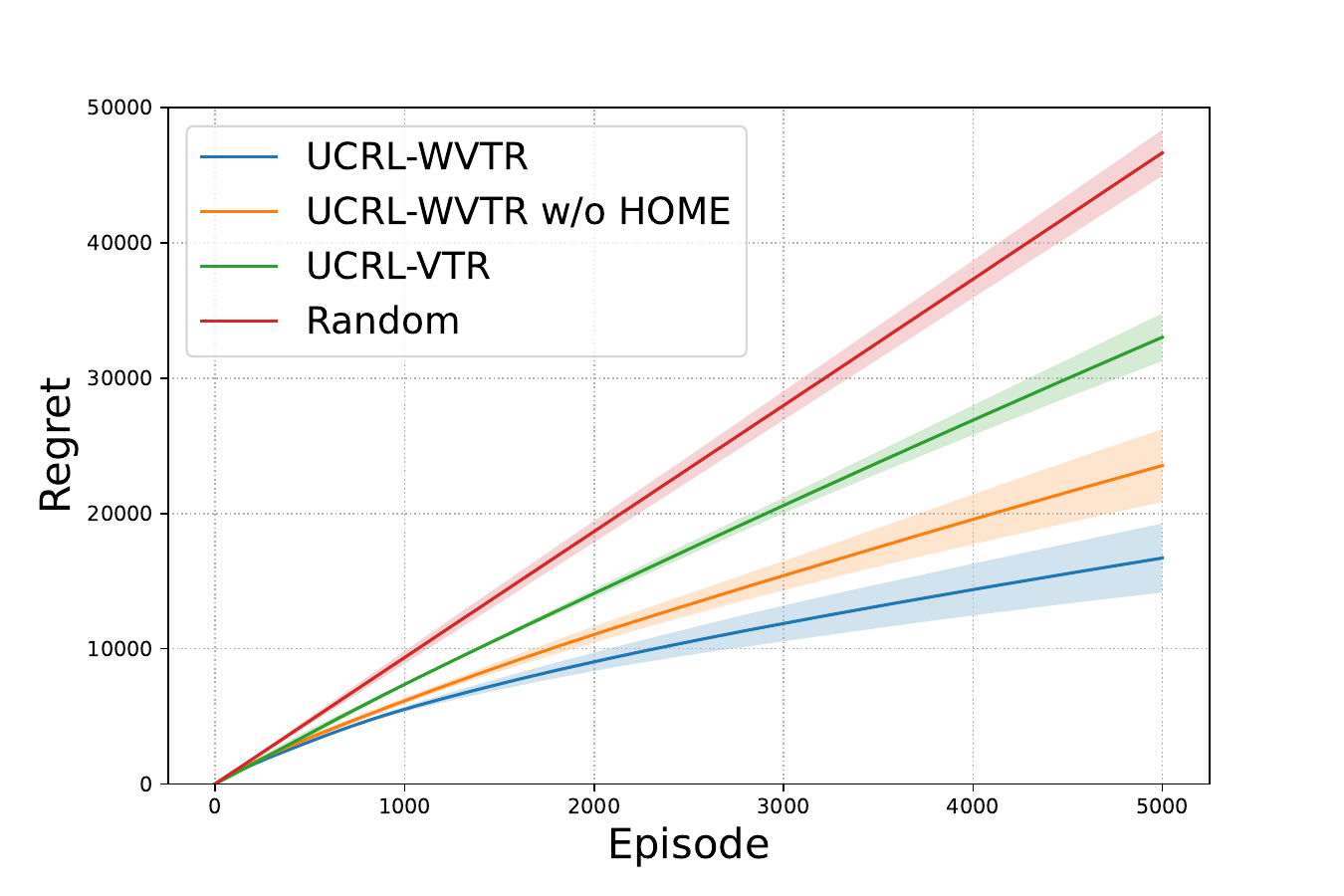}}
\caption{Cumulative regret over episodes. \texttt{Random} represents the baseline where the agent acts uniformly.}
\label{fig:exp}
\end{figure}

As anticipated, $\algomdp$ demonstrates superior performance compared to UCRL-VTR for both settings, underscoring the efficacy of weighted regression. It is noteworthy that the omission of the high-order moment estimator results in a slightly worse performance, thereby illustrating its practical utility. In summary, the experimental findings substantiate and reinforce our theoretical conclusions.

\ifarxiv
\section{Conclusion}
\label{sec:conclu}
In this work, we propose a novel algorithm, termed as $\algomdp$, for model-based RL with general function approximation, which features weighted value-targeted regression and a high-order moment estimator. We theoretically demonstrate it achieves a horizon-free and instance-dependent regret bound. As a special case, it matches the lower bound for linear mixture MDPs up to logarithmic factors, showing its tightness. Furthermore, it is computationally efficient given a regression oracle. We also conduct numerical experiments to validate the theoretical findings. To the best of our knowledge, we are the first to achieve efficient, horizon-free and instance-dependent learning in RL with general function approximation.
\else
\section{CONCLUSION}
\label{sec:conclu}
We propose a novel algorithm for model-based RL with general function approximation. We rigorously prove it achieves a horizon-free and instance-dependent regret bound. Moreover, it is computationally efficient given a regression oracle. Numerical experiments further validate the theoretical findings.
\fi

\section*{Acknowledgments}
Lin F. Yang is supported in part by NSF \#2221871 and an Amazon Faculty Award.

\bibliographystyle{plainnat}
\bibliography{ref}

\begin{thebibliography}{41}
\providecommand{\natexlab}[1]{#1}
\providecommand{\url}[1]{\texttt{#1}}
\expandafter\ifx\csname urlstyle\endcsname\relax
  \providecommand{\doi}[1]{doi: #1}\else
  \providecommand{\doi}{doi: \begingroup \urlstyle{rm}\Url}\fi

\bibitem[Abbasi-Yadkori et~al.(2011)Abbasi-Yadkori, P{\'a}l, and Szepesv{\'a}ri]{abbasi2011improved}
Yasin Abbasi-Yadkori, D{\'a}vid P{\'a}l, and Csaba Szepesv{\'a}ri.
\newblock Improved algorithms for linear stochastic bandits.
\newblock \emph{Advances in neural information processing systems}, 24, 2011.

\bibitem[Agarwal et~al.(2023)Agarwal, Jin, and Zhang]{agarwal2023vo}
Alekh Agarwal, Yujia Jin, and Tong Zhang.
\newblock {VO$Q$L}: Towards optimal regret in model-free rl with nonlinear function approximation.
\newblock In \emph{The Thirty Sixth Annual Conference on Learning Theory}, pages 987--1063. PMLR, 2023.

\bibitem[Ayoub et~al.(2020)Ayoub, Jia, Szepesvari, Wang, and Yang]{ayoub2020model}
Alex Ayoub, Zeyu Jia, Csaba Szepesvari, Mengdi Wang, and Lin Yang.
\newblock Model-based reinforcement learning with value-targeted regression.
\newblock In \emph{International Conference on Machine Learning}, pages 463--474. PMLR, 2020.

\bibitem[Azar et~al.(2017)Azar, Osband, and Munos]{azar2017minimax}
Mohammad~Gheshlaghi Azar, Ian Osband, and R{\'e}mi Munos.
\newblock Minimax regret bounds for reinforcement learning.
\newblock In \emph{International Conference on Machine Learning}, pages 263--272. PMLR, 2017.

\bibitem[Bubeck et~al.(2015)]{bubeck2015convex}
S{\'e}bastien Bubeck et~al.
\newblock Convex optimization: Algorithms and complexity.
\newblock \emph{Foundations and Trends{\textregistered} in Machine Learning}, 8\penalty0 (3-4):\penalty0 231--357, 2015.

\bibitem[Chen et~al.(2022{\natexlab{a}})Chen, Mei, and Bai]{chen2022unified}
Fan Chen, Song Mei, and Yu~Bai.
\newblock Unified algorithms for rl with decision-estimation coefficients: No-regret, pac, and reward-free learning.
\newblock \emph{arXiv preprint arXiv:2209.11745}, 2022{\natexlab{a}}.

\bibitem[Chen et~al.(2022{\natexlab{b}})Chen, Li, Yuan, Gu, and Jordan]{chen2022general}
Zixiang Chen, Chris~Junchi Li, Angela Yuan, Quanquan Gu, and Michael~I Jordan.
\newblock A general framework for sample-efficient function approximation in reinforcement learning.
\newblock \emph{arXiv preprint arXiv:2209.15634}, 2022{\natexlab{b}}.

\bibitem[Dann et~al.(2021)Dann, Mohri, Zhang, and Zimmert]{dann2021provably}
Christoph Dann, Mehryar Mohri, Tong Zhang, and Julian Zimmert.
\newblock A provably efficient model-free posterior sampling method for episodic reinforcement learning.
\newblock \emph{Advances in Neural Information Processing Systems}, 34:\penalty0 12040--12051, 2021.

\bibitem[Du et~al.(2021)Du, Kakade, Lee, Lovett, Mahajan, Sun, and Wang]{du2021bilinear}
Simon Du, Sham Kakade, Jason Lee, Shachar Lovett, Gaurav Mahajan, Wen Sun, and Ruosong Wang.
\newblock Bilinear classes: A structural framework for provable generalization in rl.
\newblock In \emph{International Conference on Machine Learning}, pages 2826--2836. PMLR, 2021.

\bibitem[Foster et~al.(2018)Foster, Agarwal, Dud{\'\i}k, Luo, and Schapire]{foster2018practical}
Dylan Foster, Alekh Agarwal, Miroslav Dud{\'\i}k, Haipeng Luo, and Robert Schapire.
\newblock Practical contextual bandits with regression oracles.
\newblock In \emph{International Conference on Machine Learning}, pages 1539--1548. PMLR, 2018.

\bibitem[Foster et~al.(2021)Foster, Kakade, Qian, and Rakhlin]{foster2021statistical}
Dylan~J Foster, Sham~M Kakade, Jian Qian, and Alexander Rakhlin.
\newblock The statistical complexity of interactive decision making.
\newblock \emph{arXiv preprint arXiv:2112.13487}, 2021.

\bibitem[Foster et~al.(2023)Foster, Golowich, and Han]{foster2023tight}
Dylan~J Foster, Noah Golowich, and Yanjun Han.
\newblock Tight guarantees for interactive decision making with the decision-estimation coefficient.
\newblock \emph{arXiv preprint arXiv:2301.08215}, 2023.

\bibitem[Huang et~al.(2023)Huang, Zhong, Wang, and Yang]{huang2023tackling}
Jiayi Huang, Han Zhong, Liwei Wang, and Lin~F Yang.
\newblock Tackling heavy-tailed rewards in reinforcement learning with function approximation: Minimax optimal and instance-dependent regret bounds.
\newblock \emph{arXiv preprint arXiv:2306.06836}, 2023.

\bibitem[Jia et~al.(2020)Jia, Yang, Szepesvari, and Wang]{jia2020model}
Zeyu Jia, Lin Yang, Csaba Szepesvari, and Mengdi Wang.
\newblock Model-based reinforcement learning with value-targeted regression.
\newblock In \emph{Learning for Dynamics and Control}, pages 666--686. PMLR, 2020.

\bibitem[Jiang and Agarwal(2018)]{jiang2018open}
Nan Jiang and Alekh Agarwal.
\newblock Open problem: The dependence of sample complexity lower bounds on planning horizon.
\newblock In \emph{Conference On Learning Theory}, pages 3395--3398. PMLR, 2018.

\bibitem[Jiang et~al.(2017)Jiang, Krishnamurthy, Agarwal, Langford, and Schapire]{jiang2017contextual}
Nan Jiang, Akshay Krishnamurthy, Alekh Agarwal, John Langford, and Robert~E Schapire.
\newblock Contextual decision processes with low bellman rank are pac-learnable.
\newblock In \emph{International Conference on Machine Learning}, pages 1704--1713. PMLR, 2017.

\bibitem[Jin et~al.(2021)Jin, Liu, and Miryoosefi]{jin2021bellman}
Chi Jin, Qinghua Liu, and Sobhan Miryoosefi.
\newblock Bellman eluder dimension: New rich classes of rl problems, and sample-efficient algorithms.
\newblock \emph{Advances in neural information processing systems}, 34:\penalty0 13406--13418, 2021.

\bibitem[Kim et~al.(2022)Kim, Yang, and Jun]{kim2022improved}
Yeoneung Kim, Insoon Yang, and Kwang-Sung Jun.
\newblock Improved regret analysis for variance-adaptive linear bandits and horizon-free linear mixture mdps.
\newblock \emph{Advances in Neural Information Processing Systems}, 35:\penalty0 1060--1072, 2022.

\bibitem[Kong et~al.(2021)Kong, Salakhutdinov, Wang, and Yang]{kong2021online}
Dingwen Kong, Ruslan Salakhutdinov, Ruosong Wang, and Lin~F Yang.
\newblock Online sub-sampling for reinforcement learning with general function approximation.
\newblock \emph{arXiv preprint arXiv:2106.07203}, 2021.

\bibitem[Krishnamurthy et~al.(2017)Krishnamurthy, Agarwal, Huang, Daum{\'e}~III, and Langford]{krishnamurthy2017active}
Akshay Krishnamurthy, Alekh Agarwal, Tzu-Kuo Huang, Hal Daum{\'e}~III, and John Langford.
\newblock Active learning for cost-sensitive classification.
\newblock In \emph{International Conference on Machine Learning}, pages 1915--1924. PMLR, 2017.

\bibitem[Li and Sun(2023)]{li2023variance}
Xiang Li and Qiang Sun.
\newblock Variance-aware robust reinforcement learning with linear function approximation with heavy-tailed rewards.
\newblock \emph{arXiv preprint arXiv:2303.05606}, 2023.

\bibitem[Li et~al.(2022)Li, Wang, and Yang]{li2022settling}
Yuanzhi Li, Ruosong Wang, and Lin~F Yang.
\newblock Settling the horizon-dependence of sample complexity in reinforcement learning.
\newblock In \emph{2021 IEEE 62nd Annual Symposium on Foundations of Computer Science (FOCS)}, pages 965--976. IEEE, 2022.

\bibitem[Liu et~al.(2023)Liu, Lu, Xiong, Zhong, Hu, Zhang, Zheng, Yang, and Wang]{liu2023one}
Zhihan Liu, Miao Lu, Wei Xiong, Han Zhong, Hao Hu, Shenao Zhang, Sirui Zheng, Zhuoran Yang, and Zhaoran Wang.
\newblock One objective to rule them all: A maximization objective fusing estimation and planning for exploration.
\newblock \emph{arXiv preprint arXiv:2305.18258}, 2023.

\bibitem[Modi et~al.(2020)Modi, Jiang, Tewari, and Singh]{modi2020sample}
Aditya Modi, Nan Jiang, Ambuj Tewari, and Satinder Singh.
\newblock Sample complexity of reinforcement learning using linearly combined model ensembles.
\newblock In \emph{International Conference on Artificial Intelligence and Statistics}, pages 2010--2020. PMLR, 2020.

\bibitem[Osband and Van~Roy(2014)]{osband2014model}
Ian Osband and Benjamin Van~Roy.
\newblock Model-based reinforcement learning and the eluder dimension.
\newblock \emph{Advances in Neural Information Processing Systems}, 27, 2014.

\bibitem[Russo and Van~Roy(2013)]{russo2013eluder}
Daniel Russo and Benjamin Van~Roy.
\newblock Eluder dimension and the sample complexity of optimistic exploration.
\newblock \emph{Advances in Neural Information Processing Systems}, 26, 2013.

\bibitem[Strehl and Littman(2008)]{strehl2008analysis}
Alexander~L Strehl and Michael~L Littman.
\newblock An analysis of model-based interval estimation for markov decision processes.
\newblock \emph{Journal of Computer and System Sciences}, 74\penalty0 (8):\penalty0 1309--1331, 2008.

\bibitem[Sun et~al.(2019)Sun, Jiang, Krishnamurthy, Agarwal, and Langford]{sun2019model}
Wen Sun, Nan Jiang, Akshay Krishnamurthy, Alekh Agarwal, and John Langford.
\newblock Model-based rl in contextual decision processes: Pac bounds and exponential improvements over model-free approaches.
\newblock In \emph{Conference on learning theory}, pages 2898--2933. PMLR, 2019.

\bibitem[Wagenmaker and Foster(2023)]{wagenmaker2023instance}
Andrew~J Wagenmaker and Dylan~J Foster.
\newblock Instance-optimality in interactive decision making: Toward a non-asymptotic theory.
\newblock In \emph{The Thirty Sixth Annual Conference on Learning Theory}, pages 1322--1472. PMLR, 2023.

\bibitem[Wagenmaker et~al.(2022)Wagenmaker, Chen, Simchowitz, Du, and Jamieson]{wagenmaker2022first}
Andrew~J Wagenmaker, Yifang Chen, Max Simchowitz, Simon Du, and Kevin Jamieson.
\newblock First-order regret in reinforcement learning with linear function approximation: A robust estimation approach.
\newblock In \emph{International Conference on Machine Learning}, pages 22384--22429. PMLR, 2022.

\bibitem[Wang et~al.(2020{\natexlab{a}})Wang, Du, Yang, and Kakade]{wang2020long}
Ruosong Wang, Simon~S Du, Lin Yang, and Sham Kakade.
\newblock Is long horizon rl more difficult than short horizon rl?
\newblock \emph{Advances in Neural Information Processing Systems}, 33:\penalty0 9075--9085, 2020{\natexlab{a}}.

\bibitem[Wang et~al.(2020{\natexlab{b}})Wang, Salakhutdinov, and Yang]{wang2020reinforcement}
Ruosong Wang, Russ~R Salakhutdinov, and Lin Yang.
\newblock Reinforcement learning with general value function approximation: Provably efficient approach via bounded eluder dimension.
\newblock \emph{Advances in Neural Information Processing Systems}, 33:\penalty0 6123--6135, 2020{\natexlab{b}}.

\bibitem[Zanette and Brunskill(2019)]{zanette2019tighter}
Andrea Zanette and Emma Brunskill.
\newblock Tighter problem-dependent regret bounds in reinforcement learning without domain knowledge using value function bounds.
\newblock In \emph{International Conference on Machine Learning}, pages 7304--7312. PMLR, 2019.

\bibitem[Zhang et~al.(2021{\natexlab{a}})Zhang, Ji, and Du]{zhang2021reinforcement}
Zihan Zhang, Xiangyang Ji, and Simon Du.
\newblock Is reinforcement learning more difficult than bandits? a near-optimal algorithm escaping the curse of horizon.
\newblock In \emph{Conference on Learning Theory}, pages 4528--4531. PMLR, 2021{\natexlab{a}}.

\bibitem[Zhang et~al.(2021{\natexlab{b}})Zhang, Yang, Ji, and Du]{zhang2021improved}
Zihan Zhang, Jiaqi Yang, Xiangyang Ji, and Simon~S Du.
\newblock Improved variance-aware confidence sets for linear bandits and linear mixture mdp.
\newblock \emph{Advances in Neural Information Processing Systems}, 34:\penalty0 4342--4355, 2021{\natexlab{b}}.

\bibitem[Zhang et~al.(2022)Zhang, Ji, and Du]{zhang2022horizon}
Zihan Zhang, Xiangyang Ji, and Simon Du.
\newblock Horizon-free reinforcement learning in polynomial time: the power of stationary policies.
\newblock In \emph{Conference on Learning Theory}, pages 3858--3904. PMLR, 2022.

\bibitem[Zhao et~al.(2023)Zhao, He, Zhou, Zhang, and Gu]{zhao2023variance}
Heyang Zhao, Jiafan He, Dongruo Zhou, Tong Zhang, and Quanquan Gu.
\newblock Variance-dependent regret bounds for linear bandits and reinforcement learning: Adaptivity and computational efficiency.
\newblock In \emph{Proceedings of Thirty Sixth Conference on Learning Theory}, volume 195 of \emph{Proceedings of Machine Learning Research}, pages 4977--5020. PMLR, 2023.

\bibitem[Zhong et~al.(2022)Zhong, Xiong, Zheng, Wang, Wang, Yang, and Zhang]{zhong2022posterior}
Han Zhong, Wei Xiong, Sirui Zheng, Liwei Wang, Zhaoran Wang, Zhuoran Yang, and Tong Zhang.
\newblock Gec: A unified framework for interactive decision making in mdp, pomdp, and beyond.
\newblock \emph{arXiv preprint arXiv:2211.01962}, 2022.

\bibitem[Zhou and Gu(2022)]{zhou2022computationally}
Dongruo Zhou and Quanquan Gu.
\newblock Computationally efficient horizon-free reinforcement learning for linear mixture mdps.
\newblock \emph{Advances in neural information processing systems}, 35:\penalty0 36337--36349, 2022.

\bibitem[Zhou et~al.(2021)Zhou, Gu, and Szepesvari]{zhou2021nearly}
Dongruo Zhou, Quanquan Gu, and Csaba Szepesvari.
\newblock Nearly minimax optimal reinforcement learning for linear mixture markov decision processes.
\newblock In \emph{Conference on Learning Theory}, pages 4532--4576. PMLR, 2021.

\bibitem[Zhou et~al.(2023)Zhou, Zihan, and Du]{zhou2023sharp}
Runlong Zhou, Zhang Zihan, and Simon~Shaolei Du.
\newblock Sharp variance-dependent bounds in reinforcement learning: Best of both worlds in stochastic and deterministic environments.
\newblock In \emph{International Conference on Machine Learning}, pages 42878--42914. PMLR, 2023.

\end{thebibliography}

\clearpage
\appendix
\ifarxiv
\section{Additional Experiments}
\else
\section{ADDITIONAL EXPERIMENTS}
\fi
\label{sec:exp-appendix}

\paragraph{Hyperparameters of Algorithms}

We provide more details about the three algorithms considered for the experiments in Section~\ref{sec:exp}. These approaches share similar structures with Algorithm~\ref{algo:mdp}. (i) $\algomdp$: our proposed algorithm in Algorithm~\ref{algo:mdp}. (ii) $\algomdp$ without $\algovar$: it can be implemented by setting the level $M=1$ in Algorithm~\ref{algo:mdp}. Intuitively, we simply use unweighted regression for solving $f_{k,h,1}$, which is then used for constructing the variance estimate $[\bar{\VV}_{k,0} V_{k,h+1}](s^k_h,a^k_h) = \hatf_{k,1}(z_{k,h,1}) - \hatf_{k,0}^2(z_{k,h,0})$. (iii) UCRL-VTR: it can be implemented by setting the level $M=0$, $\sigmamin=1$ and $\gamma=0$ in Algorithm~\ref{algo:mdp}, since it simply uses unweighted regression, resulting the variance estimation $[\bar{\VV}_{k,0} V_{k,h+1}](s^k_h,a^k_h)=1$. Specifically, we list the hyperparameters in Table~\ref{tab:exp-hyper}.

\begin{table}[ht]
\centering
\caption{Hyperparameters of Algorithms}
\label{tab:exp-hyper}
\begin{tabular}{|c|l|l|l|}
     \hline
     Hyperparameters & $\algomdp$ & \makecell{$\algomdp$ \\ without $\algovar$} & UCRL-VTR \\
     \hline
     $\lambda$ & 0.001 & 0.001 & 0.001 \\
     \hline
     $\sigmamin$ & 0.01 & 0.01 & 1 \\
     \hline
     $\gamma$ & 0.5 & 0.5 & 0 \\
     \hline
     $\beta$ & 1 & 1 & 1 \\
     \hline
     $M$ & 3 & 1 & 0 \\
     \hline
\end{tabular}
\end{table}

\paragraph{Environments}

Recall the RiverSwim environment consists of a series of $n$ states organized in a linear sequence, as shown in Figure~\ref{fig:riverswim-n-state}. The agent's initial position is on the far left, and it faces a decision at each state to either swim to the left or to the right. Notably, there exists a prevailing current that significantly facilitates leftward swimming while making rightward swimming more challenging. When swimming with the current, the agent is guaranteed to make progress to the left. However, swimming against the current predominantly leads to rightward movement but occasionally results in the agent moving left or remaining in the current state. The environment offers rewards only at the extreme ends: a reward of $0.05$ on the far left and a reward of $1$ on the far right. In essence, the agent is compelled to explore the environment thoroughly, ultimately weighing the decision of whether to brave the uncertain prospects of moving against the current for the potentially higher reward, or simply staying in the original position to secure a relatively smaller reward. Consequently, intelligent and strategic exploration becomes a fundamental requirement for acquiring an effective policy in this challenging environment.

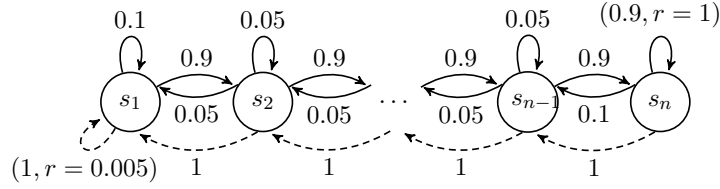
\begin{figure}[ht]
\centering
\resizebox{0.6\textwidth}{!}{
\begin{tikzpicture}[->,>=stealth',shorten >=2pt, 
line width=0.7 pt, node distance=2cm,
scale=1, 
transform shape, align=center, 
state/.style={circle, minimum size=0.3cm, text width=5mm}]
\node[state, draw] (one) {$s_1$};
\node[state, draw] (two) [right of=one] {$s_2$};
\node[state, draw=none] (three) [right of=two] {$\cdots$};
\node[state, draw] (four) [right of=three] {$s_{n-1}$};
\node[state, draw] (five) [right of=four] {$s_{n}$};

\path (one) edge [ loop above ] node {$0.1$} (one) ;
\path (one) edge [ bend left ] node [above]{$0.9$} (two) ;
\draw[->] (two.145) [bend left] to node[below]{$0.05$} (one.35);

\path (two) edge [ loop above ] node {$0.05$} (two) ;
\path (two) edge [ bend left ] node [above]{$0.9$} (three) ;
\draw[->] (three.145) [bend left] to node[below]{$0.05$} (two.35);

\path (three) edge [ bend left ] node [above]{$0.9$} (four) ;
\draw[->] (four.145) [bend left] to node[below]{$0.05$} (three.35);

\path (four) edge [ loop above ] node {$0.05$} (four) ;
\path (four) edge [ bend left ] node [above]{$0.9$} (five) ;
\draw[->] (five.145) [bend left] to node[below]{$0.1$} (four.35);

\path (five) edge [ loop above ] node {$(0.9, r=1)$} (five);

\path[densely dashed] (one) edge [out=240,in=210,looseness=8] node[below] {$(1, r=0.005)$} (one);
\draw[densely dashed, ->] (two.-100) [bend left] to node[below]{$1$} (one.-80);
\draw[densely dashed, ->] (three.-100) [bend left] to node[below]{$1$} (two.-80);
\draw[densely dashed, ->] (four.-100) [bend left] to node[below]{$1$} (three.-80);
\draw[densely dashed, ->] (five.-100) [bend left] to node[below]{$1$} (four.-80);
\end{tikzpicture}
}
\caption{The transition probabilities and rewards of an $n$-state RiverSwim environment \citep{strehl2008analysis}. The solid arrow shows the transition kernel of action $a_1$. And the dashed arrow represents that of action $a_2$. The reward is denoted with the transition probabilities, if greater than $0$.}
\label{fig:riverswim-n-state}
\end{figure}

\paragraph{Additional Results}

To demonstrate the effectiveness of our proposed algorithm more, we show additional experimental results. We set $H=100$ to demonstrate the performance of these algorithms on long planning horizon problems. We will choose a series of gradually increasing the number of states of RiverSwim, i.e., $|S|\in\{6,8,10\}$, which will exponentially increase the difficulty of such an episodic RL problem.

\begin{figure}[t]
\centering
\subfigure[Regret with $|S|=6$]{\includegraphics[width=0.32\textwidth]{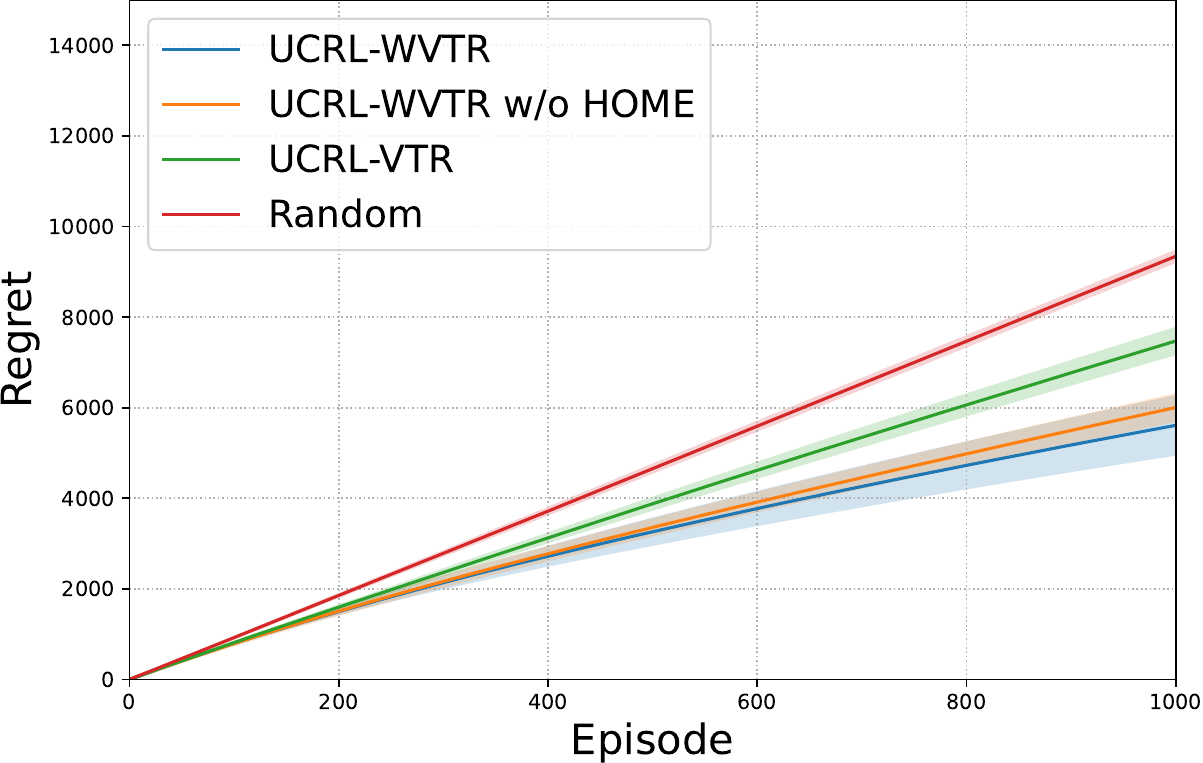}}
\subfigure[Regret with $|S|=8$]{\includegraphics[width=0.32\textwidth]{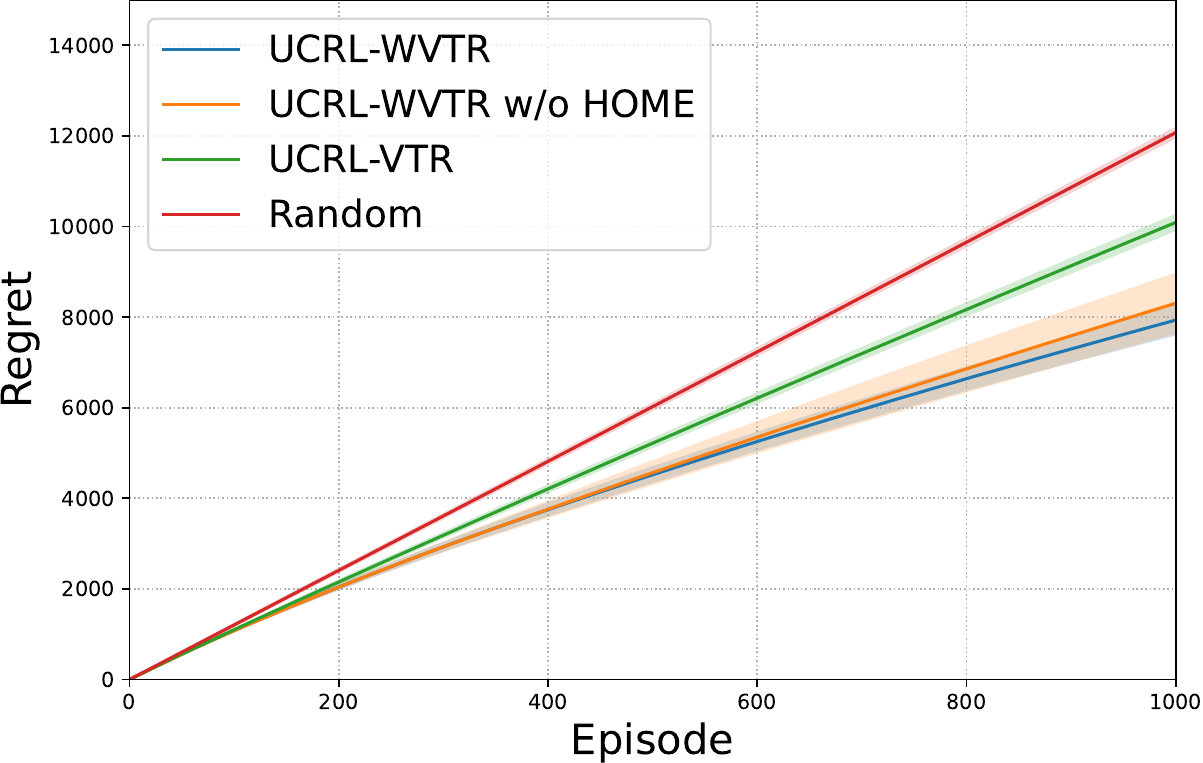}}
\subfigure[Regret with $|S|=10$]{\includegraphics[width=0.32\textwidth]{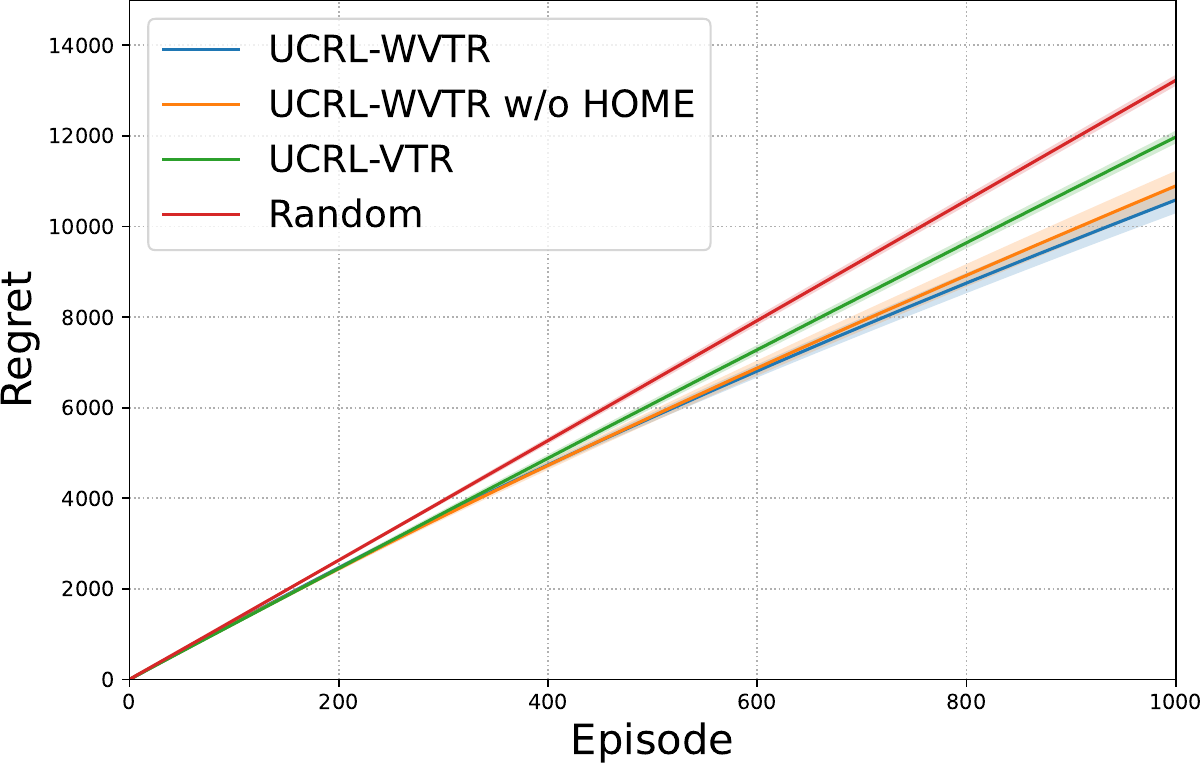}}
\subfigure[Average Rewards with $|S|=6$]{\includegraphics[width=0.32\textwidth]{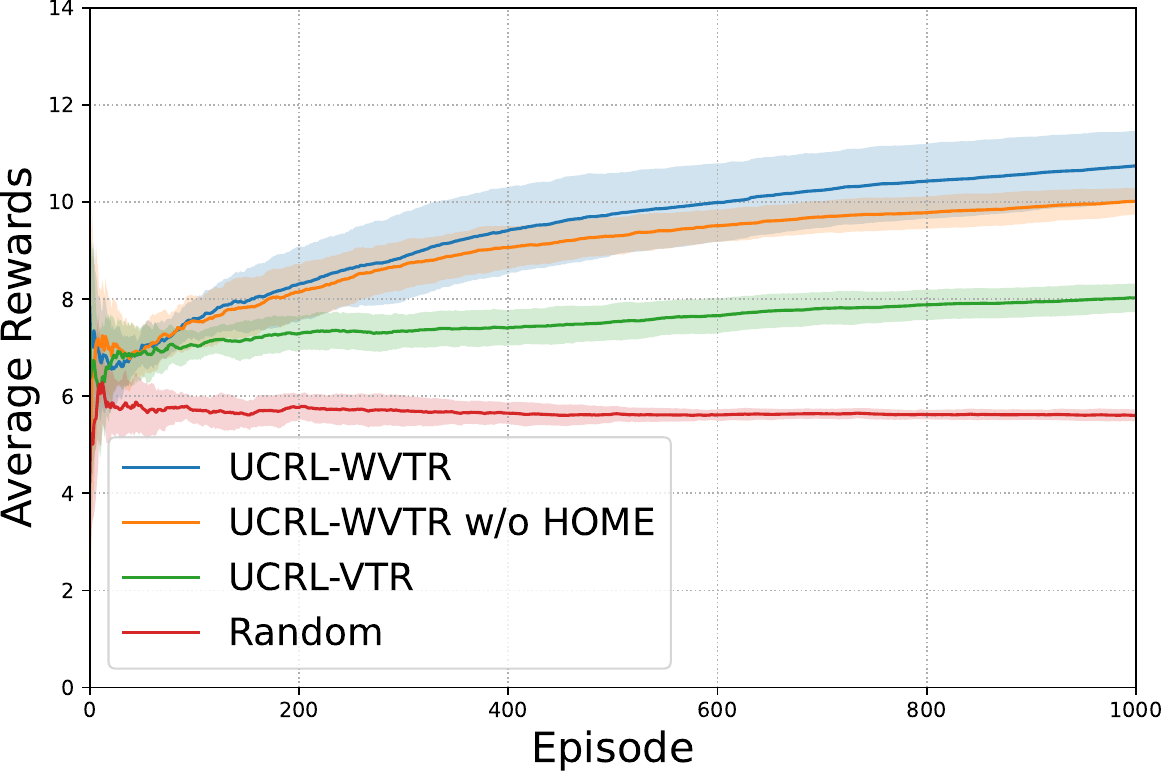}}
\subfigure[Average Rewards with $|S|=8$]{\includegraphics[width=0.32\textwidth]{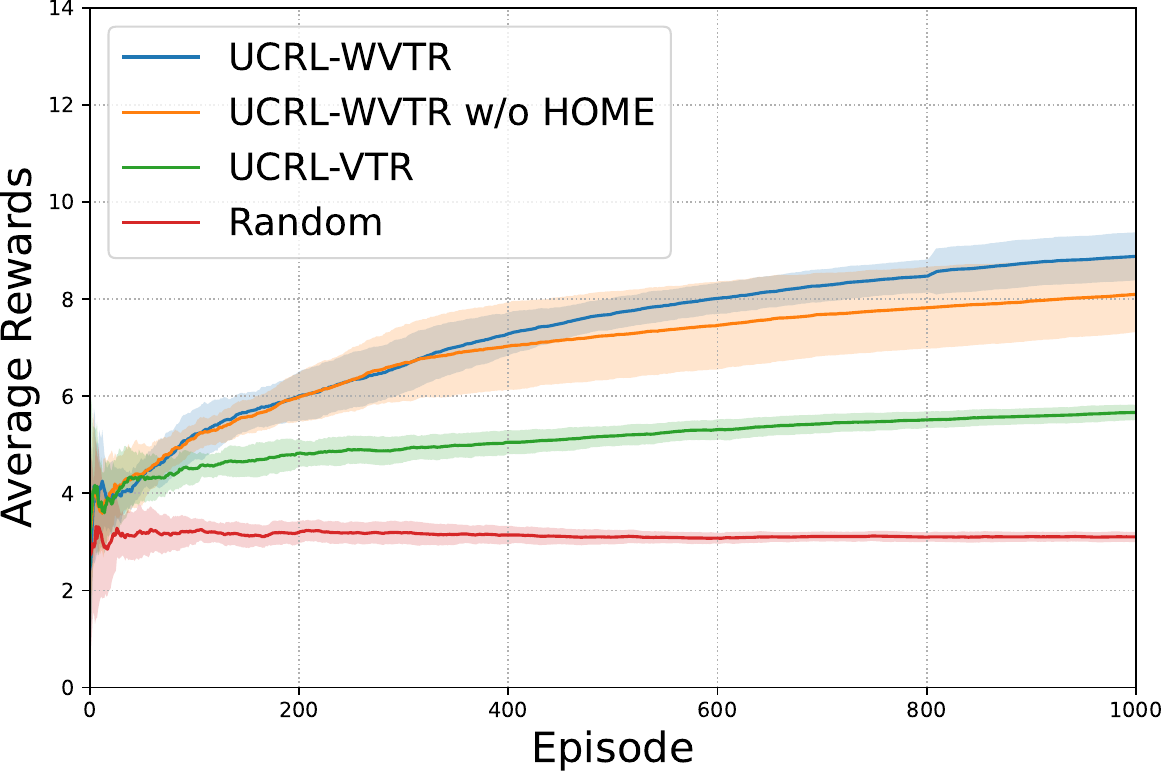}}
\subfigure[Average Rewards with $|S|=10$]{\includegraphics[width=0.32\textwidth]{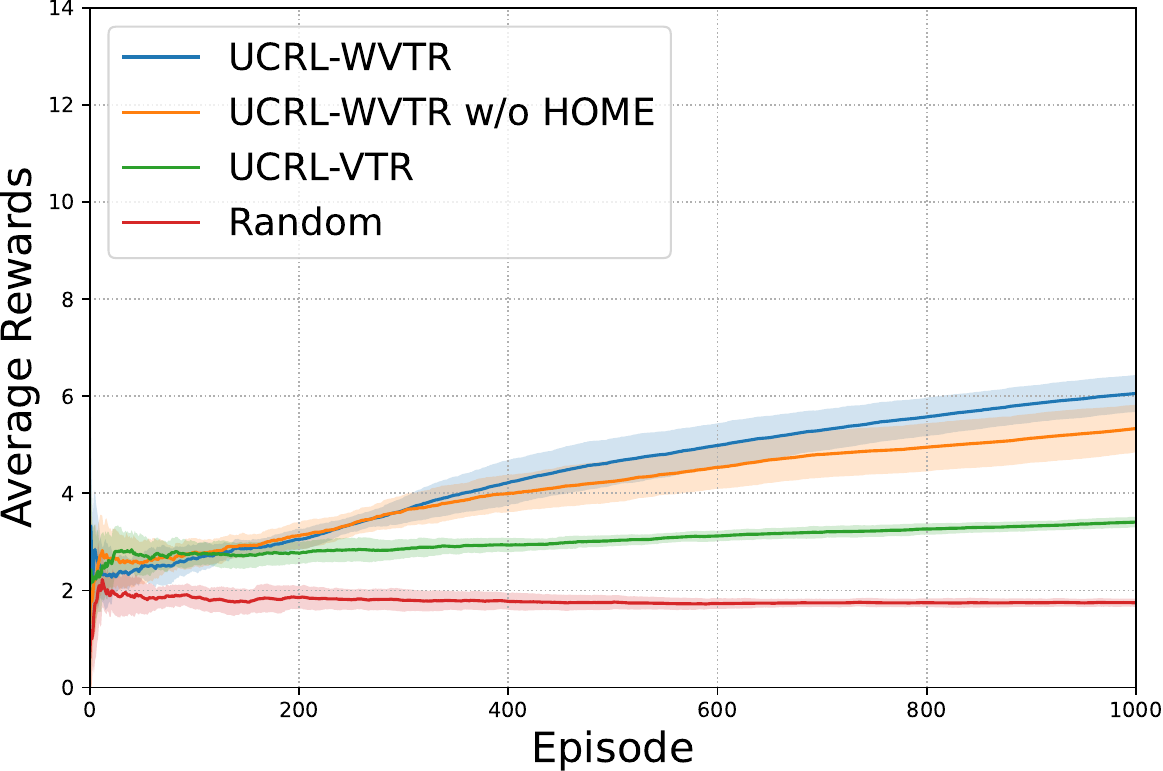}}
\caption{Regret and Average Rewards of $1000$ Episodes}
\label{fig:additional-exp}
\end{figure}

As shown in Figure~\ref{fig:additional-exp}, the columns correspond to RiverSwim with $|S|=6,8,10$, respectively. The first row shows the cumulative regret of each algorithm, and the second row represents average rewards within a single episode. We generate $10$ independent paths for each algorithm, and show the average values plus or minus the standard deviation.

As expected, our proposed algorithm $\algomdp$ outperforms UCRL-VTR, highlighting the effectiveness of weighted regression. It is worth noting that excluding the high-order moment estimator leads to a slightly inferior performance, thus demonstrating its practical value. While it introduces additional uncertainty in estimating higher-order moments, as clearly demonstrated by our results, the extra information from these variance estimations can leverage historical data more effectively, thereby accelerating the learning process. In summary, our experimental results provide strong support for and further reinforce our theoretical conclusions.

\ifarxiv
\section{Computing Uncertainty via the Regression Oracle}
\else
\section{COMPUTING UNCERTAINTY VIA THE REGRESSION ORACLE}
\fi
\label{sec:uncertainty-compute}

In this section, we introduce how to efficiently compute uncertainty $\cD_\cF$ in Definition~\ref{def:general-eluder-RL}. The high-level idea is to decompose the task into solving a series of weighted least squares regression problems, whose solutions are directly available by the regression oracle defined in Assumption~\ref{ass:regression-oracle}. Recall
\begin{equation}
\label{eq:uncertainty-compute-1}
    D^2_\cF(X_t;X_{[t-1]}, \sigma_{[t-1]}) = \sup_{f_1,f_2\in\cF}\frac{\left(f_1(X_t)-f_2(X_t)\right)^2}{\sum_{s=1}^{t-1}\frac{1}{\sigma_s^2}\left(f_1(X_{s})-f_2(X_s)\right)^2+\lambda}.
\end{equation}
We denote $\cG:=\cF-\cF$. Assuming the optimal value of \eqref{eq:uncertainty-compute-1} is attained at $f_1^*,f_2^*$, let $g_*=f_1^*-f_2^*\in\cG$, we claim that $g_*$ is the solution to the following constrained optimization problem:
\begin{equation}
\label{eq:uncertainty-compute-2}
    \max_{g\in\cG} g(X_t) \quad\text{ s.t. }\sum_{s=1}^{t-1} \frac{1}{\sigma_s^2}g^2(X_s) \le \beta^*,
\end{equation}
where $\beta^* = \sum_{s=1}^{t-1} \frac{1}{\sigma_s^2}g_*^2(X_s)$ is unknown in prior. The claim can be easily proved by contradiction. If the solution to \eqref{eq:uncertainty-compute-2} is $\tilde{g}\neq g_*$, then we have $\tilde{g}(X_t)\ge g_*(X_t)$ while $\sum_{s=1}^{t-1} \frac{1}{\sigma_s^2}\tilde{g}^2(X_s) \le \beta^* = \sum_{s=1}^{t-1} \frac{1}{\sigma_s^2}g_*^2(X_s)$, which yields
\[
    \frac{\tilde{g}^2(X_t)}{\sum_{s=1}^{t-1}\frac{1}{\sigma_s^2}\tilde{g}^2(X_s)+\lambda} \ge \frac{g_*^2(X_t)}{\sum_{s=1}^{t-1}\frac{1}{\sigma_s^2}g_*^2(X_s)+\lambda},
\]
a contradiction. Inspired by \citet{foster2018practical}, we then transform the problem \eqref{eq:uncertainty-compute-2} into solving the following weighted regression problem
\[
    \min_{g\in\cG} \sum_{s=1}^t v_s (g(X_s)-Y_s)^2,
\]
where $v_s = 1/\sigma_s^2$ for $s\in[t-1]$, $v_t\in\RR^+$ is some constant unknown to the agent, $Y_s=0$ for $s\in[t-1]$ and $Y_t=1$. The solution is available with access to a regression oracle. We leverage a doubling trick for $\beta^*$ and a binary search technique for the proper value of $v_t$. See Algorithm~\ref{algo:uncertainty} for a detailed description.

\begin{algorithm}[t]
\caption{Computing Uncertainty}
\label{algo:uncertainty}
\begin{algorithmic}[1]
\REQUIRE Function class $\cF$, dataset $\{X_{[t]},\sigma_{[t-1]}\}$, $\lambda$, upper bound $\cB$, precision $\epsilon$ and a regression oracle
\ENSURE The estimated uncertainty $\tilde{\cD}$
\STATE Denote $R(g,v):=\sum_{s=1}^{t-1} v_s (g(X_s)-Y_s)^2 + v (g(X_t)-1)^2$
\STATE Set $\cG \leftarrow \cF-\cF,\bar\beta \leftarrow 2\lambda,\tilde{\cD} \leftarrow 0$
\WHILE{$\bar\beta < 2\cB$}
    \STATE Set $\beta \leftarrow \bar\beta-\lambda$
    \STATE Set $v_L \leftarrow 0$, $v_H \leftarrow 2\beta/\epsilon$
    \STATE Set $g_L \leftarrow 0$, $g_H \leftarrow \argmin_{g\in\cG}R(g,v_H)$
    \STATE Set $z_L \leftarrow 0$, $z_H \leftarrow g_H(X_t)$
    \STATE Set $\Delta \leftarrow \epsilon\beta/4$
    \WHILE{$|z_H-z_L|>\epsilon$ and $|v_H-v_L|>\Delta$}
        \STATE Set $\tilde{v} \leftarrow (v_H+v_L)/2$
        \STATE Set $\tilde{g} \leftarrow \argmin_{g\in\cG}R(g,\tilde{v})$
        \STATE Set $\tilde{z} \leftarrow \tilde{g}(X_t)$
        \IF{$\sum_{s=1}^{t-1}\frac{1}{\sigma_s^2}\tilde{g}^2(X_s)>\beta$}
            \STATE $v_H \leftarrow \tilde{v}$, $z_H \leftarrow \tilde{g}(X_t)$
        \ELSE
            \STATE $v_L \leftarrow \tilde{v}$, $z_L \leftarrow \tilde{g}(X_t)$
        \ENDIF
    \ENDWHILE
    \STATE Set $\tilde{\cD} \leftarrow \max\cbr{\tilde{\cD},\tilde{g}(X_t) / \sqrt{\sum_{s=1}^{t-1}\frac{1}{\sigma_s^2}\tilde{g}^2(X_s)+\lambda}}$
    \STATE Set $\bar\beta = 2\bar\beta$
\ENDWHILE
\end{algorithmic}
\end{algorithm}

\begin{proposition}
\label{prop:uncertainty-compute}
Consider the following optimization problem for solving the uncertainty
\[
    \max_{f_1,f_2\in\cF} \frac{\left(f_1(X_t)-f_2(X_t)\right)^2}{\sum_{s=1}^{t-1}\frac{1}{\sigma_s^2}\left(f_1(X_{s})-f_2(X_s)\right)^2+\lambda}.
\]
By definition, the optimal value is $\cD^2_\cF(X_t;X_{[t-1]}, \sigma_{[t-1]})$, and we assume its solution is $g^*=f_1^*-f_2^*$. For any $\epsilon>0$, we run Algorithm~\ref{algo:uncertainty} to solve the problem above with $\cB$ satisfying $\cB\ge\sum_{s=1}^{t-1} \frac{1}{\sigma_s^2}g_*^2(X_s) + \lambda$. If the function class $\cF$ is convex and closed under point-wise convergence, then Algorithm~\ref{algo:uncertainty} terminates within $O(\log(\cB/\lambda)\log(1/\epsilon))$ calls to the regression oracle and returns $\tilde{\cD}$ such that
\[
    \tilde{\cD}-\epsilon/\sqrt{\lambda} \le D_\cF(X_t;X_{[t-1]}, \sigma_{[t-1]}) \le \sqrt{2}\tilde{\cD}+\epsilon/\sqrt{\lambda}.
\]
\end{proposition}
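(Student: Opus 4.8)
The plan is to analyze Algorithm~\ref{algo:uncertainty} in two layers: the inner binary search over the weight $v_t$, and the outer doubling loop over the budget $\bar\beta$. The key structural fact, already established in the discussion preceding the proposition, is that the uncertainty equals $g_*(X_t)/\sqrt{\beta^*+\lambda}$ where $\beta^*=\sum_{s=1}^{t-1}\sigma_s^{-2}g_*^2(X_s)$ and $g_*$ solves the constrained problem \eqref{eq:uncertainty-compute-2}. First I would record the monotonicity that makes binary search valid: for the oracle solution $g_{\tilde v}=\argmin_{g\in\cG}R(g,\tilde v)$, the penalized residual $\sum_{s=1}^{t-1}\sigma_s^{-2}g_{\tilde v}^2(X_s)$ and the value $g_{\tilde v}(X_t)$ are nondecreasing in $\tilde v$ (larger weight on the $t$-th example forces $g$ to fit $Y_t=1$ more aggressively, at the cost of a larger residual on the first $t-1$ points); this follows from a standard first-order/exchange argument using convexity and closedness of $\cG=\cF-\cF$. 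Consequently there is a threshold value $v^\dagger$ at which the constraint $\sum_{s=1}^{t-1}\sigma_s^{-2}g^2(X_s)\le\beta$ switches from slack to violated, and the binary search on $[v_L,v_H]$ brackets it.

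\textbf{Inner loop.} Next I would argue that when the outer loop reaches the first $\bar\beta$ with $\beta=\bar\beta-\lambda\ge\beta^*$, the binary search returns a $\tilde g$ that is $\epsilon$-accurate in the relevant sense. Two termination conditions can trigger: $|z_H-z_L|\le\epsilon$ or $|v_H-v_L|\le\Delta=\epsilon\beta/4$. In the former case the returned value $\tilde g(X_t)$ is within $\epsilon$ of the optimum $\max\{g(X_t):\sum\sigma_s^{-2}g^2(X_s)\le\beta\}$, which, since $\beta\ge\beta^*$, is at least $g_*(X_t)$; combined with the fact that the feasibility check guarantees $\sum_{s=1}^{t-1}\sigma_s^{-2}\tilde g^2(X_s)\le\beta$ when we land on the "else" branch, one gets $\tilde g(X_t)/\sqrt{\sum\sigma_s^{-2}\tilde g^2(X_s)+\lambda}\ge g_*(X_t)/\sqrt{\beta+\lambda}-\epsilon/\sqrt\lambda$. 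In the latter case I would use a Lipschitz-type bound: because $v_H\le 2\beta/\epsilon$ initially and the oracle solution map $v\mapsto g_v(X_t)$ has its variation controlled by the weight range (quantitatively, a change $\Delta$ in $v$ changes the penalized objective by at most $\Delta$, hence changes $g_v(X_t)$ by at most $O(\sqrt{\Delta/\,?\,})$ — here one needs the $\beta$-strong-convexity of $g\mapsto\sum_{s\le t-1}\sigma_s^{-2}g^2(X_s)$ on the feasible set, giving $|z_H-z_L|\le \epsilon$ once $|v_H-v_L|\le\Delta=\epsilon\beta/4$), so both exits deliver the same $\epsilon$-guarantee. The number of inner iterations is $O(\log(v_H/\Delta))=O(\log(1/\epsilon))$ since $v_H/\Delta=8/\epsilon^2$ up to constants.

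\textbf{Outer loop and the final inequalities.} The outer loop doubles $\bar\beta$ from $2\lambda$; since it stops once $\bar\beta\ge 2\cB$ and $\cB\ge\beta^*+\lambda$ by hypothesis, it runs $O(\log(\cB/\lambda))$ times, giving the stated $O(\log(\cB/\lambda)\log(1/\epsilon))$ oracle-call bound. For correctness I would show: (i) every iterate satisfies $\tilde g(X_t)/\sqrt{\sum\sigma_s^{-2}\tilde g^2(X_s)+\lambda}\le \sqrt{2}\,D_\cF+\epsilon/\sqrt\lambda$, because $\tilde g\in\cG$ is (approximately) feasible for budget $\beta$, and any $g\in\cG$ with penalized residual $\le\beta$ has ratio $\le$ the true sup; the factor $\sqrt2$ absorbs the slack from the fact that $\tilde g$ may slightly exceed the budget by at most the precision, and from $\beta$ possibly being smaller than $\beta^*$ in early iterations (one bounds $\beta^*\le\beta\le 2\beta^*$ roughly at the "right" doubling level). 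Since $\tilde\cD$ is the max over iterations, $\tilde\cD\le\sqrt2 D_\cF+\epsilon/\sqrt\lambda$. (ii) At the doubling level where $\beta\in[\beta^*,2\beta^*]$ the inner loop produces the lower bound $\tilde g(X_t)/\sqrt{\cdots}\ge D_\cF-\epsilon/\sqrt\lambda$ by the inner-loop analysis above, so $\tilde\cD\ge D_\cF-\epsilon/\sqrt\lambda$. Rearranging (i) and (ii) gives exactly $\tilde\cD-\epsilon/\sqrt\lambda\le D_\cF\le\sqrt2\,\tilde\cD+\epsilon/\sqrt\lambda$.

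\textbf{Main obstacle.} The delicate step is the quantitative control of the inner binary search when it terminates via $|v_H-v_L|\le\Delta$ rather than $|z_H-z_L|\le\epsilon$: one must convert closeness in the dual weight $v$ into closeness of $g_v(X_t)$, which requires exploiting strong convexity of the penalized least-squares objective on the constraint set (the coefficient $\beta$ in $\Delta=\epsilon\beta/4$ is exactly the strong-convexity modulus in disguise) and ruling out pathological flatness of the oracle solution map. Handling the interaction between this precision parameter and the $\sqrt2$ slack in the final two-sided bound — i.e. checking that the constants chosen in the algorithm ($v_H=2\beta/\epsilon$, $\Delta=\epsilon\beta/4$) are consistent with the claimed error — is the part that needs the most care; everything else is routine monotonicity and geometric-series bookkeeping.
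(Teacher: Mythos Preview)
Your two-layer plan and the oracle-call count are fine, but the correctness argument has the $\sqrt{2}$ factor on the wrong side, and your claimed ``rearranging (i) and (ii)'' does not yield the stated inequalities. Observe that the left inequality $\tilde{\cD}-\epsilon/\sqrt{\lambda}\le D_\cF$ is the trivial direction: every $\tilde g$ the algorithm evaluates lies in $\cG=\cF-\cF$, and the ratio $\tilde g(X_t)/\sqrt{\sum_{s<t}\sigma_s^{-2}\tilde g^2(X_s)+\lambda}$ is, by definition of $D_\cF$ as a supremum over $\cG$, at most $D_\cF$; no budget argument or $\sqrt{2}$ slack is needed here. Your (i) introduces an unnecessary $\sqrt{2}$ and then your rearrangement of (i) yields only $D_\cF\ge(\tilde\cD-\epsilon/\sqrt\lambda)/\sqrt2$, which is strictly weaker than what is claimed. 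Conversely, the $\sqrt{2}$ is essential in the \emph{other} direction: at the doubling level $\alpha$ with $\lambda 2^\alpha\le\beta^*+\lambda\le\lambda 2^{\alpha+1}$, the feasible $g_{\alpha+1}$ has denominator bounded by $\sqrt{\lambda 2^{\alpha+1}}$, which can be as large as $\sqrt{2}\sqrt{\beta^*+\lambda}$, so one only obtains $\tilde\cD\ge D_\cF/\sqrt{2}-\epsilon/\sqrt\lambda$, i.e.\ $D_\cF\le\sqrt{2}\tilde\cD+\epsilon/\sqrt\lambda$. Your (ii) asserts $\tilde\cD\ge D_\cF-\epsilon/\sqrt\lambda$ without the $\sqrt{2}$, which is in general false.

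Separately, the ``main obstacle'' you identify---converting $|v_H-v_L|\le\Delta$ into $|z_H-z_L|\le\epsilon$ via strong convexity of $g\mapsto\sum_{s<t}\sigma_s^{-2}g^2(X_s)$---does not go through as written: this map is merely convex, not strongly convex, on a general function class (think of two distinct $g\in\cG$ that agree on $X_{[t-1]}$). The paper sidesteps this entirely by invoking Theorem~1 of \citet{foster2018practical}, which directly guarantees that the inner binary search returns an $\epsilon$-precision solution $g_\alpha$ to the constrained problem $\max_{g\in\cG}\{g(X_t):\sum_{s<t}\sigma_s^{-2}g^2(X_s)\le\lambda 2^\alpha-\lambda\}$ in $O(\log(1/\epsilon))$ oracle calls, under convexity and closedness of $\cG$. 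With that black box in hand, the remaining argument is the short doubling computation sketched above.
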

\begin{proof}
Note $\cG=\cF-\cF$ is convex due to the convexity of $\cF$.
Denote $g_\alpha$ as the $\epsilon$-precision solution of the following constrained optimization problem:
\begin{equation}
\label{eq:eq:uncertainty-compute-3}
    \max_{g\in\cG} g(X_t) \quad\text{ s.t. }\sum_{s=1}^{t-1} \frac{1}{\sigma_s^2}g^2(X_s) \le \lambda 2^\alpha-\lambda.
\end{equation}
Then according to Theorem~1 of \citet{foster2018practical}, $g_\alpha$ is computed within $O(\log(1/\epsilon))$ time, and the estimated uncertainty $\tilde{\cD}$ returned by Algorithm~\ref{algo:uncertainty} is actually the maximum of the following quantities over $\alpha \in \{0,1,\dots,\lceil\log_2(\cB/\lambda)\rceil\}$, as shown in \eqref{eq:eq:uncertainty-compute-4}.
\begin{equation}
\label{eq:eq:uncertainty-compute-4}
    \tilde{\cD} = \max_{\alpha} \frac{g_\alpha(X_t)}{\sqrt{\sum_{s=1}^{t-1} \frac{1}{\sigma_s^2} g_\alpha^2(X_s)+\lambda}}.
\end{equation}
Due to the definition of $\cB$, there exists $\alpha \in \{0,1,\dots,\lceil\log_2(\cB/\lambda)\rceil-1\}$, such that $\lambda2^\alpha \le \beta^*+\lambda \le \lambda2^{\alpha+1}$. On one hand,
\begin{align*}
    \cD_\cF(X_t;X_{[t-1]}, \sigma_{[t-1]}) &= \frac{g_*(X_t)}{\sqrt{\sum_{s=1}^{t-1}\frac{1}{\sigma_s^2}g_*^2(X_s) + \lambda}} = \frac{g_*(X_t)}{\sqrt{\beta^*+\lambda}} \le \frac{g_*(X_t)}{\sqrt{\lambda2^\alpha}} = \sqrt{2} \frac{g_*(X_t)}{\sqrt{\lambda2^{\alpha+1}}}\\
    &\overset{(a)}{\le} \sqrt{2}\frac{g_{\alpha+1}(X_t)}{\sqrt{\sum_{s=1}^{t-1} \frac{1}{\sigma_s^2} g_{\alpha+1}^2(X_s)+\lambda}} + \frac{\epsilon}{\sqrt{\lambda2^\alpha}} \overset{(b)}{\le} \sqrt{2}\tilde{D}+\frac{\epsilon}{\sqrt{\lambda}},
\end{align*}
where $(a)$ holds due to the definition of $g_{\alpha+1}$ in \eqref{eq:eq:uncertainty-compute-3} and $(b)$ holds due to \eqref{eq:eq:uncertainty-compute-4}.
On the other hand, $\cD_\cF(X_t;X_{[t-1]}, \sigma_{[t-1]}) \ge \tilde{D} - \epsilon/\sqrt{\lambda}$ due to the definition of $\cD_\cF$. The proof is completed.
\end{proof}
\begin{corollary}
To invoke Algorithm~\ref{algo:uncertainty} in $\algomdp$, we set $\cB=KH/\sigmamin^2+\lambda$, $\epsilon=\sqrt{\lambda}/(KH)$, then the estimated uncertainty $\tilde{\cD}$ can be derived within $\tilde{O}(1)$ calls to the regression oracle such that $\tilde{\cD} - 1/(KH) \le \cD_\cF \le \sqrt{2}\tilde{\cD} + 1/(KH)$. The error in terms of the estimated uncertainty $\cD_\cF$ only enlarges the regret bound by at most constant terms.
\end{corollary}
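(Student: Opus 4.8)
The plan is to instantiate Proposition~\ref{prop:uncertainty-compute} with the stated choices $\cB=KH/\sigmamin^2+\lambda$ and $\epsilon=\sqrt{\lambda}/(KH)$, and then verify two things: (i) that $\cB$ is a valid upper bound on the quantity $\sum_{s=1}^{t-1}\frac{1}{\sigma_s^2}g_*^2(X_s)+\lambda$ that appears in the hypothesis of the proposition, uniformly over all calls to Algorithm~\ref{algo:uncertainty} made inside $\algomdp$; and (ii) that the resulting oracle complexity $O(\log(\cB/\lambda)\log(1/\epsilon))$ is $\tilde O(1)$ under these choices. For (i), I would note that every invocation of the uncertainty routine in $\algomdp$ is on a dataset of the form $\{z_{i,h,m},\barsigma_{i,h,m}\}$ with at most $KH$ entries, that the optimizing difference $g_*=f_1^*-f_2^*$ has range bounded by a constant (both $f_1^*,f_2^*$ map into $[0,L]$ with $L=O(1)$ since all the $V_{k,h}^{2^m}$ are bounded by $1$), and that each weight satisfies $1/\barsigma_{i,h,m}^2\le 1/\sigmamin^2$ by Line~\ref{line:barsigma} of Algorithm~\ref{algo:variance}. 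Hence $\sum_{s}\frac{1}{\sigma_s^2}g_*^2(X_s)\le KH\cdot O(1)/\sigmamin^2$, and absorbing constants (or, cleanly, taking the range bound to be $1$ after the normalization used in the algorithm) gives $\sum_s\frac{1}{\sigma_s^2}g_*^2(X_s)+\lambda\le KH/\sigmamin^2+\lambda=\cB$, as required.

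For the complexity claim, substitute: $\log(\cB/\lambda)=\log(1+KH/(\lambda\sigmamin^2))$, and recall the parameter settings from Theorem~\ref{thm:mdp-regret}, namely $\lambda=\log\cN_\cF$ and $\sigmamin^2=\sqrt{\dim_\cF\log\cN_\cF}/(KH)$, so $KH/(\lambda\sigmamin^2)=(KH)^2/(\log\cN_\cF\sqrt{\dim_\cF\log\cN_\cF})$, which is polynomial in $K,H,\dim_\cF,\log\cN_\cF$; therefore $\log(\cB/\lambda)=\tilde O(1)$. Likewise $\log(1/\epsilon)=\log(KH/\sqrt{\lambda})=\tilde O(1)$. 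Multiplying, the number of oracle calls is $\tilde O(1)$, matching the computational-complexity bookkeeping in Section~\ref{sec:computational-complexity}. Then the accuracy guarantee of Proposition~\ref{prop:uncertainty-compute} reads $\tilde\cD-\epsilon/\sqrt\lambda\le\cD_\cF\le\sqrt2\,\tilde\cD+\epsilon/\sqrt\lambda$ with $\epsilon/\sqrt\lambda=1/(KH)$, which is exactly the displayed sandwich $\tilde\cD-1/(KH)\le\cD_\cF\le\sqrt2\,\tilde\cD+1/(KH)$.

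It remains to argue that replacing the exact $\cD_\cF$ by $\tilde\cD$ throughout $\algomdp$ costs only constant additive terms in the regret. Here I would trace where $\cD_\cF$ enters: in the bonus $\hatbeta_k\cD_\cF$ of Line~\ref{line:q-function}, in the error term $E_{k,h,m}$ of Line~\ref{line:Ekhm}, and in the variance floor $\gamma^2\cD_\cF$ of Line~\ref{line:barsigma}. The additive slack $1/(KH)$, summed over the $KH$ step-episode pairs and multiplied by the (polylogarithmic) confidence radius $\hatbeta_k$ and constants $\gamma^2$, contributes a total of $\tilde O(1)$ to the regret, which is dominated by the $\dim_\cF\log\cN_\cF$ term already present in \eqref{eq:regret}. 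The multiplicative factor $\sqrt2$ is absorbed by rescaling $\hatbeta_k$ and $\gamma$ by a constant, which changes neither the form of the concentration bound in Theorem~\ref{thm:wls} nor the regret bound up to constants. The main obstacle is really the bookkeeping in this last paragraph --- making sure the $\sqrt2$ distortion is harmless simultaneously in the optimism argument (it must not break the upper-confidence property) and in the variance-estimation recursion (the floor still upper-bounds the relevant uncertainty), rather than any single hard inequality; the proposition itself does the analytic heavy lifting.
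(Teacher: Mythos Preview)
Your proposal is correct and follows exactly the intended route: the paper states this corollary without an explicit proof, treating it as an immediate instantiation of Proposition~\ref{prop:uncertainty-compute} with the given $\cB$ and $\epsilon$. Your verification of the upper-bound requirement on $\cB$, the $\tilde O(1)$ oracle count, the $\epsilon/\sqrt\lambda=1/(KH)$ substitution, and the constant-overhead argument for the regret are all the natural steps the paper leaves implicit.
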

\begin{remark}
It is noteworthy that \citet{kong2021online} adopted a similar idea to compute the sensitivity score defined therein via the regression oracle. However, their approach is not directly applicable to ours, since (i) They consider the unweighted regression setting, which is a special case of ours by setting $\sigma_{[t]}=\one$. (ii) The sensitivity score in their work is used for constructing a dataset that approximates historical samples and for applying a low-switching updating scheme, while we use uncertainty to directly update the value function.
\end{remark}

\ifarxiv
\section{Comparisons with Previous Work}
\else
\section{COMPARISONS WITH PREVIOUS WORK}
\fi
\label{sec:comparison-zhou2022}

We make some comparisons with \citet{zhou2022computationally}, with emphasis on why their approaches for linear mixture MDPs defined in Assumption~\ref{ass:linear-mixture-mdp} can not be extended to RL with general function approximation.

First, at the core of their analysis is a Bernstein-style concentration bound (Theorem~4.3 in their work) for the deviation $\|\bmu_t-\bmu^*\|_{\bZ_t}$, where $\bmu_t$ is the solution of least squares regression with (weighted) data $\{\bx_s\}_{s\in[t]}$ and targets $\{y_s\}_{s\in[t]}$, $\bZ_t=\sum_{s=1}^t \bx_s\bx_s^\top+\lambda\bI$. Assuming a filtration $\{\cG_t\}_{t\ge 1}$, here $\bx_t$ is $\cG_{t-1}$-measurable, and $y_t=\dotp{\btheta^*}{\bx_t}+\eta_t$ with noise $\eta_t\in\cG_{t}$. Plugging in the analytical form of the estimate $\bmu_t=\bZ_t^{-1}\sum_{s=1}^t \bx_s y_s$, the following inequality holds
\begin{equation}
\label{eq:zhou2022-self-normalized}
    \|\bmu_t-\bmu^*\|_{\bZ_t} \le \Big\|\sum_{s=1}^t \bx_s \eta_s\Big\|_{\bZ_t^{-1}} + \sqrt{\lambda}\|\bmu^*\|_2.
\end{equation}
Denote $\bd_t=\sum_{s=1}^t \bx_s \eta_s$, $Z_t=\|\bd_t\|_{\bZ_t^{-1}}$. They then decompose $\|\sum_{s=1}^t \bx_s \eta_s\|_{\bZ_t^{-1}}^2 = Z_t^2$ into two bounded martingale difference sequences as in \eqref{eq:zhou2022-decomposition} and apply Freedman's inequality for deriving high probability bounds of these two terms separately.
\begin{equation}
\label{eq:zhou2022-decomposition}
    Z_t^2 \le \sum_{s=1}^t\frac{2\eta_s\bx_s^\top\bZ_{s-1}^{-1}\bd_{s-1}}{1+w_s^2} + \sum_{s=1}^t\frac{\eta_s^2w_s^2}{1+w_s^2},
\end{equation}
where $w_t = \|\bx_t\|_{\bZ_{t-1}^{-1}}$. Such a self-normalized expression in \eqref{eq:zhou2022-self-normalized} holds thanks to the analytical solution $\bmu_t$ of weighted linear regression, and a decomposition in \eqref{eq:zhou2022-decomposition} heavily depends on the linear structure of the estimate $\bmu_t=\bZ_t^{-1}\sum_{s=1}^t \bx_s y_s$. Regrettably, these properties no longer hold in non-linear settings, necessitating a new approach for providing a tight concentration bound on the solution to least squares regression as shown in Theorem~\ref{thm:wls}. We refer the readers to its proof in Appendix~\ref{proof:wls} for details.

Second, once deriving a tight estimate of model with $\|\hat{\btheta}_k-\btheta^*\|_{\hat{\bSigma}_{k}} \le \hatbeta_k$, given any state-action pair $(s,a)$ and the next-state value function $V$, one can naturally construct an upper bound of the next-state value function evaluated on $(s,a)$ by explicitly adding a bonus of the following form
\begin{equation}
\label{eq:zhou2022-bonus}
    |\hatf_{k}(s,a,V)-f_*(s,a,V)| = \|\dotp{\bphi_V(s,a)}{\hat{\btheta}_k-\btheta_*}\| \le \hatbeta_k \|\bphi_V(s,a)\|_{\hat{\bSigma}_{k}^{-1}},
\end{equation}
where $\hat{\bSigma}_k = \sum_{i=1}^{k-1}\sum_{h=1}^H \frac{1}{\barsigma_{i,h}^2} \bphi_{i,h}\bphi_{i,h}^\top + \lambda\bI$ with $\bphi_{k,h} = \bphi_{V_{k,h+1}}(s^k_h,a^k_h)$. Then, adopting the policy that selecting action with the maximum value function, and using the standard technique, one can approximately bound the $K$-episode regret as
\begin{equation}
\label{eq:zhou2022-regret}
    \mathrm{Regret}(K)
    \lesssim \hatbeta_{K} \sum_{k=1}^K\sum_{h=1}^H \|\bphi_{k,h}\|_{\hat{\bSigma}_{k}^{-1}}
    \lesssim \hatbeta_{K} \sqrt{\sum_{k=1}^K\sum_{h=1}^H \barsigma_{k,h}^2} \sqrt{\sum_{k=1}^K\sum_{h=1}^H \nbr{\frac{\bphi_{k,h}}{\barsigma_{k,h}}}_{\hat{\bSigma}_{k}^{-1}}^2}.
\end{equation}
Here $\sum_{k=1}^K\sum_{h=1}^H \nbr{\frac{\bphi_{k,h}}{\barsigma_{k,h}}}_{\hat{\bSigma}_{k}^{-1}}^2$ can be easily bounded by $\tilde{O}(d)$ by elliptical potential lemma \citep{abbasi2011improved}. However, in the realm of RL with general function approximation, the bonus term does not enjoy such a simple form as in \eqref{eq:zhou2022-bonus}, and the elliptical potential lemma is not available. To this end, we define the bonus as an optimization problem over the general function class, where an analytical solution is generally not attainable. To overcome this obstacle, we further introduce a computationally efficient algorithm for computing uncertainty for the bonus term via a regression oracle, thus achieving computational efficiency despite the non-linear structure. We then carefully bound the regret with respect to the generalized Eluder dimension. Please see our definition of the uncertainty and the generalized Eluder dimension in Definition~\ref{def:general-eluder-RL}, and computational complexity in Section~\ref{sec:computational-complexity}.

Furthermore, they leverage a high-moment estimator for a tight upper bound of $\sum_{k=1}^K\sum_{h=1}^H \barsigma_{k,h}^2$ in \eqref{eq:zhou2022-regret}. More specifically, they estimate the variances of the $m$-th moment of the next-state value functions up to $M=O(\log(KH))$ levels, and recursively establish the relationships between quantities of MDPs of these levels. Consequently, they obtain an upper bound of the variance term with only log-polynomial dependence on $H$, thus ultimately achieving a horizon-free regret bound. However, these lemmas with respect to the higher-order expansions of MDPs may not hold in the context of general function approximation. Despite the inherent complexities posed by the non-linear function class, we have extended these methodologies to achieve a horizon-free regret bound with log-polynomial dependence on the planning horizon $H$. Significantly, our refined analysis also establishes a connection with certain problem-specific quantities, consequently rendering the regret bound of $\algomdp$ instance-dependent. Comprehensive elaboration on this achievement is provided in Section~\ref{sec:mdp-regret}.

\ifarxiv
\section{Proofs for Linear mixture MDPs}
\else
\section{PROOFS FOR LINEAR MIXTURE MDPS}
\fi
\label{proof:linear}

\subsection{Proof of Proposition~\ref{prop:linear-eluder}}
\label{proof:linear-eluder}
\begin{proof}[Proof of Proposition~\ref{prop:linear-eluder}]
We first bound the covering number, then come to the generalized Eluder dimension.

\paragraph{Covering Number}
For any $f\in\cF$, we have for any $(s,a,V)$,
\begin{align}
    f(s,a,V) &= [\PP V](s,a) = \sum_{s'\in\cS}\dotp{\btheta^*}{\bphi(s'|s,a)}V(s') \notag \\
    &= \dotp{\btheta^*}{\sum_{s'\in\cS}\bphi(s'|s,a)V(s')} = \dotp{\btheta^*}{\bphi_V(s,a)}, \label{eq:linear-representation}
\end{align}
where $\|\btheta^*\|_2\le B$. This implies $f$ is a linear function of $\bphi_V(s,a)$ and the $\epsilon$-log-covering number of $\cF$ follows from standard covering-number arguments.

\paragraph{Generalized Eluder Dimension}
First, we show uncertainty $\cD_\cF$ has an analytic expression for linear mixture MDPs in the following lemma.

\begin{lemma}
For $d$-dimensional, $B$-bounded linear mixture MDPs defined in Assumption~\ref{ass:linear-mixture-mdp}, given any $\bX=\{(s_t,a_t,V_t)\}_{t\in[T]}$, $\bsigma=\{\sigma_t\}_{t\in[T]}$, we have
\[
    \cD_\cF(X_t;X_{[t-1]},\sigma_{[t-1]})=\|\bphi_t\|_{\bSigma_{t-1}},
\]
where $\bphi_t=\bphi_{V_t}(s_t,a_t)$ and $\bSigma_t=\sum_{s=1}^t \frac{1}{\sigma_s^2}\bphi_s\bphi_s^\top + \lambda/B\cdot\bI$ for all $t\in[T]$.
\end{lemma}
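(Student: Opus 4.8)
The plan is to exploit the linear representation of $\cF$ already established in the covering-number part of the proof and to reduce the definition of $\cD_\cF$ to a generalized Rayleigh quotient. By \eqref{eq:linear-representation}, every $f\in\cF$ has the form $f(s,a,V)=\dotp{\btheta_f}{\bphi_V(s,a)}$ with $\|\btheta_f\|_2\le B$, so for $f_1,f_2\in\cF$ the difference $f_1-f_2$ is the linear functional $\dotp{\bmu}{\cdot}$ indexed by $\bmu:=\btheta_{f_1}-\btheta_{f_2}$, and as $f_1,f_2$ range over $\cF$ this $\bmu$ ranges over the Euclidean ball of radius $2B$ (any such $\bmu$ being realized by $\btheta_{f_1}=\bmu/2$, $\btheta_{f_2}=-\bmu/2$). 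Writing $\bphi_s=\bphi_{V_s}(s_s,a_s)$, the definition of $D^2_\cF$ then becomes
\[
    D^2_\cF(X_t;X_{[t-1]},\sigma_{[t-1]})=\sup_{\|\bmu\|_2\le 2B}\frac{\dotp{\bmu}{\bphi_t}^2}{\sum_{s=1}^{t-1}\frac{1}{\sigma_s^2}\dotp{\bmu}{\bphi_s}^2+\lambda}.
\]

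First I would observe that this ratio is monotone along rays: with $a=\dotp{\bmu}{\bphi_t}^2$ and $b=\sum_{s}\frac1{\sigma_s^2}\dotp{\bmu}{\bphi_s}^2$, the map $r^2\mapsto r^2a/(r^2b+\lambda)$ has nonnegative derivative $a\lambda/(r^2b+\lambda)^2$, so the supremum is attained on the sphere $\|\bmu\|_2=2B$. On that sphere one has $\lambda=\frac{\lambda}{4B^2}\|\bmu\|_2^2$, hence the denominator equals $\bmu^\top\bSigma_{t-1}\bmu$ with $\bSigma_{t-1}:=\sum_{s=1}^{t-1}\frac1{\sigma_s^2}\bphi_s\bphi_s^\top+\frac{\lambda}{4B^2}\bI$, which is exactly the matrix claimed in the statement up to the constant multiplying $\lambda\bI$ (an artifact of the normalization conventions for $\cF$, and in any case immaterial to the final regret by at most a constant factor); the $\bI$ term also makes $\bSigma_{t-1}$ positive definite and therefore invertible. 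Since $\dotp{\bmu}{\bphi_t}^2/(\bmu^\top\bSigma_{t-1}\bmu)$ is scale invariant, the supremum over the sphere equals the supremum over all $\bmu\neq0$, and by Cauchy–Schwarz in the inner product induced by $\bSigma_{t-1}$, $\dotp{\bmu}{\bphi_t}^2=(\dotp{\bSigma_{t-1}^{1/2}\bmu}{\bSigma_{t-1}^{-1/2}\bphi_t})^2\le(\bmu^\top\bSigma_{t-1}\bmu)\,(\bphi_t^\top\bSigma_{t-1}^{-1}\bphi_t)$, with equality at $\bmu\propto\bSigma_{t-1}^{-1}\bphi_t$. Taking square roots yields $\cD_\cF(X_t;X_{[t-1]},\sigma_{[t-1]})=(\bphi_t^\top\bSigma_{t-1}^{-1}\bphi_t)^{1/2}=\|\bphi_t\|_{\bSigma_{t-1}^{-1}}$, reading the lemma's $\|\bphi_t\|_{\bSigma_{t-1}}$ as $\|\bphi_t\|_{\bSigma_{t-1}^{-1}}$, which is plainly what the analytic form of the uncertainty intends.

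The only genuinely delicate point is the first reduction, namely that the supremum over admissible parameter differences $\btheta_{f_1}-\btheta_{f_2}$ coincides with the supremum over the full ball of radius $2B$ — equivalently, that the extremal direction $\bmu\propto\bSigma_{t-1}^{-1}\bphi_t$, rescaled to norm $2B$, is realizable. This is immediate when $\cF$ is taken to be precisely the class of linear functionals indexed by $\{\|\btheta\|_2\le B\}$, which is the natural model class here; if $\cF$ is only a proper subset, the argument still delivers the inequality $D^2_\cF(X_t)\le\|\bphi_t\|_{\bSigma_{t-1}^{-1}}^2$, and it is exactly this direction that is needed to derive the generalized-Eluder-dimension bound in Proposition~\ref{prop:linear-eluder} (via the elliptical potential / determinant–trace lemma applied to $\sum_t\min\{1,\tfrac1{\sigma_t^2}\|\bphi_t\|_{\bSigma_{t-1}^{-1}}^2\}$), together with the standard $\epsilon$-covering estimate for the log-covering number. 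Everything past this reduction is routine linear algebra, so I do not anticipate further obstacles.
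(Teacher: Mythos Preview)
Your proposal is correct and follows essentially the same route as the paper: exploit the linear representation \eqref{eq:linear-representation}, rewrite the uncertainty as a ratio $\dotp{\bmu}{\bphi_t}^2/(\bmu^\top\bSigma_{t-1}\bmu)$, and bound it by $\|\bphi_t\|_{\bSigma_{t-1}^{-1}}^2$ via Cauchy--Schwarz. You are in fact more careful than the paper, which only establishes the inequality $\cD_\cF\le\|\bphi_t\|_{\bSigma_{t-1}^{-1}}$ (with a slightly different and somewhat loose constant in front of $\bI$) and then asserts equality without verifying achievability; your ray-monotonicity and extremizer arguments fill that gap, and your observation that only the upper bound is needed downstream is exactly right.
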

\begin{proof}
For any $f_1,f_2\in\cF$, let $\btheta\in\RR^d$ satisfies for any $X=(s,a,V)$,
\[
    f_1(X)-f_2(X)=\dotp{\btheta}{\bphi_V(s,a)},
\]
where the linear representation of $f_1$ and $f_2$ follows from \eqref{eq:linear-representation}. Then we have
\[
    \frac{(f_1(X_t)-f_2(X_t))^2}{\sum_{s=1}^{t-1}\frac{1}{\sigma_s^2}(f_1(X_s)-f_2(X_s))^2 + \lambda}
    \overset{(a)}{\le} \frac{\dotp{\btheta}{\bphi_t}^2}{\sum_{s=1}^{t-1}\frac{1}{\sigma_s^2}\dotp{\btheta}{\bphi_s}^2+\frac{\lambda}{B}\|\btheta\|_2^2}
    \overset{(b)}{=}{} \frac{\dotp{\btheta}{\bphi_t}^2}{\|\btheta\|_{\bSigma_{t-1}}^2}
    \overset{(c)}{\le} \|\bphi_t\|_{\bSigma_{t-1}^{-1}}^2,
\]
where $(a)$ holds due to $\|\btheta\|_2\le B$, $(b)$ holds due to the definition of $\bSigma_{t-1}$ and $(c)$ holds due to Cauchy-Schwarz inequality. Thus $\cD_\cF(X_t;X_{[t-1]},\sigma_{[t-1]}) = \|\bphi_t\|_{\bSigma_{t-1}^{-1}}$ by Definition~\ref{def:general-eluder-RL}.
\end{proof}

Next, we come to the generalized Eluder dimension. We further assume for all $t\in[T]$, $\sigma_t \ge \sigmamin$, then it follows that
\[
    \sum_{t=1}^T \min\cbr{1,\frac{1}{\sigma_t^2}\cD_\cF(X_t;X_{[t-1]},\sigma_{[t-1]})}
    = \sum_{t=1}^T \min\cbr{1,\frac{1}{\sigma_t^2}\|\bphi_t\|_{\bSigma_{t-1}^{-1}}^2}
    \le 2d\log\rbr{1+\frac{TB}{d\lambda\sigmamin^2}},
\]
where the inequality holds due to Lemma~\ref{lem:elliptical}. Taking maximum of the LHS over $|\bX|=T, \bsigma\ge\sigmamin$ and the proof is completed by Definition~\ref{def:general-eluder-RL}.
\end{proof}

\ifarxiv
\section{Proof of Theorem~\ref{thm:wls}}
\else
\section{PROOF OF THEOREM~\ref{thm:wls}}
\fi
\label{proof:wls}
\begin{proof}[Proof of Theorem~\ref{thm:wls}]
Recall the definition of $\hat{f}_{t+1}$ in \eqref{eq:hatf}, which implies
\[
    \sum_{s=1}^t w_s^2(\hat{f}_{t+1}(X_s)-f_*(X_s))^2 \le 2 \sum_{s=1}^t w_s^2(\hat{f}_{t+1}(X_s)-f_*(X_s))(Y_s - f_*(X_s)).
\]
For any fixed $f\in\cF$, denote $E_s(f) = w_s^2(f(X_s)-f_*(X_s))(Y_s - f_*(X_s))$, which is a martingale difference sequence adapted to the filtration $\{\cG_s\}_{s\in[t]}$. Inspired by \citet{agarwal2023vo}, we are to give a high-probability bound of $\sum_{s=1}^t E_s(f)$ in terms of $\sum_{s=1}^t w_s^2(f(X_s)-f_*(X_s))^2$. Note $|w_s|\le W$ and $f(X_s),f_*(X_s),Y_s$ are bounded in $[0,L]$, thereby the expectation and summation of variances are upper bounded by
\begin{align*}
    |E_s(f)| \le W^2 L^2,\qquad
    \sum_{s=1}^t \EE[E_s^2(f)|\cG_{s-1}] \le \sigma^2 W^2 L^2 t.
\end{align*}
Simultaneously, the following problem-specific bounds hold:
\begin{align*}
    \max_{s\in[t]} |E_s(f)| &\overset{(a)}{\le} L \max_{s\in[t]} w_s^2 \cD_\cF(X_s;X_{[s-1]},1/w_{[s-1]}) \sqrt{\sum_{i=1}^{s-1} w_i^2(f(X_i)-f_*(X_i))^2 + \lambda} \\
    &\le L \max_{s\in[t]} w_s^2 \cD_\cF(X_s;X_{[s-1]},1/w_{[s-1]})\cdot\sqrt{\sum_{s=1}^{t-1} w_s^2(f(X_s)-f_*(X_s))^2 + \lambda},\\
    \sum_{s=1}^t \EE[E_s^2(f)|\cG_{s-1}] &\le \sigma^2\sum_{s=1}^t w_s^2(f(X_s)-f_*(X_s))^2,
\end{align*}
where $(a)$ holds due to the definition of $\cD_\cF$ in Definition~\ref{def:general-eluder-RL}. We denote $\cD_s = \cD_\cF(X_s;X_{[s-1]},1/w_{[s-1]})$ for short. Let $\epsilon>0$ be a constant and $\cV$ be a $\epsilon$-covering net of $\cF$. Applying Lemma~\ref{lem:freedman-variant} with $m=v=\sigma^2$, $\iota_t = 16\log\frac{2\cN_\cF(\epsilon)t^2 (\log(\sigma^2 W^2 L^2 t)+2) (\log(W^2 L^2)+2)}{\delta}$, together with a union bound over $f\in\cV$, with probability at least $1-\delta/(2t^2)$, we have for any fixed $t\ge1$ and all $f\in\cV$,
\begin{align*}
    2\sum_{s=1}^t E_s(f) &\le \sqrt{\iota_t}\cdot\sqrt{\sigma^2\sum_{s=1}^t w_s^2(f(X_s)-f_*(X_s))^2 + \sigma^4} \\
    &\qquad+ \iota_t \rbr{L \max_{s\in[t]} w_s^2 \cD_s\cdot\sqrt{\sum_{s=1}^{t-1} w_s^2(f(X_s)-f_*(X_s))^2 + \lambda} + \sigma^2} \\
    &\overset{(a)}{\le} \rbr{\sqrt{\iota_t}\sigma + \iota_t L \max_{s\in[t]} w_s^2 \cD_s} \sqrt{\sum_{s=1}^t w_s^2(f(X_s)-f_*(X_s))^2} + \sqrt{\lambda}\iota_t L \max_{s\in[t]} w_s^2 \cD_s + 2\iota_t\sigma^2 \\
    &\overset{(b)}{\le} \frac{1}{2} \sum_{s=1}^t w_s^2(f(X_s)-f_*(X_s))^2 + \frac{1}{2}\rbr{\sqrt{\iota_t}\sigma + \iota_t L \max_{s\in[t]} w_s^2 \cD_s}^2 \\
    &\qquad+ \frac{1}{2}\rbr{\iota_t L \max_{s\in[t]} w_s^2 \cD_s}^2 + \frac{1}{2}\lambda + 2\iota_t\sigma^2 \\
    &\overset{(c)}{\le} \frac{1}{2} \sum_{s=1}^t w_s^2(f(X_s)-f_*(X_s))^2 + \frac{1}{2}\rbr{\sqrt{\iota_t}\sigma + 2\iota_t L \max_{s\in[t]} w_s^2 \cD_s}^2 + \frac{1}{2}\lambda + 2\iota_t\sigma^2,
\end{align*}
where $(a)$ holds due to $\sqrt{a+b}\le\sqrt{a}+\sqrt{b}$ for any $a,b\ge0$ and $\iota_t\ge1$, $(b)$ holds due to $\sqrt{ab}\le a/2+b/2$ for any $a,b\ge0$, and $(c)$ holds due to $a^2+b^2\le\rbr{a+b}^2$ for any $a,b\ge0$.
Let $g\in\cV$ such that $\|g - \hat{f}_{t+1}\|_\infty \le \epsilon$, then
\begin{align*}
    &\sum_{s=1}^t w_s^2(\hat{f}_{t+1}(X_s)-f_*(X_s))^2\\
    \le{}& 2\sum_{s=1}^t w_s^2(\hat{f}_{t+1}(X_s)-f_*(X_s))(Y_s - f_*(X_s))\\
    \le{}& 2\sum_{s=1}^t w_s^2(g(X_s)-f_*(X_s))(Y_s - f_*(X_s)) + \frac{2Lt}{\sigmamin^2}\epsilon\\
    \le{}& \frac{1}{2} \sum_{s=1}^t w_s^2(g(X_s)-f_*(X_s))^2 + \frac{1}{2}\rbr{\sqrt{\iota_t}\sigma + 2\iota_t L \max_{s\in[t]} w_s^2 \cD_s}^2 + \frac{1}{2}\lambda + 2\iota_t\sigma^2 + \frac{2Lt}{\sigmamin^2}\epsilon\\
    \le{}& \frac{1}{2} \sum_{s=1}^t w_s^2(\hatf_{t+1}(X_s)-f_*(X_s))^2 + \frac{1}{2}\rbr{\sqrt{\iota_t}\sigma + 2\iota_t L \max_{s\in[t]} w_s^2 \cD_s}^2 + \frac{1}{2}\lambda + 2\iota_t\sigma^2 + \frac{3Lt}{\sigmamin^2}\epsilon.
\end{align*}
That is for any fixed $t>0$, we have
\begin{align*}
    \sum_{s=1}^t w_s^2(\hat{f}_{t+1}(X_s)-f_*(X_s))^2 &\le \rbr{\sqrt{\iota_t}\sigma + 2\iota_t L \max_{s\in[t]} w_s^2 \cD_s}^2 + \lambda + 4\iota_t\sigma^2 + \frac{6Lt}{\sigmamin^2}\epsilon \\
    &\le \rbr{3\sqrt{\iota_t}\sigma + 2\iota_t L \max_{s\in[t]} w_s^2 \cD_s + \sqrt{\lambda} + \sqrt{\frac{6Lt\epsilon}{\sigmamin^2}}}^2,
\end{align*}
where the second inequality holds due to $2\sqrt{ab}\le a+b$ and $a+b\le(\sqrt{a}+\sqrt{b})^2$ for any $a,b\ge0$.
Finally, the result holds through a union bound over all $t\ge1$ and the fact that $\sum_{t=1}^\infty \frac{1}{2t^2}\le1$.
\end{proof}

\ifarxiv
\section{Proof of Theorem~\ref{thm:mdp-regret}}
\else
\section{PROOF OF THEOREM~\ref{thm:mdp-regret}}
\fi
\label{proof:mdp}

We define filtration $\{\cG_{k,h}\}_{k\in[K],h\in[H]}$ as follows. Let $\cG_{k,h}$ be the $\sigma$-field generated by the state-action pairs up to $h$-th step and $k$-th episode. That is $\cG_{k,h} = \sigma(\{s^i_h,a^i_h\}_{(i,h)\in{[k-1]\times[H]}} \cup \{s^k_j,a^k_j\}_{j\in[h]})$. For constants $\sigmamin,\epsilon>0$, let $\cN_\cF=\cN_\cF(\epsilon)$, and $\dim_\cF:=\dim_\cF(\sigmamin,KH)$ be the generalized Eluder dimension in Definition~\ref{def:general-eluder-RL}.

\subsection{High Probability Events and Optimism}
\label{proof:event}

Let $\hat\cB_{k,m}$ denote the confidence region as follows:
\begin{equation}
\label{eq:hatcB}
    \hat\cB_{k,m} := \cbr{f:\sum_{i=1}^{k-1} \sum_{h=1}^H \frac{1}{\barsigma_{i,h,m}^2} (\hatf_{k,h,m}(z_{i,h,m})-f(z_{i,h,m}))^2 \le \hatbeta_k^2},
\end{equation}
where
\begin{equation}
\label{eq:hatbeta}
    \hatbeta_k = 3\sqrt{\iota_k} + 2\frac{\iota_k}{\gamma^2} + \sqrt{\lambda} + \sqrt{6kH\epsilon/\sigmamin^2}
\end{equation}
with $\iota_k = 16\log\frac{2\cN_\cF(\epsilon)k^2H^2 (\log(kH/\sigmamin^2)+2) (\log(1/\sigmamin^2)+2)}{\delta}$.
We define event $\cE$ as
\[
    \cE := \{f_* \in \hat\cB_{k,m}\text{ for all }k\in[K],m\in\seq{M}\}.
\]
The following lemmas hold.

\begin{lemma}
\label{lem:event-bonus}
For any $k\in[K],m\in[M]$, if $f_*\in\hat\cB_{k,m}$, we have for all $h\in[H]$,
\[
    |\hatf_{k,m}(z_{k,h,m}) - f_*(z_{k,h,m})| \le \hatbeta_{k}\cD_\cF(z_{k,h,m};D_{k,m}).
\]
Furthermore,
\begin{align*}
    V_{k,h}(s^k_h) - r_h(s^k_h,a^k_h) - [\PP V_{k,h+1}](s^k_h,a^k_h) &\le 2\min\{1,\hatbeta_k\cD_\cF(z_{k,h,0};D_{k,0})\}, \\
    |[\bar{\VV}_{k,h}V_{k,h+1}^{2^m}](s^k_h,a^k_h)-[\VV_{k,h}V_{k,h+1}^{2^m}](s^k_h,a^k_h)| &\le E_{k,h,m}.
\end{align*}
\end{lemma}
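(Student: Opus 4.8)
The plan is to prove Lemma~\ref{lem:event-bonus} in three stages, corresponding to its three displayed conclusions. Throughout I would work on the event $f_*\in\hat\cB_{k,m}$, so that $f_*$ lies in the confidence region \eqref{eq:hatcB}.

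First, for the pointwise deviation bound: fix $k,h,m$. By definition of $\hat\cB_{k,m}$, both $\hatf_{k,m}$ and $f_*$ satisfy $\sum_{i=1}^{k-1}\sum_{h'=1}^H \frac{1}{\barsigma_{i,h',m}^2}(\hatf_{k,m}(z_{i,h',m})-f_*(z_{i,h',m}))^2 \le \hatbeta_k^2$ (using that $\hatf_{k,m}$ is itself in the region trivially and $f_*$ is there by assumption; more precisely the region is centered at $\hatf_{k,m}$). Hence $f_1=\hatf_{k,m}$ and $f_2=f_*$ form an admissible pair for the supremum defining $D_\cF$ at the point $z_{k,h,m}$ with history $D_{k,m}$, after rescaling by $\hatbeta_k$. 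Concretely, $(\hatf_{k,m}(z_{k,h,m})-f_*(z_{k,h,m}))^2 \le D_\cF^2(z_{k,h,m};D_{k,m})\cdot\big(\sum \tfrac{1}{\barsigma^2}(\hatf_{k,m}-f_*)^2 + \lambda\big) \le D_\cF^2(z_{k,h,m};D_{k,m})\cdot(\hatbeta_k^2+\lambda)$, and then absorbing $\lambda$ into $\hatbeta_k^2$ via the definition \eqref{eq:hatbeta} (which already contains a $\sqrt\lambda$ term) gives $|\hatf_{k,m}(z_{k,h,m})-f_*(z_{k,h,m})|\le\hatbeta_k D_\cF(z_{k,h,m};D_{k,m})$. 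I would need to be slightly careful that the $\cD_\cF$ appearing in the algorithm is the (efficiently computed) estimate and differs from the true $D_\cF$ by only lower-order terms, per the Corollary in Appendix~\ref{sec:uncertainty-compute}; this just costs constants.

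Second, for the value-function bound: unfold the definition in Line~\ref{line:q-function}. Since $V_{k,h}(s^k_h)=Q_{k,h}(s^k_h,a^k_h)$ with $a^k_h=\pi^k_h(s^k_h)$, and $Q_{k,h}=\min\{r_h+\hatf_{k,0}(\cdot,\cdot,V_{k,h+1})+\hatbeta_k\cD_\cF(\cdot,\cdot,V_{k,h+1};D_{k,0}),1\}$, we have $V_{k,h}(s^k_h)-r_h(s^k_h,a^k_h)-[\PP V_{k,h+1}](s^k_h,a^k_h) \le \hatf_{k,0}(z_{k,h,0})-f_*(z_{k,h,0})+\hatbeta_k\cD_\cF(z_{k,h,0};D_{k,0})$, recalling $[\PP V_{k,h+1}](s^k_h,a^k_h)=f_*(z_{k,h,0})$. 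Applying the first part bounds this by $2\hatbeta_k\cD_\cF(z_{k,h,0};D_{k,0})$; combining with the crude bound $V_{k,h}\le 1$ and $[\PP V_{k,h+1}]\ge 0$, $r_h\ge 0$ (so the left side is also $\le 1\le 2\cdot 1$) gives the $2\min\{1,\hatbeta_k\cD_\cF\}$ form. I should double check the constant: the $2$ comes from one copy of $\hatf_{k,0}-f_*$ plus one explicit bonus, both bounded by $\hatbeta_k\cD_\cF$, so indeed $2$.

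Third, for the variance-estimate bound: recall from Algorithm~\ref{algo:variance} that $[\bar\VV_{k,m}V_{k,h+1}^{2^m}](s^k_h,a^k_h)=\hatf_{k,m+1}(z_{k,h,m+1})-\hatf_{k,m}^2(z_{k,h,m})$, while the truth is $[\VV_{k,h}V_{k,h+1}^{2^m}](s^k_h,a^k_h)=f_*(z_{k,h,m+1})-f_*^2(z_{k,h,m})$ (using $[\PP W^2]=f_*(s,a,W^2)$ and $[\PP W]=f_*(s,a,W)$ with $W=V_{k,h+1}^{2^m}$, noting $z_{k,h,m+1}=(s^k_h,a^k_h,V_{k,h+1}^{2^{m+1}})$ and $(V_{k,h+1}^{2^m})^2=V_{k,h+1}^{2^{m+1}}$). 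The difference telescopes: $|\bar\VV - \VV| \le |\hatf_{k,m+1}(z_{k,h,m+1})-f_*(z_{k,h,m+1})| + |\hatf_{k,m}^2(z_{k,h,m})-f_*^2(z_{k,h,m})|$. The first term is $\le\hatbeta_k\cD_\cF(z_{k,h,m+1};D_{k,m+1})$ by part one; the second factors as $|\hatf_{k,m}(z_{k,h,m})-f_*(z_{k,h,m})|\cdot|\hatf_{k,m}(z_{k,h,m})+f_*(z_{k,h,m})|\le \hatbeta_k\cD_\cF(z_{k,h,m};D_{k,m})$ after using that both $\hatf_{k,m}$ and $f_*$ take values in $[0,1]$ so the sum is $\le 2$, but actually since $Q$-values/$V$-values are truncated to $[0,1]$ and we can also clip $\hatf_{k,m}$ to $[0,1]$, the factor $|\hatf_{k,m}+f_*|\le 2$, and absorbing the factor of $2$ against the $2\hatbeta_k\cD_\cF(z_{k,h,m+1};D_{k,m+1})$ and $\hatbeta_k\cD_\cF(z_{k,h,m};D_{k,m})$ terms in the definition of $E_{k,h,m}$ in Line~\ref{line:Ekhm} shows the total is $\le E_{k,h,m}$. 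The main obstacle, and the point requiring the most care, is matching the precise constants ($2$ vs $1$) in the definition of $E_{k,h,m}$ to the telescoped error terms, and tracking that $V_{k,h+1}$ is $\cG_{k,h-1}$-measurable so the history sets $D_{k,m}$ are the correct conditioning sets — but these are bookkeeping matters rather than a substantive difficulty, since the real analytic content (the concentration) lives in Theorem~\ref{thm:wls} which we may assume.
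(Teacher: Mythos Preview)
Your proposal is correct and follows essentially the same argument as the paper: for the first bound you use the definition of $D_\cF$ together with $f_*\in\hat\cB_{k,m}$ and absorb the extra $\sqrt{\lambda}$ into $\hatbeta_k$; for the second you unfold the definition of $Q_{k,h}$ and apply the first bound; for the third you split $|\bar\VV-\VV|$ into the $(m{+}1)$-level linear term and the $m$-level squared term, factor the latter, and use $\hatf_{k,m},f_*\in[0,1]$ to control the sum by $2$. Your hesitation about matching the placement of the factor $2$ in $E_{k,h,m}$ is warranted---the algorithm and the paper's own proof have the $2$ attached to different terms---but either version suffices and this is just a cosmetic inconsistency, not a flaw in your argument.
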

\begin{proof}
See Appendix~\ref{proof:event-bonus} for a detailed proof.
\end{proof}

\begin{lemma}
\label{lem:event-concentration}
Event $\cE$ holds with probability at least $1-(M+1)\delta$.
\end{lemma}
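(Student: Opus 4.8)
The plan is to deduce Lemma~\ref{lem:event-concentration} from the weighted-regression concentration bound Theorem~\ref{thm:wls}, invoked once for each level $m\in\seq{M}$ on the regression in Line~\ref{line:regression} of Algorithm~\ref{algo:mdp} that produces $\hatf_{k+1,m}$, followed by a union bound over the $M+1$ resulting failure events.

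\textbf{Setting up the process.} For a fixed $m$ I would index time by pairs $(i,h)\in[K]\times[H]$ in lexicographic order and set $X_{i,h}=z_{i,h,m}$, $Y_{i,h}=y_{i,h,m}=V_{i,h+1}^{2^m}(s^i_{h+1})$, weight $w_{i,h}=1/\barsigma_{i,h,m}$, and take the filtration $\{\cG_{k,h}\}$ from Section~\ref{proof:event} after the obvious relabeling so that $Y_{i,h}$ becomes adapted one tick after $X_{i,h}$. The truncation in Line~\ref{line:q-function} together with nonnegativity of the rewards keeps $Q_{k,h},V_{k,h}\in[0,1]$, hence $V_{i,h+1}^{2^m}\in\cV$, so $f_*=f_\PP$ satisfies $\EE[Y_{i,h}\mid\cG_{i,h}]=[\PP V_{i,h+1}^{2^m}](s^i_h,a^i_h)=f_*(z_{i,h,m})$, and $Y_{i,h}\in[0,1]$ gives $L=1$. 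Since $(s^i_h,a^i_h)$ and $V_{i,h+1}$ are fixed before $s^i_{h+1}$ is sampled, $X_{i,h}$ is predictable; and $\barsigma_{i,h,m}$ is computed (Algorithm~\ref{algo:variance}) from $\hatf_{i,\cdot}$, from $z_{i,h,\cdot}$ and from the history $D_{i,h-1,m}$, all $\cG_{i,h}$-measurable, so $w_{i,h}$ is predictable as well. From $\barsigma_{i,h,m}^2\ge\sigmamin^2$ we get $|w_{i,h}|\le W:=1/\sigmamin$, and from $\barsigma_{i,h,m}^2\ge\gamma^2\cD_\cF(z_{i,h,m};D_{i,h-1,m})$ (Line~\ref{line:barsigma}) we get $w_{i,h}^2\,\cD_\cF(z_{i,h,m};D_{i,h-1,m})\le1/\gamma^2$, i.e. the maximal-uncertainty term of $\beta_{t+1}$ in Theorem~\ref{thm:wls} is at most $1/\gamma^2$. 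With $L=1$, $W=1/\sigmamin$, $\sigma=1$ and $t=kH$, the quantity $\beta_{t+1}$ collapses exactly to $\hatbeta_{k+1}$ of \eqref{eq:hatbeta}, so the conclusion of Theorem~\ref{thm:wls} at time $kH$ is precisely $f_*\in\hat\cB_{k+1,m}$ as in \eqref{eq:hatcB}.

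\textbf{The variance hypothesis and the circularity.} The only nontrivial hypothesis of Theorem~\ref{thm:wls} is $w_{i,h}^2\,\Var[Y_{i,h}\mid\cG_{i,h}]\le\sigma^2=1$, i.e. $[\VV V_{i,h+1}^{2^m}](s^i_h,a^i_h)\le\barsigma_{i,h,m}^2$. For $m=M$ this is unconditional, since $\barsigma_{i,h,M}^2\ge1$ while $V_{i,h+1}^{2^M}\in[0,1]$ has conditional variance at most $1/4$; hence Theorem~\ref{thm:wls} applies directly and yields $\cE_M:=\{f_*\in\hat\cB_{k,M}\text{ for all }k\}$ with probability at least $1-\delta$. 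For $m<M$, since $\barsigma_{i,h,m}^2\ge[\bar{\VV}_{i,m}V_{i,h+1}^{2^m}](s^i_h,a^i_h)+E_{i,h,m}$, the hypothesis reduces to the variance-estimate upper bound $[\bar{\VV}_{i,m}V_{i,h+1}^{2^m}](s^i_h,a^i_h)+E_{i,h,m}\ge[\VV V_{i,h+1}^{2^m}](s^i_h,a^i_h)$, which is exactly the last inequality of Lemma~\ref{lem:event-bonus}; inspecting $[\bar{\VV}_{i,m}V_{i,h+1}^{2^m}]=\hatf_{i,m+1}(z_{i,h,m+1})-\hatf_{i,m}^2(z_{i,h,m})$ and $E_{i,h,m}$ (Line~\ref{line:Ekhm}), it needs $f_*\in\hat\cB_{i,m}$ and $f_*\in\hat\cB_{i,m+1}$. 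I would break this circularity by a downward induction on $m$ from $M$ to $0$: assuming $\cE_{m+1}:=\{f_*\in\hat\cB_{k,m+1}\text{ for all }k\}$ already holds off a $\delta$-event, apply Theorem~\ref{thm:wls} at level $m$ to the process stopped at $\tau_m:=\min\{k:f_*\notin\hat\cB_{k,m}\}$ (with $\tau_m=\infty$ if no such $k$). On $\cE_{m+1}$ the variance hypothesis is valid for every episode $i<\tau_m$, because there $f_*\in\hat\cB_{i,m}$ by the definition of $\tau_m$ and $f_*\in\hat\cB_{i,m+1}$ by $\cE_{m+1}$, so Lemma~\ref{lem:event-bonus} applies; on the good event of the stopped application the resulting bound certifies $f_*\in\hat\cB_{k,m}$ for all $k\le\tau_m$, forcing $\tau_m=\infty$ and hence $\cE_m$. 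A union bound over $m\in\seq{M}$ then gives $\cE=\bigcap_{m\in\seq{M}}\cE_m$ with probability at least $1-(M+1)\delta$.

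\textbf{Main obstacle.} The delicate step is precisely this self-reference: the weights $\barsigma_{i,h,m}$ are variance-aware only because the variance estimate is accurate, and accuracy of the estimate is what the concentration event certifies; the stopping-time / bootstrapping device above is where care is required. Everything else — the measurability bookkeeping, the identity $\EE[Y\mid\cG]=f_*(X)$ via $V_{i,h+1}^{2^m}\in\cV$, and matching $\beta_{kH+1}$ with $\hatbeta_{k+1}$ — is routine once Theorem~\ref{thm:wls} and Lemma~\ref{lem:event-bonus} are in hand.
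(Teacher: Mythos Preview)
Your outline is right and you correctly identify the self-reference as the crux, but your device for breaking it has a gap. Stopping at $\tau_m=\min\{k:f_*\notin\hat\cB_{k,m}\}$ ensures $f_*\in\hat\cB_{i,m}$ for $i<\tau_m$, yet the variance-estimate bound in Lemma~\ref{lem:event-bonus} also needs $f_*\in\hat\cB_{i,m+1}$, which you provide only on the event $\cE_{m+1}$. Theorem~\ref{thm:wls} requires $w_s^2\Var[Y_s\mid\cG_{s-1}]\le1$ \emph{almost surely}, not merely on a high-probability event; you cannot invoke the theorem ``on $\cE_{m+1}$'' and still extract a clean $\delta$-failure event, since conditioning on $\cE_{m+1}$ destroys the martingale structure that the theorem relies on.

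The paper breaks the circularity differently: it replaces the targets by the surrogates
\[
\xi_{i,h,m}=f_*(z_{i,h,m})+\ind\{f_*\in\hat\cB_{i,m}\cap\hat\cB_{i,m+1}\}\bigl(y_{i,h,m}-f_*(z_{i,h,m})\bigr).
\]
The indicator is predictable, the surrogate has the correct conditional mean, and its weighted conditional variance is $\le1$ \emph{unconditionally} (the indicator kills the noise precisely when the variance bound would fail). Theorem~\ref{thm:wls} then applies directly to the surrogate regression $\tilde f_{k,m}$, giving one $\delta$-failure event per level. An induction over $k$ (all $m$ simultaneously, anchored at $m=M$ where $\xi\equiv y$) shows that on the intersection of the $M+1$ good events all indicators are $1$, hence $\tilde f_{k,m}=\hatf_{k,m}$ and $\cE$ holds.

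Your stopping-time route becomes rigorous if you fold the level-$(m+1)$ condition into the stopping criterion, i.e.\ stop at $\min\{k:f_*\notin\hat\cB_{k,m}\cap\hat\cB_{k,m+1}\}$; this is equivalent to the surrogate construction. The downward induction over $m$ versus the paper's forward induction over $k$ is then only a cosmetic difference, and both yield the same $(M+1)\delta$ bound.
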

\begin{proof}
See Appendix~\ref{proof:event-concentration} for a detailed proof.
\end{proof}

\begin{lemma}
\label{lem:optimism}
On event $\cE$, we have for all $k,h\in[K]\times[H]$, $Q_{k,h}(\cdot,\cdot)\ge Q^*_h(\cdot,\cdot)$, $V_{k,h}(\cdot)\ge V^*_h(\cdot)$.
\end{lemma}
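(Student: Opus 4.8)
\textbf{Proof proposal for Lemma~\ref{lem:optimism} (optimism of $Q_{k,h}$ and $V_{k,h}$).}

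The plan is to fix an episode $k$ and induct backwards on the step $h$, from $h=H+1$ down to $h=1$, showing simultaneously that $Q_{k,h}(s,a)\ge Q^*_h(s,a)$ and $V_{k,h}(s)\ge V^*_h(s)$ for all $(s,a)$. The base case $h=H+1$ is immediate since $V_{k,H+1}\equiv 0 = V^*_{H+1}$. For the inductive step, I would work on event $\cE$, so that by Lemma~\ref{lem:event-bonus} the estimated model $\hatf_{k,0}$ is pointwise close to $f_* = f_\PP$ on the relevant inputs. The only subtlety is that Lemma~\ref{lem:event-bonus} as stated controls the deviation at the \emph{observed} points $z_{k,h,0}$, whereas for optimism we need the bound $|\hatf_{k,0}(s,a,V_{k,h+1}) - f_*(s,a,V_{k,h+1})| \le \hatbeta_k \cD_\cF(s,a,V_{k,h+1};D_{k,0})$ to hold at an \emph{arbitrary} $(s,a)$; this however follows directly from the definition of $\cD_\cF$ in Definition~\ref{def:general-eluder-RL} together with $f_*\in\hat\cB_{k,0}$, exactly as in the first display of Lemma~\ref{lem:event-bonus} but for a general input rather than $z_{k,h,0}$ — I would state this as the key pointwise deviation bound.

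Granting that bound, the inductive step runs as follows. By the Bellman optimality equation, $Q^*_h(s,a) = r_h(s,a) + [\PP V^*_{h+1}](s,a) = r_h(s,a) + f_*(s,a,V^*_{h+1})$. Using the inductive hypothesis $V_{k,h+1} \ge V^*_{h+1}$ and monotonicity of $f_* = f_\PP$ in its value-function argument (since $[\PP V](s,a)$ is monotone in $V$), we get $f_*(s,a,V^*_{h+1}) \le f_*(s,a,V_{k,h+1})$. Combining with the pointwise deviation bound,
\[
    Q^*_h(s,a) \le r_h(s,a) + f_*(s,a,V_{k,h+1}) \le r_h(s,a) + \hatf_{k,0}(s,a,V_{k,h+1}) + \hatbeta_k\cD_\cF(s,a,V_{k,h+1};D_{k,0}).
\]
Since also $Q^*_h(s,a)\le 1$ (total rewards bounded by $1$), the right-hand side minimized with $1$, which is exactly $Q_{k,h}(s,a)$ by Line~\ref{line:q-function}, still dominates $Q^*_h(s,a)$; hence $Q_{k,h}(s,a)\ge Q^*_h(s,a)$. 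Taking the max over actions gives $V_{k,h}(s) = \max_a Q_{k,h}(s,a) \ge \max_a Q^*_h(s,a) = V^*_h(s)$, closing the induction.

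The main obstacle — really the only point requiring care rather than routine bookkeeping — is making sure the concentration/confidence guarantee transfers from the finitely many observed design points to an arbitrary query point $(s,a,V_{k,h+1})$: this is precisely what the generalized Eluder uncertainty $\cD_\cF$ is designed to do, since $f_*\in\hat\cB_{k,0}$ means $f_*$ and $\hatf_{k,0}$ agree on the historical weighted-norm ball, and $\cD_\cF$ measures the worst-case pointwise discrepancy of any two such functions. A secondary point to be careful about is the clipping to $[0,1]$ in the definition of $Q_{k,h}$: one must note that clipping from above at $1$ never destroys optimism because the quantity being upper-bounded, $Q^*_h$, is itself at most $1$. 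I would also note that the monotonicity of $f_\PP$ in $V$ is used implicitly and should be recorded; it holds because $f_\PP(s,a,V) = \sum_{s'} V(s')\PP(s'|s,a)$ with nonnegative weights.
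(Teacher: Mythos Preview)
Your proposal is correct and follows essentially the same backward-induction argument as the paper's proof: same base case, same use of the monotonicity of $f_\PP$ in $V$, same appeal to the deviation bound from Lemma~\ref{lem:event-bonus}, and same handling of the clip at $1$. Your observation that the first display of Lemma~\ref{lem:event-bonus} really holds at an arbitrary input (not just $z_{k,h,0}$) is exactly what the paper's proof uses implicitly, so there is no gap.
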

\begin{proof}
See Appendix~\ref{proof:optimism} for a detailed proof.
\end{proof}

\subsection{Higher Order Expansion of MDPs}
\label{proof:higher-order}

Inspired by \citet{zhang2021improved,zhou2022computationally,zhao2023variance}, we define the following quantities of MDPs. For all $k\in[K],h\in[H+1]$, We use $I^k_h$ to denote the following events:
\begin{equation}
\label{eq:def:ind}
    I^k_h:=\ind\cbr{\forall m\in\seq{M}, \sum_{j=1}^{h-1} \frac{1}{\barsigma_{k,j,m}^2}\cD_\cF^2(z_{k,j,m};D_{k,j-1,m})\le1}.
\end{equation}
Note $I^k_h$ is $\cG_{k,h-1}$-measurable and monotonically decreasing.
We define $h_k$ for all $k\in[K]$ as the least $h$ such that $I^k_h$ vanishes.
\begin{equation}
\label{eq:def:h_k}
    h_k=\min\{h\in[H+1]|I^k_h=0\}.
\end{equation}
We use the quantity $G$ to denote the number of episodes when the uncertainty quantity grows sharply:
\begin{align} 
    G = \sum_{k=1}^K(1-I^k_{H+1}). \label{eq:def:g}
\end{align}
We use $\checkV_{k, h}(s)$ to denote the estimation error between the estimated value function and the optimal value function, and use $\tildeV_{k, h}(s)$ to denote the sub-optimality gap of policy $\pi^k$ at stage $h$:
\begin{align}
    & \checkV_{k, h}(s) = V_{k, h}(s) - V_h^*(s), \quad \forall s \in \cS, (k, h) \in [K]\times[H], \label{eq:def:checkv}\\
    & \tildeV_{k, h}(s) = V_h^*(s) - V_h^{\pi^k}(s), \quad \forall s \in \cS, (k, h) \in [K]\times[H]. \label{eq:def:tildev}
\end{align}
In addition, we use $\check{S}_m,\tilde{S}_m,S_m,Q_m$ to represent the total variance of $2^m$-th order value functions ($\checkV_{k, h+1}^{2^m}$, $\tildeV_{k, h+1}^{2^m}$, $V_{k, h+1}^{2^m}$, $(V^*_{h+1})^{2^m}$):
\begin{align}
    \checkS_m &= \sum_{k=1}^K\sum_{h=1}^H I^k_h[\VV\checkV_{k, h+1}^{2^m}](s_h^k, a_h^k), \label{eq:def:checksm} \\
    \tildeS_m &= \sum_{k=1}^K\sum_{h=1}^H I^k_h[\VV\tildeV_{k, h+1}^{2^m}](s_h^k, a_h^k), \label{eq:def:tildesm} \\
    S_m &= \sum_{k=1}^K\sum_{h=1}^H I^k_h[\VV V_{k, h+1}^{2^m}](s_h^k, a_h^k), \label{eq:def:sm} \\
    Q_m &= \sum_{k=1}^K\sum_{h=1}^H [\VV (V^*_{h+1})^{2^m}](s^k_h,a^k_h). \label{eq:def:qm}
\end{align}
We further use $\cQ_0^*$ to denote the maximum of $Q_m$:
\begin{equation}
\label{eq:def:cq}
    \cQ_0^*=\max_{m\in\seq{M}}Q_m.
\end{equation}
Then, for $2^m$-th order value functions ($\checkV_{k, h+1}^{2^m}, \tildeV_{k, h+1}^{2^m}, V_{k, h+1}^{2^m}$), we use $\checkA_m,\tildeA_m,A_m$ to denote the summation of stochastic transition noise as follows:
\begin{align}
    \checkA_m &= \left|\sum_{k = 1}^K \sum_{h=1}^H I^k_h \sbr{[\PP \checkV_{k,h+1}^{2^m}](s_h^k, a_h^k) - \checkV_{k,h+1}^{2^m}(s_{h+1}^k)}\right|, \label{eq:def:checkam} \\
    \tildeA_m &= \left|\sum_{k = 1}^K \sum_{h=1}^H I^k_h \sbr{[\PP \tildeV_{k,h+1}^{2^m}](s_h^k, a_h^k) - \tildeV_{k,h+1}^{2^m}(s_{h+1}^k)}\right|, \label{eq:def:tildeam} \\
    A_m &= \left|\sum_{k=1}^K \sum_{h=1}^H I^k_h \sbr{[\PP V_{k,h+1}^{2^m}](s_h^k, a_h^k) - V_{k,h+1}^{2^m}(s_{h+1}^k)}\right|. \label{eq:def:am}
\end{align}
Next, we use $V_0$ to represent the total rewards and $V^*_1$ for the average optimal value functions over $K$ episodes:
\begin{align}
    V_0 &= \sum_{k=1}^K\sum_{h=1}^H I^k_h r(s^k_h,a^k_h), \label{eq:def:v0} \\
    V^*_1 &= \frac{1}{K}\sum_{k=1}^K V^*_1(s^k_1). \label{eq:def:vstar}
\end{align}
Finally, we use the $R_m$ to denote the summation of bonuses:
\begin{align}
    R_m = \sum_{k=1}^K\sum_{h=1}^H I^k_h\min\{1,\hatbeta_k\cD_\cF(z_{k,h,m};D_{k,m})\}. \label{eq:def:rm}
\end{align}
Now, we introduce the following lemmas to build the connection between these quantities.

\begin{lemma}
\label{lem:G}
We have
\begin{equation}
\label{eq:G}
    G \leq (M+1)\dim_\cF.
\end{equation}
\end{lemma}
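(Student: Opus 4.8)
The plan is to bound $G = \sum_{k=1}^K (1 - I^k_{H+1})$ by counting, for each level $m \in \seq{M}$, the number of episodes in which the sum of normalized squared uncertainties along the trajectory exceeds the threshold $1$, and then tying this count back to the generalized Eluder dimension via Definition~\ref{def:general-eluder-RL}. Recall that $I^k_{H+1} = 0$ exactly when there exists some $m \in \seq{M}$ such that $\sum_{j=1}^{H} \frac{1}{\barsigma_{k,j,m}^2} \cD_\cF^2(z_{k,j,m}; D_{k,j-1,m}) > 1$. So by a union bound over $m$, it suffices to show that for each fixed $m$, the number of episodes $k$ with $\sum_{j=1}^H \frac{1}{\barsigma_{k,j,m}^2}\cD_\cF^2(z_{k,j,m}; D_{k,j-1,m}) > 1$ is at most $\dim_\cF$.

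The key step is the following observation. Fix a level $m$. Concatenate all the data points $\{z_{k,h,m}\}$ in chronological order (over $k$ and then $h$) together with their weights $\{\barsigma_{k,h,m}\}$; this forms a single sequence $\bX, \bsigma$ of length $KH$ with $\bsigma \ge \sigmamin$ by construction in Algorithm~\ref{algo:variance}. Since $D_{k,h-1,m}$ is precisely the prefix of this sequence preceding $z_{k,h,m}$, the quantity $\cD_\cF^2(z_{k,h,m}; D_{k,h-1,m})$ is exactly the per-step uncertainty term appearing in the definition of $\dim_\cF(\sigmamin, KH)$. Therefore
\[
    \sum_{k=1}^K \sum_{h=1}^H \min\!\left\{1, \frac{1}{\barsigma_{k,h,m}^2}\cD_\cF^2(z_{k,h,m}; D_{k,h-1,m})\right\} \le \dim_\cF.
\]
Now, whenever an episode $k$ satisfies the "growth" condition at level $m$, the sum $\sum_{j=1}^H \frac{1}{\barsigma_{k,j,m}^2}\cD_\cF^2(z_{k,j,m};D_{k,j-1,m})$ exceeds $1$; but one checks that each individual term is at most $1$ as well (by Line~\ref{line:barsigma} of Algorithm~\ref{algo:variance}, $\barsigma_{k,h,m}^2 \ge \gamma^2 \cD_\cF(z_{k,h,m}; D_{k,h-1,m})$, and one also needs $\cD_\cF \le 1$ or a suitable truncation argument — this needs to be verified carefully), so the truncated partial sum $\sum_{j=1}^H \min\{1, \frac{1}{\barsigma_{k,j,m}^2}\cD_\cF^2(z_{k,j,m};D_{k,j-1,m})\}$ is still at least a constant (at least $1/2$, say). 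Hence each such episode contributes at least a constant to the left-hand side above, giving at most $O(\dim_\cF)$ such episodes for level $m$, and summing over the $M+1$ levels yields $G \le (M+1)\dim_\cF$.

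The main obstacle is the interplay between the truncation at $1$ in the definition of $\dim_\cF$ and the untruncated sum appearing in the definition of $I^k_h$: one must argue carefully that each individual summand $\frac{1}{\barsigma_{k,j,m}^2}\cD_\cF^2(z_{k,j,m};D_{k,j-1,m})$ is bounded (so that when the full sum exceeds $1$, the truncated sum is still bounded below by a positive constant), which relies on the specific lower bounds on $\barsigma_{k,h,m}^2$ enforced in Algorithm~\ref{algo:variance} (the $\gamma^2 \cD_\cF$ floor) together with a bound on $\cD_\cF$ itself. Handling the boundary level $m = M$ and the exact constant bookkeeping to land the clean bound $(M+1)\dim_\cF$ (rather than with an extra constant factor) is the delicate part; it may require choosing the threshold in \eqref{eq:def:ind} appropriately or a slightly more refined counting argument.
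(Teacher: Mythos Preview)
Your approach is exactly the paper's: union-bound over $m\in\seq{M}$, then for each level show that the number of ``bad'' episodes is at most $\dim_\cF$ by appealing to the definition of the generalized Eluder dimension on the concatenated sequence. But you have manufactured an obstacle that is not there.

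The step you are missing is the elementary inequality
\[
\min\Bigl\{1,\ \sum_{h} a_h\Bigr\}\ \le\ \sum_{h}\min\{1,a_h\}\qquad\text{for all }a_h\ge 0,
\]
which holds because if $\sum_h a_h\le 1$ both sides equal $\sum_h a_h$, and if $\sum_h a_h>1$ then either some $a_{h^*}>1$ (so the right side is $\ge 1$) or all $a_h\le 1$ (so the right side equals $\sum_h a_h>1$). With this, writing $\cD_m=\{k:\sum_{h=1}^H \barsigma_{k,h,m}^{-2}\cD_\cF^2(z_{k,h,m};D_{k,h-1,m})>1\}$, one gets directly
\[
|\cD_m|\ \le\ \sum_{k=1}^K\min\Bigl\{1,\ \sum_{h=1}^H \tfrac{1}{\barsigma_{k,h,m}^2}\cD_\cF^2(z_{k,h,m};D_{k,h-1,m})\Bigr\}
\ \le\ \sum_{k=1}^K\sum_{h=1}^H\min\Bigl\{1,\ \tfrac{1}{\barsigma_{k,h,m}^2}\cD_\cF^2(z_{k,h,m};D_{k,h-1,m})\Bigr\}
\ \le\ \dim_\cF,
\]
with no need to check that individual summands are bounded by $1$, no reliance on the $\gamma^2\cD_\cF$ floor for $\barsigma$, no constant slack, and nothing special about $m=M$. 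Summing over the $M+1$ levels gives $G\le(M+1)\dim_\cF$ exactly. This is precisely the paper's proof; the ``delicate part'' you flagged dissolves once you use this inequality.
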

\begin{proof}
See Appendix~\ref{proof:G} for a detailed proof.
\end{proof}

\begin{lemma}
\label{lem:SA}
On event $\cE$, we have for all $m \in \seq{M}$,
\begin{align} 
    \checkS_m &\le \checkA_{m + 1} + G + 2^{m + 1}\cdot (2R_0), \label{eq:checkSA} \\
    \tildeS_m &\le \tildeA_{m + 1} + G + 2^{m + 1}\cdot (2R_0 + \checkA_0), \label{eq:tildeSA}\\
    S_m &\le A_{m + 1} + G + 2^{m + 1}\cdot (V_0 + 2R_0). \label{eq:SA}
\end{align}
\end{lemma}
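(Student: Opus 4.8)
\emph{Proof plan.} The plan is to prove all three inequalities in parallel, working on the event $\cE$, by a single template: for $W\in\{\checkV,\tildeV,V\}$, expand $[\VV W_{k,h+1}^{2^m}](s^k_h,a^k_h)$ via the law of total variance, peel off a transition-noise martingale, lower-bound the remaining squared conditional mean through the optimism/Bellman recursion of the algorithm, and then telescope over $h$. Throughout one uses that $\checkV_{k,h},\tildeV_{k,h},V_{k,h}\in[0,1]$ (by Lemma~\ref{lem:optimism}, by $V^{\pi^k}\le V^*\le 1$, and by construction) and that all of these vanish at step $H+1$.

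First I would write $W_{k,h+1}^{2^{m+1}}=(W_{k,h+1}^{2^m})^2$, so $[\VV W_{k,h+1}^{2^m}](s^k_h,a^k_h)=[\PP W_{k,h+1}^{2^{m+1}}](s^k_h,a^k_h)-([\PP W_{k,h+1}^{2^m}](s^k_h,a^k_h))^2$, and split $[\PP W_{k,h+1}^{2^{m+1}}](s^k_h,a^k_h)=W_{k,h+1}^{2^{m+1}}(s^k_{h+1})+\big([\PP W_{k,h+1}^{2^{m+1}}](s^k_h,a^k_h)-W_{k,h+1}^{2^{m+1}}(s^k_{h+1})\big)$; after multiplying by $I^k_h$ and summing, the bracketed increment is at most its own absolute value, i.e.\ at most $\checkA_{m+1}$, $\tildeA_{m+1}$, or $A_{m+1}$ by definition. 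For the squared conditional mean I would use $a^k_h=\argmax_aQ_{k,h}(s^k_h,a)$, Lemma~\ref{lem:optimism}, and Lemma~\ref{lem:event-bonus} to obtain pointwise recursions $W_{k,h}(s^k_h)\le[\PP W_{k,h+1}](s^k_h,a^k_h)+\rho_{k,h}$, with $\rho_{k,h}=2b_{k,h}$ for $\checkV$ (where $b_{k,h}:=\min\{1,\hatbeta_k\cD_\cF(z_{k,h,0};D_{k,0})\}$), $\rho_{k,h}=[\PP\checkV_{k,h+1}](s^k_h,a^k_h)+2b_{k,h}$ for $\tildeV$, and $\rho_{k,h}=r_h(s^k_h,a^k_h)+2b_{k,h}$ for $V$; for $V$ one also has $V_{k,h}(s^k_h)\ge[\PP V_{k,h+1}](s^k_h,a^k_h)$. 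Combining these with the elementary facts ``$\min\{1,x+y\}^n\le x^n+ny$ for $x,y\ge0$'', Jensen's inequality $([\PP W])^{2^m}\le[\PP W^{2^m}]$, and ``$u\ge0,\ v\in[0,1],\ u\ge v-\delta\ \Rightarrow\ u^2\ge v^2-2\delta$'', I would get $([\PP W_{k,h+1}^{2^m}](s^k_h,a^k_h))^2\ge W_{k,h}^{2^{m+1}}(s^k_h)-c_{k,h}$, where $c_{k,h}$ is $2^{m+2}b_{k,h}$ for $\checkV$, $2^{m+1}[\PP\checkV_{k,h+1}](s^k_h,a^k_h)+2^{m+2}b_{k,h}$ for $\tildeV$, and $2^{m+1}r_h(s^k_h,a^k_h)+2^{m+2}b_{k,h}$ for $V$.

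Plugging this back, $I^k_h[\VV W_{k,h+1}^{2^m}](s^k_h,a^k_h)$ is bounded by (noise) $+\,I^k_h\big(W_{k,h+1}^{2^{m+1}}(s^k_{h+1})-W_{k,h}^{2^{m+1}}(s^k_h)\big)+I^k_hc_{k,h}$. Because $\{I^k_h\}_h$ is non-increasing and $W_{k,H+1}\equiv0$ while $W\in[0,1]$, the $h$-sum of the middle telescoping term is at most $\ind\{I^k_{H+1}=0\}$ per episode, hence $\le G$ in total; the noise sums to $\le\checkA_{m+1}$ (resp.\ $\tildeA_{m+1}$, $A_{m+1}$). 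For $\checkV$, $\sum_{k,h}I^k_hc_{k,h}=2^{m+2}R_0=2^{m+1}(2R_0)$, which gives \eqref{eq:checkSA}; for $V$ it is $2^{m+1}V_0+2^{m+1}(2R_0)$, giving \eqref{eq:SA}. For $\tildeV$ there is an extra term $2^{m+1}\sum_{k,h}I^k_h[\PP\checkV_{k,h+1}](s^k_h,a^k_h)$; splitting off the transition noise yields $\le\checkA_0$ from it, and the residual $\sum_{k,h}I^k_h\checkV_{k,h+1}(s^k_{h+1})$ must be reabsorbed into the $R_0$-budget so that the total extra contribution fits inside $2^{m+1}(2R_0+\checkA_0)$, yielding \eqref{eq:tildeSA}.

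The hard part will be twofold. First, the telescoping is carried out with the indicator $I^k_h$ present, and it is precisely its monotonicity together with $W_{k,H+1}=0$ that converts the telescoped residue into the clean term $G$ rather than something growing with $H$; one must also check that the $\min\{1,\cdot\}$ truncation in $Q_{k,h}$ does not break the recursion step, which is exactly why the $\min\{1,x+y\}^n\le x^n+ny$ form is invoked. Second, and more substantively, the sub-optimality-gap recursion for $\tildeV$ unavoidably drags in the estimation-error function $\checkV_{k,h+1}$, so controlling $\sum_{k,h}I^k_h\checkV_{k,h+1}(s^k_{h+1})$ within the $2R_0$ budget — rather than losing a factor of $H$ through a naive iterate of the $\checkV$-recursion — is the genuine crux of \eqref{eq:tildeSA}, and is where one must exploit both optimism and the cumulative-uncertainty structure of $I^k_h$.
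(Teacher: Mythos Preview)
Your treatment of $\checkS_m$ and $S_m$ is correct and coincides with the paper's: the only cosmetic difference is that the paper bounds $W_{k,h}^{2^{m+1}}-[\PP W_{k,h+1}]^{2^{m+1}}$ via the factoring $(a-b)\prod_{i=0}^m(a^{2^i}+b^{2^i})\le 2^{m+1}\max\{a-b,0\}$ for $a,b\in[0,1]$, which is equivalent to your iterated ``$v^2-u^2\le 2(v-u)$'' step.

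For $\tildeS_m$ there is a genuine gap in your plan. By invoking optimism $V^*_h\le V_{k,h}$ you obtain the one-sided recursion $\tildeV_{k,h}(s^k_h)\le[\PP\tildeV_{k,h+1}](s^k_h,a^k_h)+[\PP\checkV_{k,h+1}](s^k_h,a^k_h)+2b_{k,h}$, which leaves $2^{m+1}\sum_{k,h}I^k_h[\PP\checkV_{k,h+1}](s^k_h,a^k_h)$ to control. After peeling off the martingale part ($\le\checkA_0$) you are left with $\sum_{k,h}I^k_h\checkV_{k,h+1}(s^k_{h+1})$, and this \emph{cannot} be reabsorbed into $R_0$: each summand can be $\Theta(1)$, and iterating the $\checkV$-recursion forward only gives $\sum_{h}\checkV_{k,h+1}(s^k_{h+1})\lesssim\sum_h(H-h)\,b_{k,h}$, an unavoidable factor of $H$. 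So the ``hard part'' you flag is not merely hard --- it is impossible along this route.

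The fix the paper uses is precisely to \emph{not} apply optimism at this step. One keeps the exact identity
\[
\tildeV_{k,h}(s^k_h)-[\PP\tildeV_{k,h+1}](s^k_h,a^k_h)
=\bigl(V_{k,h}(s^k_h)-r_h-[\PP V_{k,h+1}]\bigr)-\bigl(\checkV_{k,h}(s^k_h)-[\PP\checkV_{k,h+1}]\bigr),
\]
retaining $-\checkV_{k,h}(s^k_h)$. Summing $\sum_{k,h}I^k_h\bigl([\PP\checkV_{k,h+1}]-\checkV_{k,h}(s^k_h)\bigr)$ and adding/subtracting $\checkV_{k,h+1}(s^k_{h+1})$, the noise part is bounded by $\checkA_0$ while the remainder $\sum_{k,h}I^k_h\bigl(\checkV_{k,h+1}(s^k_{h+1})-\checkV_{k,h}(s^k_h)\bigr)$ \emph{telescopes} episode-by-episode (using $I^k_h$ non-increasing, $\checkV\in[0,1]$, $\checkV_{k,H+1}=0$) to at most a $G$-type boundary term. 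No reabsorption into $R_0$ is needed; the point is that keeping $\checkV_{k,h}$ converts the problematic sum into a telescoping one. (The paper's step~(c) in the $\tildeS_m$ derivation is stated as a per-term inequality, which is imprecise; read it as the summed inequality above.)
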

\begin{proof}
See Appendix~\ref{proof:SA} for a detailed proof.
\end{proof}

\begin{lemma}
\label{lem:AS}
With probability at least $1 - 3(M+1)\delta$, we have for all $m \in \seq{M}$,
\begin{align}
    \checkA_m \le \sqrt{\zeta\checkS_m} + \zeta, \label{eq:checkAS}\\
    \tildeA_m \le \sqrt{\zeta\checkS_m} + \zeta, \label{eq:tildeAS}\\
    A_m \le \sqrt{\zeta S_m} + \zeta, \label{eq:AS}
\end{align}
where $\zeta=8\log(2(\log(KH)+2)/\delta)$. We denote the corresponding event by $\cA$.
\end{lemma}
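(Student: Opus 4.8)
The plan is to recognize each of $\checkA_m$, $\tildeA_m$, and $A_m$ as the absolute value of a partial sum of a bounded martingale difference sequence adapted to the filtration $\{\cG_{k,h}\}$, and to control it with a Bernstein/Freedman-type inequality that permits a data-dependent variance proxy. I would carry out $A_m$ in full and then transcribe the argument to the other two. Order the index pairs $(k,h)$ lexicographically and put
\[
X_{k,h} \;:=\; I^k_h\Bigl([\PP V_{k,h+1}^{2^m}](s^k_h,a^k_h) - V_{k,h+1}^{2^m}(s^k_{h+1})\Bigr).
\]
Because the backward pass computes $V_{k,h+1}$ at the start of episode $k$ from the data of episodes $1,\dots,k-1$, the function $V_{k,h+1}$ is $\cG_{k,0}$-measurable; $I^k_h$ is $\cG_{k,h-1}$-measurable; and $s^k_{h+1}\sim\PP(\cdot\mid s^k_h,a^k_h)$, so $\EE[X_{k,h}\mid\cG_{k,h}]=0$, i.e. $\{X_{k,h}\}$ is a martingale difference sequence. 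The clipping in Line~\ref{line:q-function} gives $V_{k,h+1}(\cdot)\in[0,1]$, hence $V_{k,h+1}^{2^m}\in[0,1]$ and $|X_{k,h}|\le 1$; moreover $\EE[X_{k,h}^2\mid\cG_{k,h}] = I^k_h[\VV V_{k,h+1}^{2^m}](s^k_h,a^k_h)$ since $(I^k_h)^2=I^k_h$, so the cumulative conditional variance is exactly $S_m$, which is trivially at most $KH$.

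The second step is to invoke a martingale Bernstein inequality in the form that does \emph{not} require a deterministic variance ceiling: for a unit-bounded martingale difference sequence whose cumulative conditional variance is a random quantity lying in $[0,KH]$, one stratifies over the $O(\log(KH))$ dyadic scales of that quantity and takes a union bound over the strata. This is the same device already used in the proof of Theorem~\ref{thm:wls}, and it yields, with probability at least $1-\delta$,
\[
A_m \;\le\; \sqrt{\zeta S_m} + \zeta, \qquad \zeta = 8\log\!\bigl(2(\log(KH)+2)/\delta\bigr),
\]
where the $\log(KH)+2$ inside $\zeta$ is precisely the stratification overhead and $\sqrt{\zeta S_m}$, $\zeta$ are the leading and lower-order Bernstein terms corresponding to the increment bound $1$.

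For $\checkA_m$ and $\tildeA_m$ the computation is identical once one checks the range and adaptedness of the relevant value functions. Here $\checkV_{k,h+1}=V_{k,h+1}-V^*_{h+1}$ has both summands in $[0,1]$, so $|\checkV_{k,h+1}|\le 1$ and $\checkV_{k,h+1}^{2^m}\in[0,1]$; and $\tildeV_{k,h+1}=V^*_{h+1}-V^{\pi^k}_{h+1}$ satisfies $0\le V^{\pi^k}_{h+1}\le V^*_{h+1}\le 1$, so $\tildeV_{k,h+1}^{2^m}\in[0,1]$. Both $\checkV_{k,h+1}$ and $\tildeV_{k,h+1}$ are $\cG_{k,0}$-measurable, since $\pi^k$ and $V_{k,\cdot}$ depend only on episodes $<k$ and $V^*_{h+1}$, $V^{\pi^k}_{h+1}$ are fixed functions once $\pi^k$ and $\PP$ are given. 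Rebuilding $X_{k,h}$ from $\checkV_{k,h+1}^{2^m}$ (resp. $\tildeV_{k,h+1}^{2^m}$) and repeating the argument, the cumulative conditional variances are the corresponding total-variance quantities from \eqref{eq:def:checksm}--\eqref{eq:def:tildesm}, which gives the two remaining bounds of the claimed form. A union bound over the $3(M+1)$ resulting events --- three statistics, $m\in\seq{M}$ --- produces the overall failure probability $3(M+1)\delta$, and their intersection is the event $\cA$.

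The main obstacle is exactly the data-dependence of the variance proxies $S_m$, $\checkS_m$, $\tildeS_m$: these are not predictable and are revealed only as the trajectories unfold, so a textbook Freedman bound with a fixed variance ceiling does not apply directly. Handling this cleanly --- through the dyadic stratification of the cumulative conditional variance, with the resulting $\log(KH)$ overhead absorbed into the single constant $\zeta$ --- is the crux; the martingale property, the uniform $[0,1]$ bound on the value-function powers, the measurability bookkeeping, and the final union bound are all routine.
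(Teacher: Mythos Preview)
Your approach is essentially identical to the paper's: set up $I^k_h\bigl([\PP V^{2^m}]-V^{2^m}\bigr)$ (and the $\check V$, $\tilde V$ analogues) as bounded martingale differences, apply the variance-aware Freedman inequality (the paper's Lemma~\ref{lem:freedman-variance}, which already encapsulates the dyadic stratification you spell out), and union-bound over the $3(M+1)$ choices. One small slip: for $m=0$ you have $\checkV_{k,h+1}^{2^0}=\checkV_{k,h+1}\in[-1,1]$ rather than $[0,1]$, so the increment bound is $2$ (as the paper uses) rather than $1$, but this only affects constants and not the argument.
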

\begin{proof}
See Appendix~\ref{proof:AS} for a detailed proof.
\end{proof}

\begin{lemma}
\label{lem:checkA0}
On event $\cE\cup\cA$, we have for all $m\in\seq{M}$,
\begin{equation}
\label{eq:checkA}
    \checkA_m \le \sqrt{\zeta}\cdot\sqrt{\checkA_{m + 1} + G + 2^{m + 1}\cdot (2R_0)} + \zeta.
\end{equation}
And
\begin{align}
    \checkA_0 &\le 2\sqrt{2\zeta R_0}+3\sqrt{\zeta G} + 7\zeta, \label{eq:checkA0} \\
    \checkA_1 &\le 4\sqrt{\zeta R_0}+3\sqrt{\zeta G} + 7\zeta. \label{eq:checkA1}
\end{align}
\end{lemma}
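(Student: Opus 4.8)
The plan is to read \eqref{eq:checkA} off directly from the two lemmas it sits under, then to solve the resulting recursion for $\checkA_1$, and finally to obtain \eqref{eq:checkA0} by one more step. Everything takes place on the event where both Lemmas~\ref{lem:SA} and~\ref{lem:AS} hold. For \eqref{eq:checkA}, combine \eqref{eq:checkAS}, which gives $\checkA_m\le\sqrt{\zeta\checkS_m}+\zeta$, with \eqref{eq:checkSA}, which gives $\checkS_m\le\checkA_{m+1}+G+2^{m+1}\cdot(2R_0)=\checkA_{m+1}+G+2^{m+2}R_0$: substituting the second into the first and pulling $\sqrt\zeta$ out of the square root yields \eqref{eq:checkA} for all $m\in\seq M$. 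Writing $g_m:=G+2^{m+2}R_0$, this is the recursion $\checkA_m\le\sqrt{\zeta(\checkA_{m+1}+g_m)}+\zeta$, and observe $g_{m+1}\le 2g_m$ since $G\ge0$.

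To get \eqref{eq:checkA1} I would solve this recursion in two stages. The delicate point is that naively iterating \eqref{eq:checkA} down to $m=M$ and using the trivial bound $\checkA_M\le KH$ (each of the at most $KH$ summands of $\checkA_M$ lies in $[-1,1]$, since on $\cE$ one has $0\le\checkV_{k,h+1}\le1$ by Lemma~\ref{lem:optimism}) would accumulate $\Theta(M)$ copies of the additive $\zeta$, which is too large. Instead I would (a) note that the tail is self-reproducing: with $T_m:=\zeta^{1-2^{-(M-m)}}(KH)^{2^{-(M-m)}}$ one has $\sqrt{\zeta T_{m+1}}=T_m$, and since $M=\lceil\log_2(3KH)\rceil$ forces $KH\le 2^{M-1}$, the exponent $(M-m)2^{-(M-m)}\le\tfrac12$ gives $T_m\le\sqrt2\,\zeta$ for every $m\ge1$; (b) prove by backward induction a crude but level-uniform bound $\checkA_m\le a\sqrt{\zeta g_m}+b\zeta+T_m$ for all $m\in\seq M$ with absolute constants $a,b$ — the base case $m=M$ is trivial because $T_M=KH\ge\checkA_M$, and the inductive step uses $g_{m+1}\le2g_m$, $\sqrt{a+b+c}\le\sqrt a+\sqrt b+\sqrt c$, $\sqrt{\zeta T_{m+1}}=T_m$, and $\zeta^{3/4}X^{1/4}=\sqrt{\zeta\cdot\sqrt{\zeta X}}\le\tfrac12(\zeta+\sqrt{\zeta X})$, which closes for $a,b$ large enough — so that in particular $\checkA_2\le a\sqrt{\zeta g_2}+(b+\sqrt2)\zeta$; and (c) refine by feeding this bound for $\checkA_2$ back into \eqref{eq:checkA} at $m=1$ (with one extra unrolling $\checkA_1\!\to\!\checkA_2\!\to\!\checkA_3$ if needed to tighten the $R_0$-coefficient), splitting roots, using $\sqrt{\zeta g_m}\le\sqrt{\zeta G}+2^{(m+2)/2}\sqrt{\zeta R_0}$ and linearizing quarter-powers by the same AM--GM step, and tracking the constants to reach $\checkA_1\le 4\sqrt{\zeta R_0}+3\sqrt{\zeta G}+7\zeta$; here one treats separately the easy regimes $R_0\le\zeta$ or $G\le\zeta$, where the corresponding term is simply absorbed into $\zeta$.

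For \eqref{eq:checkA0}, apply \eqref{eq:checkA} once more at $m=0$: $\checkA_0\le\sqrt\zeta\,\sqrt{\checkA_1+G+4R_0}+\zeta\le\sqrt{\zeta\checkA_1}+\sqrt{\zeta G}+2\sqrt{\zeta R_0}+\zeta$; substitute the bound \eqref{eq:checkA1} just obtained for $\checkA_1$, expand $\sqrt{\zeta\checkA_1}$ via $\sqrt{a+b+c}\le\sqrt a+\sqrt b+\sqrt c$, linearize the resulting terms $\zeta^{3/4}R_0^{1/4},\ \zeta^{3/4}G^{1/4}$ by $\zeta^{3/4}X^{1/4}\le\tfrac12(\zeta+\sqrt{\zeta X})$, and collect to obtain $\checkA_0\le 2\sqrt{2\zeta R_0}+3\sqrt{\zeta G}+7\zeta$.

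The only conceptual move is the one-line derivation of \eqref{eq:checkA}. The main obstacle is the bookkeeping in solving the recursion: one has to set it up so that the additive $\zeta$-errors do not sum to $\Theta(M)\zeta$ — handled by closing the recursion with the fixed-point ansatz $a\sqrt{\zeta g_m}+b\zeta+T_m$ rather than unrolling all $M$ levels, with the self-reproducing tail $T_m$ and the relation $KH\le 2^{M-1}$ (i.e.\ the precise choice $M=\lceil\log_2(3KH)\rceil$) ensuring $T_1=O(\zeta)$ so that the crude bound does not degenerate at the bottom — and then one must chase the constants carefully through the nested square roots and AM--GM steps to land on exactly $4$, $3$, $7$ and $2\sqrt2$.
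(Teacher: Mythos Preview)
Your derivation of \eqref{eq:checkA} is exactly the paper's: substitute \eqref{eq:checkSA} into \eqref{eq:checkAS}. For \eqref{eq:checkA0} and \eqref{eq:checkA1}, however, the paper does not solve the recursion by hand. It simply notes the trivial range bound $\checkA_m\le 2KH$ for all $m\in\seq{M}$ and invokes Lemma~\ref{lem:exp} (modified from Lemma~2 of \citet{zhang2021reinforcement}), which is tailored to recursions of the form $a_i\le\lambda_2\sqrt{a_{i+1}+2^{i+1}\lambda_3}+\lambda_4$ with $a_i\le\lambda_1$ over $i'=\lceil\log_2\lambda_1\rceil$ levels. After splitting $\sqrt{\checkA_{m+1}+G+2^{m+1}(2R_0)}\le\sqrt{\checkA_{m+1}+2^{m+1}(2R_0)}+\sqrt{G}$, one applies the lemma with $\lambda_1=2KH$, $\lambda_2=\sqrt{\zeta}$, $\lambda_3=2R_0$, $\lambda_4=\sqrt{\zeta G}+\zeta$; the stated constants $2\sqrt{2},4,3,7$ drop out directly from the conclusion $a_0\le\lambda_2\sqrt{4\lambda_3}+4\lambda_2^2+3\lambda_4$ and $a_1\le\lambda_2\sqrt{8\lambda_3}+4\lambda_2^2+3\lambda_4$.

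Your route is a valid alternative: you are essentially re-deriving Lemma~\ref{lem:exp} inline. Your concern about naive unrolling accumulating $\Theta(M)\zeta$ is precisely the issue that lemma is designed to avoid, and your self-reproducing tail $T_m=\zeta^{1-2^{-(M-m)}}(KH)^{2^{-(M-m)}}$ together with the closed-form induction hypothesis is a reasonable substitute. The cost is that landing on the \emph{exact} constants $2\sqrt{2},4,3,7$ via your refinement-and-AM--GM steps is delicate and not fully justified in your sketch (the ``one extra unrolling if needed'' and the case splits $R_0\le\zeta$, $G\le\zeta$ are hand-wavy), whereas the paper's lemma delivers them immediately. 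So both approaches are correct; the paper's is cleaner because the recursion-solving has been packaged once and for all.
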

\begin{proof}
See Appendix~\ref{proof:checkA0} for a detailed proof.
\end{proof}

\begin{lemma}
\label{lem:tildeA0}
On event $\cE\cup\cA$, we have
\begin{equation}
\label{eq:tildeA0}
    \tildeA_0 \le 4\sqrt{2\zeta R_0}+6\sqrt{\zeta G} + 15\zeta.
\end{equation}
\end{lemma}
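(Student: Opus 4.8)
I would obtain $\tildeA_0$ by feeding the already-established control of the lower-order quantities into the two master inequalities of Lemmas~\ref{lem:SA} and~\ref{lem:AS}, mirroring the derivation of \eqref{eq:checkA0}--\eqref{eq:checkA1} in Lemma~\ref{lem:checkA0}. Everything below takes place on the event $\cE\cup\cA$, which carries the claimed probability by Lemmas~\ref{lem:event-concentration} and~\ref{lem:AS}.

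First I would reduce $\tildeA_0$ to a moment sum via the concentration bound \eqref{eq:tildeAS} of Lemma~\ref{lem:AS} at level $m=0$, which gives $\tildeA_0\le\sqrt{\zeta\,\checkS_0}+\zeta$. Then I would bound $\checkS_0$ using \eqref{eq:checkSA} of Lemma~\ref{lem:SA} at $m=0$, namely $\checkS_0\le\checkA_1+G+4R_0$, and substitute the estimate \eqref{eq:checkA1}, $\checkA_1\le 4\sqrt{\zeta R_0}+3\sqrt{\zeta G}+7\zeta$, to get $\checkS_0\le 4R_0+4\sqrt{\zeta R_0}+G+3\sqrt{\zeta G}+7\zeta$. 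Plugging this back in and splitting the outer square root through $\sqrt{\sum_i a_i}\le\sum_i\sqrt{a_i}$ yields
\[
    \tildeA_0\;\le\;2\sqrt{\zeta R_0}+2\sqrt{\zeta\sqrt{\zeta R_0}}+\sqrt{\zeta G}+\sqrt 3\,\sqrt{\zeta\sqrt{\zeta G}}+(\sqrt 7+1)\zeta .
\]
The remaining step is to fold the two cross terms $\sqrt{\zeta\sqrt{\zeta R_0}}=\zeta^{3/4}R_0^{1/4}$ and $\sqrt{\zeta\sqrt{\zeta G}}=\zeta^{3/4}G^{1/4}$ back into $\sqrt{\zeta R_0}$, $\sqrt{\zeta G}$ and $\zeta$ using the AM--GM bound $\sqrt{\zeta\sqrt{\zeta t}}\le\frac{1}{2}(\zeta+\sqrt{\zeta t})$; collecting terms gives $\tildeA_0\le 3\sqrt{\zeta R_0}+2\sqrt{\zeta G}+(3+\sqrt7)\zeta$, which is comfortably within the claimed $4\sqrt{2\zeta R_0}+6\sqrt{\zeta G}+15\zeta$.

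If \eqref{eq:tildeAS} is instead meant to control $\tildeA_m$ by $\sqrt{\zeta\,\tildeS_m}+\zeta$, then combining it with \eqref{eq:tildeSA} produces the self-improving recursion $\tildeA_m\le\sqrt{\zeta(\tildeA_{m+1}+G+2^{m+1}(2R_0+\checkA_0))}+\zeta$, that is, exactly \eqref{eq:checkA} with $2R_0$ replaced by $2R_0+\checkA_0$. I would unroll it down to level $M$ as in Lemma~\ref{lem:checkA0}, closing the recursion at level $M$ with the trivial bound on $\tildeA_M$ and the choice $M=\lceil\log_2(3KH)\rceil$ (so that $(KH)^{2^{-M}}=O(1)$ and that tail is $O(\zeta)$), obtaining $\tildeA_0\le 2\sqrt{2\zeta(R_0+\frac{1}{2}\checkA_0)}+3\sqrt{\zeta G}+7\zeta$; substituting \eqref{eq:checkA0} for $\checkA_0$ and applying the same AM--GM simplification again lands on a bound of the claimed shape, now using the slack built into the constants $4$, $6$, $15$.

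The step I expect to matter most is the constant bookkeeping rather than any conceptual obstacle: the real work sits in Lemmas~\ref{lem:SA}, \ref{lem:AS} and~\ref{lem:checkA0}, so the task here is only to verify that the $\zeta^{3/4}(\cdot)^{1/4}$ cross terms — and, along the recursion route, the geometric factors $2^{(j+1)2^{-j}}$ and the level-$M$ tail — can all be absorbed without pushing the numerical constants past $4$, $6$, $15$.
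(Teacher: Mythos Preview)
Your second route is exactly the paper's proof: it forms the recursion $\tildeA_m\le\sqrt{\zeta}\sqrt{\tildeA_{m+1}+G+2^{m+1}(2R_0+\checkA_0)}+\zeta$ from \eqref{eq:tildeSA}--\eqref{eq:tildeAS}, invokes Lemma~\ref{lem:exp} (rather than unrolling by hand) to get $\tildeA_0\le 2\sqrt{\zeta(2R_0+\checkA_0)}+3\sqrt{\zeta G}+7\zeta$, then uses $2\sqrt{\zeta\checkA_0}\le\zeta+\checkA_0$ and substitutes \eqref{eq:checkA0}, which lands precisely on $4\sqrt{2\zeta R_0}+6\sqrt{\zeta G}+15\zeta$.

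One remark on your first route: the displayed inequality \eqref{eq:tildeAS} in Lemma~\ref{lem:AS} is a typo --- the Freedman argument for $\tildeA_m$ naturally yields the variance sum $\tildeS_m$, not $\checkS_m$, and the paper's own proof of the present lemma confirms the $\tildeS_m$ reading by ``substituting the bound of $\tildeS_m$ in \eqref{eq:tildeSA} into \eqref{eq:tildeAS}.'' So Route~1, while arithmetically clean given the literal (mis)statement, is not the intended argument; Route~2 is.
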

\begin{proof}
See Appendix~\ref{proof:tildeA0} for a detailed proof.
\end{proof}

\begin{lemma}
\label{lem:V0}
We have
\begin{equation}
\label{eq:V0}
    V_0 \le V^*_1 K + A_0 + \checkA_0. 
\end{equation}
\end{lemma}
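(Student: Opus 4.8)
The plan is to upper-bound the truncated per-episode reward $\sum_{h=1}^H I^k_h\, r_h(s^k_h,a^k_h)$ by telescoping the optimal value function along the realized trajectory, and then to recognize the leftover transition-noise remainder as a combination of the quantities $A_0$ and $\checkA_0$ already introduced. The only fact about $V^*$ that is needed is the Bellman optimality equation $V^*_h(s)=\max_{a}\{r_h(s,a)+[\PP V^*_{h+1}](s,a)\}$ with $V^*_{H+1}\equiv 0$, which gives, for every $(k,h)$,
\[
    r_h(s^k_h,a^k_h) \le V^*_h(s^k_h) - [\PP V^*_{h+1}](s^k_h,a^k_h).
\]

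First I would use that $I^k_h$, defined in \eqref{eq:def:ind}, is a non-increasing $\{0,1\}$-valued sequence in $h$ that equals $1$ exactly for $h< h_k$, so $\sum_{h=1}^H I^k_h r_h(s^k_h,a^k_h)=\sum_{h=1}^{h_k-1}r_h(s^k_h,a^k_h)$ is a genuine prefix sum. Substituting the displayed bound and inserting $\pm V^*_{h+1}(s^k_{h+1})$, the part $\sum_{h=1}^{h_k-1}[V^*_h(s^k_h)-V^*_{h+1}(s^k_{h+1})]$ telescopes to $V^*_1(s^k_1)-V^*_{h_k}(s^k_{h_k})\le V^*_1(s^k_1)$ by nonnegativity of $V^*$ (and $V^*_{H+1}\equiv 0$ when $h_k=H+1$); re-introducing the indicator this yields
\[
    \sum_{h=1}^H I^k_h r_h(s^k_h,a^k_h) \le V^*_1(s^k_1) - \sum_{h=1}^H I^k_h\bigl([\PP V^*_{h+1}](s^k_h,a^k_h)-V^*_{h+1}(s^k_{h+1})\bigr).
\]

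Next I would split the remainder using $V^*_{h+1}=V_{k,h+1}-\checkV_{k,h+1}$ from definition \eqref{eq:def:checkv} together with linearity of $\PP$, so that $[\PP V^*_{h+1}](s^k_h,a^k_h)-V^*_{h+1}(s^k_{h+1})$ equals $\bigl([\PP V_{k,h+1}](s^k_h,a^k_h)-V_{k,h+1}(s^k_{h+1})\bigr)-\bigl([\PP \checkV_{k,h+1}](s^k_h,a^k_h)-\checkV_{k,h+1}(s^k_{h+1})\bigr)$. Summing over $k\in[K]$, using $\sum_{k}V^*_1(s^k_1)=V^*_1K$ from \eqref{eq:def:vstar} and the definitions \eqref{eq:def:v0}, \eqref{eq:def:am}, \eqref{eq:def:checkam}, and bounding the two transition-noise sums by their absolute values $A_0$ and $\checkA_0$, gives $V_0\le V^*_1K+A_0+\checkA_0$. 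The argument is unconditional — it does not invoke event $\cE$ — and the only place needing care is the telescoping step, which relies on the monotonicity of $I^k_h$ in $h$ so that the truncated reward is a prefix sum, and on $V^*_{h_k}(s^k_{h_k})\ge 0$; I do not expect any genuine obstacle beyond bookkeeping.
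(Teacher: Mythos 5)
Your proposal is correct and follows essentially the same route as the paper's proof: bound $r_h(s^k_h,a^k_h)$ via the Bellman optimality inequality $V^*_h(s^k_h)\ge r_h(s^k_h,a^k_h)+[\PP V^*_{h+1}](s^k_h,a^k_h)$, telescope the truncated sum using the monotonicity of $I^k_h$ and nonnegativity of $V^*$, and then split the remaining transition-noise term via $V^*_{h+1}=V_{k,h+1}-\checkV_{k,h+1}$ to obtain $A_0+\checkA_0$. Your explicit handling of the prefix-sum/telescoping step and the boundary term $V^*_{h_k}(s^k_{h_k})\ge 0$ just makes precise what the paper's first inequality asserts implicitly.
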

\begin{proof}
See Appendix~\ref{proof:V0} for a detailed proof.
\end{proof}

\begin{lemma}[Formal version of Lemma~\ref{lem:higher-expansion}]
\label{lem:RS}
Let $\gamma^2\le\log\cN_\cF$. On event $\cE$, we have for all $m \in \seq{M-1}$,
\begin{equation}
\label{eq:RS}
    R_m \le 4\hat\beta_K\sqrt{\dim_\cF}\cdot\sqrt{S_m + 2R_{m+1} + KH\sigmamin^2} + 24\hatbeta_K^2\dim_\cF.
\end{equation}
\end{lemma}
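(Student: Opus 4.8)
The plan is to bound $R_m$ by a Cauchy--Schwarz split against the weights $\barsigma_{k,h,m}$ and then solve a quadratic inequality for $R_m$. Writing $b_{k,h,m}:=\min\{1,\hatbeta_k\cD_\cF(z_{k,h,m};D_{k,m})\}$, so that $R_m=\sum_{k,h}I^k_h b_{k,h,m}$, I would use
\[
    R_m\le\sqrt{\sum_{k,h}I^k_h\barsigma_{k,h,m}^2}\cdot\sqrt{\sum_{k,h}I^k_h\frac{b_{k,h,m}^2}{\barsigma_{k,h,m}^2}},
\]
show the second factor is $O(\hatbeta_K^2\dim_\cF)$ and the first is $O(S_m+R_{m+1}+R_m+KH\sigmamin^2+\hatbeta_K^2\dim_\cF)$, and combine. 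Throughout I use $\hatbeta_K\ge\hatbeta_k\ge3\sqrt{\iota_k}\ge1$, $\hatbeta_K\ge\sqrt\lambda$, and that $\cF$ consists of functions into $[0,1]$, so $\cD_\cF\le\lambda^{-1/2}$.

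For the second factor the subtlety is that $b_{k,h,m}$ uses the uncertainty $\cD_\cF(\cdot\,;D_{k,m})$ relative to past \emph{episodes}, whereas Definition~\ref{def:general-eluder-RL} is phrased through $\cD_\cF(\cdot\,;D_{k,h-1,m})$, relative to all past \emph{steps}. I would first prove a within-episode stability estimate: on $\cE$, if $I^k_h=1$ then $\cD_\cF^2(z_{k,h,m};D_{k,m})\le e\,\cD_\cF^2(z_{k,h,m};D_{k,h-1,m})$. This is a potential-type argument: for $g\in\cF-\cF$ put $\beta_j(g)=\sum_{(i,\ell)\in D_{k,j-1,m}}\barsigma_{i,\ell,m}^{-2}g^2(z_{i,\ell,m})+\lambda$ (so $\beta_1(g)$ corresponds to $D_{k,m}$); the definition of $\cD_\cF$ gives $\beta_{j+1}(g)\le\beta_j(g)\bigl(1+\barsigma_{k,j,m}^{-2}\cD_\cF^2(z_{k,j,m};D_{k,j-1,m})\bigr)$, and telescoping with $1+x\le e^x$ together with the defining bound of $I^k_h$ in \eqref{eq:def:ind} yields $\beta_h(g)\le e\,\beta_1(g)$; taking suprema over $g$ gives the claim. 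Using moreover $\barsigma_{k,h,m}^2\ge\gamma^2\cD_\cF(z_{k,h,m};D_{k,h-1,m})$ with $\cD_\cF\le\lambda^{-1/2}\le\gamma^2$ (so that $\cD_\cF^2(z_{k,h,m};D_{k,h-1,m})/\barsigma_{k,h,m}^2\le1$ for the parameters of Theorem~\ref{thm:mdp-regret}) and the elementary bounds $\min\{1,\hatbeta_k^2x\}\le\hatbeta_K^2\min\{1,x\}$ and $\min\{1,ex\}\le e\min\{1,x\}$, I obtain
\[
    \sum_{k,h}I^k_h\frac{b_{k,h,m}^2}{\barsigma_{k,h,m}^2}\le e\hatbeta_K^2\sum_{k,h}I^k_h\min\Bigl\{1,\tfrac{1}{\barsigma_{k,h,m}^2}\cD_\cF^2(z_{k,h,m};D_{k,h-1,m})\Bigr\}\le e\hatbeta_K^2\dim_\cF,
\]
the last step being Definition~\ref{def:general-eluder-RL} applied to the length-$KH$ sequence $\{z_{k,h,m}\}$ in lexicographic order with weights $\{\barsigma_{k,h,m}\}\ge\sigmamin$.

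For the first factor I would establish a pointwise bound on $\barsigma_{k,h,m}^2$ by a short case analysis on which argument of the max defining it is largest: combining $[\bar\VV_{k,m}V_{k,h+1}^{2^m}](s_h^k,a_h^k)\le[\VV V_{k,h+1}^{2^m}](s_h^k,a_h^k)+E_{k,h,m}$ on $\cE$ (Lemma~\ref{lem:event-bonus}) with $\cD_\cF\le\lambda^{-1/2}\le\gamma^2$, one gets, on $\cE$,
\[
    \barsigma_{k,h,m}^2\le[\VV V_{k,h+1}^{2^m}](s_h^k,a_h^k)+2E_{k,h,m}+\sigmamin^2+\gamma^4\min\Bigl\{1,\tfrac{1}{\barsigma_{k,h,m}^2}\cD_\cF^2(z_{k,h,m};D_{k,h-1,m})\Bigr\}.
\]
Since $E_{k,h,m}\le2b_{k,h,m+1}+b_{k,h,m}$ (from Line~\ref{line:Ekhm} and $\min\{1,2x\}\le2\min\{1,x\}$), summing gives $\sum_{k,h}I^k_h\barsigma_{k,h,m}^2\le S_m+4R_{m+1}+2R_m+KH\sigmamin^2+\gamma^4\dim_\cF$, and finally $\gamma^4\dim_\cF\le\hatbeta_K^2\dim_\cF$ because the algorithm uses $\gamma^2=\sqrt{\log\cN_\cF}=\sqrt\lambda\le\hatbeta_K$. (The role of the $\gamma^2\cD_\cF$ floor in $\barsigma^2$ and of the matching $2\iota_k/\gamma^2$ term in $\hatbeta_k$ is exactly to make these two last displays balance.) Bounding $S_m+4R_{m+1}+KH\sigmamin^2\le2(S_m+2R_{m+1}+KH\sigmamin^2)$ and writing $P_m:=S_m+2R_{m+1}+KH\sigmamin^2$ yields $\sum_{k,h}I^k_h\barsigma_{k,h,m}^2\le2P_m+2R_m+\hatbeta_K^2\dim_\cF$.

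Plugging the two bounds into Cauchy--Schwarz gives $R_m\le a\sqrt{R_m+P_m+\tfrac12\hatbeta_K^2\dim_\cF}$ with $a=\sqrt{2e}\,\hatbeta_K\sqrt{\dim_\cF}$. Solving the quadratic inequality via $u\le a\sqrt{u+c}\Rightarrow u\le a^2+a\sqrt c$ and $\sqrt{P_m+\tfrac12\hatbeta_K^2\dim_\cF}\le\sqrt{P_m}+\tfrac{1}{\sqrt2}\hatbeta_K\sqrt{\dim_\cF}$ gives $R_m\le\sqrt{2e}\,\hatbeta_K\sqrt{\dim_\cF}\sqrt{P_m}+(2e+\sqrt e)\hatbeta_K^2\dim_\cF$, which is well inside $4\hatbeta_K\sqrt{\dim_\cF}\sqrt{S_m+2R_{m+1}+KH\sigmamin^2}+24\hatbeta_K^2\dim_\cF$. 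The hard part, relative to the linear-mixture analysis of \citet{zhou2022computationally}, is precisely the two places where $\cD_\cF$ appears without a closed form and where there is no elliptical potential lemma: the within-episode stability estimate and the absorption of the uncertainty-aware term $\gamma^2\sum I^k_h\cD_\cF$, both of which must be obtained purely from the variational definition of $\cD_\cF$ and from the generalized Eluder dimension rather than from matrix identities.
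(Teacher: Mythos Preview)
Your argument is correct and follows the same skeleton as the paper's proof: (i) the within-episode stability $\cD_\cF^2(z_{k,h,m};D_{k,m})\le e\,\cD_\cF^2(z_{k,h,m};D_{k,h-1,m})$ on $\{I^k_h=1\}$ (exactly the paper's Lemma~\ref{lem:cD-relation}); (ii) controlling the normalized bonuses by $\dim_\cF$; (iii) bounding $\sum I^k_hE_{k,h,m}$ by $R_m,R_{m+1}$ via Lemma~\ref{lem:event-bonus}; and (iv) solving the resulting quadratic in $R_m$.

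The one place where you and the paper diverge is step~(ii). The paper encapsulates it in Lemma~\ref{lem:sum-bonus}, which does a three-way case split on whether $\cD_\cF/\barsigma>1$ and, if not, on which branch of the max attains $\barsigma$; this yields the three contributions $\dim_\cF$, $\hatbeta_K\sqrt{\dim_\cF}\sqrt{\sum(\sigma_t^2+\sigmamin^2)}$, and $\hatbeta_K\gamma^2\dim_\cF$, and works under only $\gamma^2\le\hatbeta_K$. Your single Cauchy--Schwarz is more compact but hinges on the side claim $\cD_\cF^2(z_{k,h,m};D_{k,h-1,m})/\barsigma_{k,h,m}^2\le 1$, which you obtain from $\cD_\cF\le\lambda^{-1/2}\le\gamma^2$; this needs $\gamma^2\sqrt{\lambda}\ge 1$, a condition satisfied by the specific choices $\lambda=\log\cN_\cF$, $\gamma^2=\sqrt{\log\cN_\cF}$ of Theorem~\ref{thm:mdp-regret} but not implied by the lemma's stated hypothesis $\gamma^2\le\log\cN_\cF$ alone. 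So your proof establishes the lemma for the parameters actually used in the theorem (which is all that is ultimately needed), whereas the paper's case-split route is marginally more parameter-robust. Everything else---the variance bound on $\barsigma^2$, the absorption of the $\gamma^2\cD_\cF$ floor into $\hatbeta_K^2\dim_\cF$, and the quadratic solve---matches.
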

\begin{proof}
See Appendix~\ref{proof:RS} for a detailed proof.
\end{proof}

\begin{lemma}[Formal version of Lemma~\ref{lem:higher-expansion-instance}]
\label{lem:RcheckA0}
On event $\cE\cup\cA$, we have
\begin{align}
    R_0 + \checkA_0 \le 32\hatbeta_K\sqrt{\dim_\cF R_0} + 15\hatbeta_K\sqrt{2\dim_\cF}\cdot\sqrt{\cQ_0^*+G+KH\sigmamin^2}+587\hatbeta_K^2\dim_\cF. \label{eq:RcheckA0}
\end{align}
\end{lemma}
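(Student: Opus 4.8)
Here is the plan.

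\textbf{Overall strategy and reduction to $R_0$.} The only quantity that genuinely needs to be controlled is $R_0$: on $\cE\cup\cA$, Lemma~\ref{lem:checkA0} already gives $\checkA_0\le 2\sqrt{2\zeta R_0}+3\sqrt{\zeta G}+7\zeta$, and each of these three summands is lower order — the first is dominated by $\hatbeta_K\sqrt{\dim_\cF R_0}$ (since $\zeta=O(\log(\log(KH)/\delta))$ is $O(\hatbeta_K^2)$), the second by $\hatbeta_K\sqrt{\dim_\cF G}$, and the third by $\hatbeta_K^2\dim_\cF$, using $G\le (M+1)\dim_\cF$ from Lemma~\ref{lem:G}. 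So I would reduce to proving a bound of the claimed shape for $R_0$ alone and then add $\checkA_0$ back at the end.

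\textbf{Converting estimated variances $S_m$ into optimal variances.} Writing $V_{k,h+1}=V^*_{h+1}+\checkV_{k,h+1}$ with $\checkV_{k,h+1}\ge 0$ by optimism (Lemma~\ref{lem:optimism}), the sub-additivity $[\VV(U+W)](s,a)\le 2[\VV U](s,a)+2[\VV W](s,a)$ gives $S_0\le 2Q_0+2\checkS_0$, and for general $m$ the Lipschitz estimate $[\VV V^{2^m}](s,a)\le 4^m[\VV V](s,a)$ (because $t\mapsto t^{2^m}$ is $2^m$-Lipschitz on $[0,1]$) gives $S_m\le 4^m S_0\le 4^m(2Q_0+2\checkS_0)$. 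Now Lemma~\ref{lem:SA} (bound \eqref{eq:checkSA} at $m=0$) and Lemma~\ref{lem:checkA0} (bound \eqref{eq:checkA1}) yield $\checkS_0\le\checkA_1+G+4R_0=O(R_0+G+\zeta)$, so with $B:=2Q_0+12R_0+5G+21\zeta+KH\sigmamin^2$ one has $S_m+KH\sigmamin^2\le 4^m B$ for every $m\in\seq{M-1}$. (If instead one keeps the sharper bound $S_m\le 2Q_m+O(R_0+G+\zeta)$ obtained by carrying the $\checkV$-correction at each level, the quantity $\cQ_0^*=\max_m Q_m$ appears directly; either route only adds error terms already controlled by $R_0$, $G$, $\zeta$.)

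\textbf{Unrolling the $\Theta(\log KH)$-fold recursion.} Inserting $S_m+KH\sigmamin^2\le 4^m B$ into $R_m\le 4\hatbeta_K\sqrt{\dim_\cF}\sqrt{S_m+2R_{m+1}+KH\sigmamin^2}+24\hatbeta_K^2\dim_\cF$ of Lemma~\ref{lem:RS} and rescaling by $T_m:=4^{-m}R_m$ turns the recursion into $T_m\le 4\hatbeta_K\sqrt{\dim_\cF}\,2^{-m}\sqrt{B+8T_{m+1}}+24\hatbeta_K^2\dim_\cF\,4^{-m}$ for $m\in\seq{M-1}$, together with the crude terminal bound $T_M=4^{-M}R_M\le 4^{-M}KH\le 1$ (using $R_M\le KH$ and $4^M\ge(3KH)^2$). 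A downward induction from $m=M$ to $m=0$, merely using $2^{-m},4^{-m}\le 1$, shows $T_m\le\Lambda$ for all $m$ once $\Lambda\ge C(\hatbeta_K^2\dim_\cF+\hatbeta_K\sqrt{\dim_\cF B})$ for a universal constant $C$; hence $R_0=T_0\le\Lambda$. Since $\sqrt{B}\le\sqrt{12R_0}+\sqrt{2Q_0+5G+21\zeta+KH\sigmamin^2}$ and $Q_0\le\cQ_0^*$, separating off the self-referential $\sqrt{R_0}$ (which is retained here and resolved downstream in the proof of Theorem~\ref{thm:mdp-regret}) and absorbing $\hatbeta_K\sqrt{\dim_\cF\zeta}$ into $\hatbeta_K^2\dim_\cF$ gives $R_0\lesssim\hatbeta_K\sqrt{\dim_\cF R_0}+\hatbeta_K\sqrt{\dim_\cF(\cQ_0^*+G+KH\sigmamin^2)}+\hatbeta_K^2\dim_\cF$; adding back $\checkA_0$ and tracking constants produces \eqref{eq:RcheckA0}.

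\textbf{Main obstacle.} The principal difficulty is the recursion bookkeeping: pushing the level-dependent factors ($4^m$, or $2^{m+1}$ in the $Q_m$-route) through the $\Theta(\log KH)$ applications of Lemma~\ref{lem:RS} while checking that the rescaling $T_m=4^{-m}R_m$ tames every factor to an absolute constant, that the un-variance-controlled terminal level $R_M$ truly drops out, and that the accumulated constants come out exactly to $32$, $15\sqrt{2}$ and $587$. The secondary difficulty is the clean passage from $S_m$ to $Q_m$ — showing that replacing the data-dependent $V_{k,h+1}$ by $V^*_{h+1}$ costs only terms already bounded via Lemmas~\ref{lem:SA}, \ref{lem:AS} and \ref{lem:checkA0} — where the non-negativity $\checkV_{k,h+1}\ge 0$ and the one-step optimism inequality of Lemma~\ref{lem:event-bonus} are the load-bearing facts.
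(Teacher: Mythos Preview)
Your proposal is correct in substance but follows a genuinely different route from the paper. The paper does \emph{not} use the Lipschitz variance inequality $[\VV V^{2^m}]\le 4^m[\VV V]$ nor the rescaling $T_m=4^{-m}R_m$. Instead it keeps $S_m\le 2Q_m+2\checkS_m$ at \emph{every} level $m$ (this is why $\cQ_0^*=\max_m Q_m$ appears rather than just $Q_0$), substitutes the bound $\checkS_m\le\checkA_{m+1}+G+2^{m+1}\cdot 2R_0$ from Lemma~\ref{lem:SA}, and then \emph{adds} the companion recursion for $\checkA_m$ from Lemma~\ref{lem:checkA0} to obtain a single recursion for the combined quantity $a_m:=R_m+\checkA_m$ of the exact shape $a_m\le\lambda_2\sqrt{a_{m+1}+2^{m+1}\lambda_3}+\lambda_4$ required by the black-box Lemma~\ref{lem:exp}. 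One application of Lemma~\ref{lem:exp} with $\lambda_2=8\hatbeta_K\sqrt{2\dim_\cF}$, $\lambda_3=2R_0$, $\lambda_4=5\hatbeta_K\sqrt{2\dim_\cF}\sqrt{\cQ_0^*+G+KH\sigmamin^2}+25\hatbeta_K^2\dim_\cF$ then produces precisely the constants $32$, $15\sqrt{2}$, $587$.

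Your approach trades the coupling of $R_m$ with $\checkA_m$ and the use of Lemma~\ref{lem:exp} for the Lipschitz collapse $S_m\le 4^m S_0$ and a hand-rolled downward induction on $T_m$. This is more elementary and in fact yields the nominally sharper quantity $Q_0$ in place of $\cQ_0^*$; on the other hand, the exact numerical constants you would get after tracking your induction are not those of the stated lemma (they are of the same order but different), and the paper's route has the advantage that the same Lemma~\ref{lem:exp} is reused verbatim for $\checkA_0$, $\tildeA_0$, and $R_0+A_0$ elsewhere, so no new machinery is introduced here.
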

\begin{proof}
See Appendix~\ref{proof:RcheckA0} for a detailed proof.
\end{proof}

\begin{lemma}
\label{lem:RA0}
On event $\cE\cup\cA$, we have
\begin{equation}
\label{eq:RA0}
    R_0 + A_0 \le 32\hatbeta_K\sqrt{\dim_\cF}\cdot\sqrt{V^*_1K+\checkA_0} + 30\hatbeta_K\sqrt{\dim_\cF}\cdot\sqrt{G+KH\sigmamin^2}+1174\hatbeta_K^2\dim_\cF.
\end{equation}
\end{lemma}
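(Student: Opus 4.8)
The plan is to follow the same strategy that establishes Lemmas~\ref{lem:checkA0} and~\ref{lem:RcheckA0}, but now bookkeeping the actual value functions $V_{k,h}$ --- hence the quantities $S_m$, $A_m$, $R_m$ and $V_0$ --- instead of the estimation errors $\checkV_{k,h}$. Throughout I work on the event $\cE\cup\cA$, so that Lemmas~\ref{lem:G},~\ref{lem:SA},~\ref{lem:AS},~\ref{lem:V0} and~\ref{lem:RS} are all available. The term $\checkA_0$ is kept as a free quantity in the bound; it is controlled separately (via Lemma~\ref{lem:checkA0}) only later, when the pieces are assembled into the regret bound.

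First I would derive a recursion for $A_0$ by chaining Lemma~\ref{lem:AS} with Lemma~\ref{lem:SA}: for every $m\in\seq{M-1}$,
\[
    A_m \le \sqrt{\zeta\bigl(A_{m+1} + G + 2^{m+1}(V_0 + 2R_0)\bigr)} + \zeta ,
\]
while the trivial bound $S_M\le KH/4$ gives $A_M\le\frac{1}{2}\sqrt{\zeta KH}+\zeta$ at the top level. This is structurally the recursion solved in the proof of Lemma~\ref{lem:checkA0}, with the extra summand $V_0$ inside the bracket. Unrolling it from $m=M$ down to $m=0$ --- repeatedly using $\sqrt{a+b}\le\sqrt a+\sqrt b$, with the geometrically growing weights $2^{m+1}$ tamed by the nested square roots so that the accumulated prefactors sum up rather than blowing up by a factor of $M$, and with the top term contributing only $O(\zeta)$ because $A_M^{2^{-M}}=(KH)^{O(1/(KH))}=O(1)$ --- yields $A_0\le C_1\sqrt{\zeta(G+V_0+R_0)}+C_2\zeta$ for absolute constants. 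Now insert $V_0\le V^*_1K+A_0+\checkA_0$ from Lemma~\ref{lem:V0}, split the square root, absorb the self-referential term via $C_1\sqrt{\zeta A_0}\le\frac{1}{2}A_0+\frac{1}{2}C_1^2\zeta$, and absorb $C_1\sqrt{\zeta R_0}$ into $R_0$ by $2\sqrt{ab}\le a/4+4b$; rearranging gives
\[
    A_0 \le 2C_1\sqrt{\zeta(V^*_1K+\checkA_0)} + 2C_1\sqrt{\zeta G} + \frac{1}{4}R_0 + C_3\zeta .
\]

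Second I would bound $R_0$ by feeding Lemma~\ref{lem:SA} into Lemma~\ref{lem:RS}: for $m\in\seq{M-1}$,
\[
    R_m \le 4\hatbeta_K\sqrt{\dim_\cF}\sqrt{A_{m+1} + G + 2^{m+1}(V_0+2R_0) + 2R_{m+1} + KH\sigmamin^2} + 24\hatbeta_K^2\dim_\cF ,
\]
with $R_M$ controlled crudely using $\barsigma_{k,h,M}^2\ge1$ together with the generalized Eluder dimension, so that it is a lower-order term after the square roots. By the recursion of the previous paragraph the terms $A_{m+1}$ are of the same order as the main summand $2^{m+1}(V_0+2R_0)$ and are thus absorbed; unrolling the resulting coupled recursion (again the $2^{m+1}$-growth is damped by the nested square roots) gives $R_0\le C_4\hatbeta_K\sqrt{\dim_\cF}\sqrt{G+V_0+2R_0+KH\sigmamin^2}+C_5\hatbeta_K^2\dim_\cF$. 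Absorbing $R_0$ under the square root by AM-GM, inserting Lemma~\ref{lem:V0} once more, and then absorbing the resulting $\hatbeta_K\sqrt{\dim_\cF}\sqrt{A_0}$ with the display above (using $x^{1/4}\le\sqrt x+1$ and a final AM-GM to recycle the reappearing $\sqrt{R_0}$ into $R_0$) leads to
\[
    R_0 \le C_6\hatbeta_K\sqrt{\dim_\cF}\sqrt{V^*_1K+\checkA_0} + C_7\hatbeta_K\sqrt{\dim_\cF}\sqrt{G+KH\sigmamin^2} + C_8\hatbeta_K^2\dim_\cF .
\]
Adding the two displays, using this last inequality once more to dominate the $\frac{1}{4}R_0$ coming from the $A_0$-bound, and folding the harmless $\sqrt\zeta$ prefactors into $\hatbeta_K$ (legitimate since $\sqrt\zeta\le\hatbeta_K$ and $\dim_\cF\ge1$) gives the claimed inequality with the stated numerical constants $32$, $30$, $1174$.

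The main obstacle is the coupled recursion over the $M+1$ moment levels appearing in the first two steps. One must check carefully that the weights $2^{m+1}$ really are killed by the nested square roots, so that no factor of $M=O(\log KH)$ creeps into the constants; that the top-level quantities $R_M$, $A_M$, $S_M\le KH$ become negligible once raised to the power $2^{-M}$; and that the many self-referential appearances of $R_0$ and $A_0$ --- through $V_0$ in Lemma~\ref{lem:V0}, through $R_{m+1}$ in Lemma~\ref{lem:RS}, and through $A_{m+1}$ and $S_m$ in Lemmas~\ref{lem:SA} and~\ref{lem:AS} --- can all be absorbed by repeated AM-GM while the explicit constants stay under control. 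The remaining work --- iterated use of $\sqrt{a+b}\le\sqrt a+\sqrt b$, $2\sqrt{ab}\le\varepsilon a+\varepsilon^{-1}b$, and $x^{1/4}\le\sqrt x+1$ --- is routine but lengthy.
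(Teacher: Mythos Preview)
Your approach is correct in spirit, but the paper organizes the argument more efficiently. Rather than treating $A_m$ and $R_m$ separately and then repeatedly absorbing self-referential terms by AM--GM, the paper bundles them into the single sequence $a_m:=R_m+A_m$ before unrolling. Concretely, after substituting~\eqref{eq:SA} into both~\eqref{eq:AS} and~\eqref{eq:RS}, the paper adds the two resulting inequalities, uses $\sqrt{a}+\sqrt{b}\le\sqrt{2(a+b)}$ and $\zeta\le\hatbeta_K^2\dim_\cF$, and arrives at a single recursion
\[
R_m+A_m \le 8\hatbeta_K\sqrt{\dim_\cF}\sqrt{(R_{m+1}+A_{m+1}) + 2^{m+1}(V_0+2R_0)} + 5\hatbeta_K\sqrt{\dim_\cF}\sqrt{G+KH\sigmamin^2} + 25\hatbeta_K^2\dim_\cF.
\]
One call to Lemma~\ref{lem:exp} then gives $R_0+A_0 \le 16\hatbeta_K\sqrt{\dim_\cF}\sqrt{V_0+2R_0} + \cdots$, after which Lemma~\ref{lem:V0} is inserted once and the self-referential part $A_0+2R_0\le 2(R_0+A_0)$ is removed by a single application of the implication $x\le a\sqrt{x}+b\Rightarrow x\le a^2+2b$, yielding the constants $32$, $30$, $1174$ directly.

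The bundling buys two concrete simplifications over your route. First, you never need a bound on $A_m$ for general $m$; your step ``the terms $A_{m+1}$ are of the same order as $2^{m+1}(V_0+2R_0)$ and are thus absorbed'' implicitly requires applying Lemma~\ref{lem:exp} with a shifted starting index to get $A_m\lesssim\sqrt{2^m\zeta(V_0+2R_0+G)}+\zeta$ for each $m$, which is true but not stated. Second, the paper needs only one self-bounding step at the very end, whereas your approach cycles through several rounds of AM--GM (absorbing $\sqrt{\zeta A_0}$ into $A_0$, then $\sqrt{\zeta R_0}$ into $R_0$, then $\hatbeta_K\sqrt{\dim_\cF}\sqrt{A_0}$ back into $R_0$ via $x^{1/4}\le\sqrt{x}+1$), making the constant-tracking considerably more delicate.
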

\begin{proof}
See Appendix~\ref{proof:RA0} for a detailed proof.
\end{proof}

\subsection{Regret Analysis}
\label{proof:regret}
\begin{proof}[Proof of Theorem~\ref{thm:mdp-regret}]
On event $\cE\cup\cA$, which holds with probability at least $1-4(M+1)\delta$ by Lemma~\ref{lem:event-concentration}, \ref{lem:AS} and a union bound, we have all lemmas in this section hold. By the optimism implied by Lemma~\ref{lem:optimism}, we have
\begin{equation}
\label{eq:regret-opt}
    \mathrm{Regret}(K) = \sum_{k=1}^K \sbr{V^*_1(s^k_1)-V^{\pi^k}_1(s^k_1)} \le \sum_{k=1}^K \sbr{V_{k,1}(s^k_1)-V^{\pi^k}_1(s^k_1)}.
\end{equation}
We further use Lemma~\ref{lem:regret-gap} to bound the regret with the higher-order quantities defined in Section~\ref{proof:higher-order}.
\begin{lemma}
\label{lem:regret-gap}
On event $\cE$, we have
\begin{equation}
\label{eq:regret-gap}
    \sum_{k=1}^K \sbr{V_{k,1}(s^k_1)-V^{\pi^k}(s^k_1)} \le 2R_0 + \checkA_0 + \tildeA_0 + G.
\end{equation}
\end{lemma}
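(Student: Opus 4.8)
The plan is a one-step value-difference recursion, iterated up to the data-dependent stopping step $h_k$ from \eqref{eq:def:h_k}. Fix an episode $k$ and a step $h$ and abbreviate $\delta_{k,h}:=V_{k,h}(s^k_h)-V^{\pi^k}_h(s^k_h)$. Since $a^k_h=\pi^k_h(s^k_h)$ maximizes $Q_{k,h}(s^k_h,\cdot)$, we have $V_{k,h}(s^k_h)=Q_{k,h}(s^k_h,a^k_h)$, while the Bellman equation gives $V^{\pi^k}_h(s^k_h)=r_h(s^k_h,a^k_h)+[\PP V^{\pi^k}_{h+1}](s^k_h,a^k_h)$. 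Subtracting, inserting $[\PP V_{k,h+1}](s^k_h,a^k_h)$, and using the splitting $V_{k,h+1}-V^{\pi^k}_{h+1}=\checkV_{k,h+1}+\tildeV_{k,h+1}$ from \eqref{eq:def:checkv}--\eqref{eq:def:tildev}, I obtain
\[
    \delta_{k,h}=\big(V_{k,h}(s^k_h)-r_h(s^k_h,a^k_h)-[\PP V_{k,h+1}](s^k_h,a^k_h)\big)+[\PP\checkV_{k,h+1}](s^k_h,a^k_h)+[\PP\tildeV_{k,h+1}](s^k_h,a^k_h).
\]
On event $\cE$, the first parenthesis is at most $2\min\{1,\hatbeta_k\cD_\cF(z_{k,h,0};D_{k,0})\}$ by Lemma~\ref{lem:event-bonus}. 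Introducing the one-step transition fluctuations $\check\xi_{k,h}:=[\PP\checkV_{k,h+1}](s^k_h,a^k_h)-\checkV_{k,h+1}(s^k_{h+1})$ and $\tilde\xi_{k,h}:=[\PP\tildeV_{k,h+1}](s^k_h,a^k_h)-\tildeV_{k,h+1}(s^k_{h+1})$, and noting $\checkV_{k,h+1}(s^k_{h+1})+\tildeV_{k,h+1}(s^k_{h+1})=\delta_{k,h+1}$, this yields the recursion
\[
    \delta_{k,h}\le 2\min\{1,\hatbeta_k\cD_\cF(z_{k,h,0};D_{k,0})\}+\check\xi_{k,h}+\tilde\xi_{k,h}+\delta_{k,h+1},
\]
valid for every $h\in[H]$ on event $\cE$.

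I would then iterate this from $h=1$, stopping at step $h_k$ (with the convention $h_k=H+1$ when the defining set is empty). Since $I^k_h$ is monotonically non-increasing with $I^k_1=1$ (the sum defining it in \eqref{eq:def:ind} is empty), we have $I^k_h=1$ for $h<h_k$ and $I^k_h=0$ for $h\ge h_k$, so any truncated sum $\sum_{h=1}^{h_k-1}(\cdot)$ equals $\sum_{h=1}^{H}I^k_h(\cdot)$. Unrolling gives
\[
    \delta_{k,1}\le\sum_{h=1}^{H}I^k_h\big(2\min\{1,\hatbeta_k\cD_\cF(z_{k,h,0};D_{k,0})\}+\check\xi_{k,h}+\tilde\xi_{k,h}\big)+\delta_{k,h_k}.
\]
For the residual term, $V_{k,h}\le 1$ by the clipping in Line~\ref{line:q-function} and $V^{\pi^k}_h\ge 0$ (nonnegative rewards), so $\delta_{k,h_k}\le 1$; moreover $\delta_{k,H+1}=0$, and $h_k\le H$ forces $I^k_{H+1}=0$, hence $\delta_{k,h_k}\le 1-I^k_{H+1}$. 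Summing over $k$ and invoking \eqref{eq:def:g} gives $\sum_{k=1}^K\delta_{k,h_k}\le G$.

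Finally, I would sum the displayed bound over $k\in[K]$ and match terms with the definitions in Section~\ref{proof:higher-order}: $\sum_{k=1}^K\sum_{h=1}^H I^k_h\min\{1,\hatbeta_k\cD_\cF(z_{k,h,0};D_{k,0})\}=R_0$ by \eqref{eq:def:rm}, while $\sum_{k=1}^K\sum_{h=1}^H I^k_h\check\xi_{k,h}\le\checkA_0$ and $\sum_{k=1}^K\sum_{h=1}^H I^k_h\tilde\xi_{k,h}\le\tildeA_0$ by \eqref{eq:def:checkam}--\eqref{eq:def:tildeam} after dropping the outer absolute values. Combining these with $\sum_k\delta_{k,h_k}\le G$ gives exactly $\sum_{k=1}^K[V_{k,1}(s^k_1)-V^{\pi^k}_1(s^k_1)]\le 2R_0+\checkA_0+\tildeA_0+G$. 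The only input is event $\cE$ (through Lemma~\ref{lem:event-bonus}); optimism is not needed for this step. I do not expect a genuine obstacle here — the one-step inequality is essentially handed over by Lemma~\ref{lem:event-bonus}, and the only delicate point is the bookkeeping that rewrites the truncation at $h_k$ in terms of the $I^k_h$-weighted quantities and confirms that the boundary term $\delta_{k,h_k}$ is charged to $G$.
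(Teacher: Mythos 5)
Your proposal is correct and follows essentially the same route as the paper: the paper organizes the computation as two $I^k_h$-weighted telescoping decompositions of $V_{k,1}$ and $V^{\pi^k}_1$ that are then subtracted, whereas you unroll the one-step recursion on the difference directly, but the ingredients (Lemma~\ref{lem:event-bonus} for the Bellman error, the $\checkV+\tildeV$ splitting feeding into $\checkA_0$ and $\tildeA_0$, and charging the truncation boundary at $h_k$ to $G$) are identical.
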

\begin{proof}
See Appendix~\ref{proof:regret-gap} for a detailed proof.
\end{proof}
On one hand, we have
\begin{align}
    &2R_0 + \checkA_0 + \tildeA_0 + G \notag \\
    \overset{(a)}{\le}{} &64\hatbeta_K\sqrt{\dim_\cF R_0} + 30\hatbeta_K\sqrt{2\dim_\cF}\cdot\sqrt{\cQ_0^*+G+KH\sigmamin^2}+1174\hatbeta_K^2\dim_\cF \notag \\
    &\quad+ 4\sqrt{2\zeta R_0}+6\sqrt{\zeta G} + G \notag \\
    \overset{(b)}{=}{} &\tilde{O}\rbr{\hatbeta_K\sqrt{\dim_\cF}\cdot\sqrt{\cQ_0^*+KH\sigmamin^2} + \hatbeta_K^2\dim_\cF}, \label{eq:regret-variance}
\end{align}
where $(a)$ holds due to \eqref{eq:RcheckA0} and \eqref{eq:tildeA0}, while we utilize $\zeta\le\hatbeta_K^2\dim_\cF$ (which is trivial since $\zeta$ only consists of logarithmic terms), $x \le a\sqrt{x}+b$ implies $x\le a^2+2b$ for any $x\ge0$ and \eqref{eq:G} for $(b)$.
On the other hand, we have
\begin{align}
    &2R_0 + \checkA_0 + \tildeA_0 + G \notag \\
    \overset{(a)}{\le}{} &64\hatbeta_K\sqrt{\dim_\cF}\cdot\sqrt{V^*_1 K+\checkA_0} + 60\hatbeta_K\sqrt{\dim_\cF}\cdot\sqrt{G+KH\sigmamin^2}+2348\hatbeta_K^2\dim_\cF \notag \\
    &\quad+ 2\sqrt{2\zeta R_0}+3\sqrt{\zeta G}+7\zeta + 4\sqrt{2\zeta R_0}+6\sqrt{\zeta G} + G \notag \\
    \overset{(b)}{=}{} &\tilde{O}\rbr{\hatbeta_K\sqrt{\dim_\cF}\cdot\sqrt{V^*_1 K+KH\sigmamin^2} + \hatbeta_K^2\dim_\cF}, \label{eq:regret-first-order}
\end{align}
where $(a)$ holds due to \eqref{eq:RA0}, \eqref{eq:checkA0} and \eqref{eq:tildeA0}, while we utilize $\zeta\le\hatbeta_K^2\dim_\cF$, $x \le a\sqrt{x}+b$ implies $x\le a^2+2b$ for any $x\ge0$ and \eqref{eq:G} in $(b)$.
Combining \eqref{eq:regret-opt}, \eqref{eq:regret-gap}, \eqref{eq:regret-variance} and \eqref{eq:regret-first-order}, we have
\[
    \mathrm{Regret}(K) = \tilde{O}\rbr{\hatbeta_K\sqrt{\dim_\cF}\cdot\sqrt{\min\{\cQ_0^*,V^*_1 K\}+KH\sigmamin^2} + \hatbeta_K^2\dim_\cF}.
\]
Recall that according to \eqref{eq:hatbeta},
\[
    \hatbeta_K = 3\sqrt{\iota_K} + 2\frac{\iota_K}{\gamma^2} + \sqrt{\lambda} + \sqrt{6kH\epsilon/\sigmamin^2}
\]
with $\iota_K=\tilde{O}(\log\cN_\cF)$.
Moreover, setting $\lambda=\log\cN_\cF,\sigmamin^2=\sqrt{\dim_\cF \log\cN_\cF}/(KH),\gamma^2=\sqrt{\log\cN_\cF},\epsilon=\log\cN_\cF\sigmamin^2/(KH)$, we have $\hatbeta_K=\tilde{O}(\sqrt{\log\cN_\cF})$ and $KH\sigmamin^2=\sqrt{\dim_\cF\log\cN_\cF}$, which yields a high-probability regret bound
\[
    \mathrm{Regret}(K) = \tilde{O}\rbr{\sqrt{\dim_\cF\log\cN_\cF}\cdot\sqrt{\min\{\cQ_0^*,V^*_1 K\}} + \dim_\cF\log\cN_\cF}.
\]
Notice $\cQ^*=\min\{\cQ_0^*,V^*_1 K\}$, then the proof is completed.
\end{proof}

\ifarxiv
\section{Missing Proofs in Section~\ref{proof:mdp}}
\else
\section{MISSING PROOFS IN SECTION~\ref{proof:mdp}}
\fi
\label{proof:proof:mdp}

\subsection{Proof of Lemma~\ref{lem:event-bonus}}
\label{proof:event-bonus}
\begin{proof}[Proof of Lemma~\ref{lem:event-bonus}]
Recall $D_{k,m}=\{z_{i,h,m},\barsigma_{i,h,m}\}_{(i,h)\in[k-1]\times[H]}$ and $D_{k,h,m} = D_{k,m} \cup \{z_{k,j,m},\barsigma_{k,j,m}\}_{j\in[h]}$.
If $f_*\in\hat\cB_{k,m}$, it follows that
\begin{align*}
    &|\hatf_{k,m}(z_{k,h,m}) - f_*(z_{k,h,m})| \\
    \overset{(a)}{\le}{} &\cD_\cF(z_{k,h,m};D_{k,m}) \sqrt{\sum_{i=1}^{k-1} \sum_{h=1}^H \frac{1}{\barsigma_{i,h,m}^2} (\hatf_{k,h,m}(z_{i,h,m})-f_*(z_{i,h,m}))^2 + \lambda}\\
    \overset{(b)}{\le}{} &(\hatbeta_{k}+\sqrt{\lambda}) \cD_\cF(z_{k,h,m};D_{k,m})\\
    \overset{(c)}{\approx}{} &\hatbeta_{k}\cD_\cF(z_{k,h,m};D_{k,m}) ,
\end{align*}
where $(a)$ holds due to the definition of $\cD_\cF$ in Definition~\ref{def:general-eluder-RL}, $(b)$ holds due to $\sqrt{a+b}\le\sqrt{a}+\sqrt{b}$ for any $a,b\ge 0$, $(c)$ holds due to $\sqrt{\lambda}= O(\hatbeta_{k})$, therefore we omit the term $\sqrt{\lambda}$ in the proof for simplicity.
Furthermore, we have
\begin{align*}
    &V_{k,h}(s^k_h) - r_h(s^k_h,a^k_h) - [\PP V_{k,h+1}](s^k_h,a^k_h) \\
    ={} &\min\{1, r_h(s^k_h,a^k_h) + \hatf_{k,m}(z_{k,h,0}) + \hatbeta_k \cD_\cF(z_{k,h,0};D_{k,0})\} - r_h(s^k_h,a^k_h) - f_*(z_{k,h,0}) \\
    \le{} &\min\{1,2\hatbeta_k \cD_\cF(z_{k,h,0};D_{k,0})\}.
\end{align*}
And
\begin{align*}
    &|[\bar{\VV}_{k,h}V_{k,h+1}^{2^m}](s^k_h,a^k_h)-[\VV_{k,h}V_{k,h+1}^{2^m}](s^k_h,a^k_h)| \\
    ={} &|[f_{k,m+1}(z_{k,h,m+1}) - f_{k,m}^2(z_{k,h,m})] - [f_*(z_{k,h,m+1}) - f_*^2(z_{k,h,m})]| \\
    \le{} &|f_{k,m+1}(z_{k,h,m+1})-f_*(z_{k,h,m+1})| + |f_{k,m}^2(z_{k,h,m}) - f_*^2(z_{k,h,m})| \\
    ={} &|f_{k,m+1}(z_{k,h,m+1})-f_*(z_{k,h,m+1})| + |f_{k,m}(z_{k,h,m}) + f_*(z_{k,h,m})|\cdot|f_{k,m}(z_{k,h,m}) - f_*(z_{k,h,m})| \\
    \le{} &\min\{1,\hatbeta_k \cD_\cF(z_{k,h,m+1};D_{k,m+1})\} + \min\{1,2\hatbeta_k \cD_\cF(z_{k,h,m};D_{k,m})\} \\
    ={} &E_{k,h,m},
\end{align*}
where the last inequality holds due to $f_{k,m}(z_{k,h,m}),f_*(z_{k,h,m})\in[0,1]$.
\end{proof}

\subsection{Proof of Lemma~\ref{lem:event-concentration}}
\label{proof:event-concentration}
\begin{proof}[Proof of Lemma~\ref{lem:event-concentration}]
We will prove the statement by Theorem~\ref{thm:wls} and induction.
For each $m\in\seq{M-1}$, denote $\xi_{k,h,m}:=f_*(z_{k,h,m}) + \ind\{f_*\in\hat\cB_{k,m}\cap\hat\cB_{k,m+1}\}[y_{k,h,m}-f_*(z_{k,h,m})]$, and $\xi_{k,h,M}:=y_{k,h,M}$. Then we have for all $m\in\seq{M}$, $\xi_{k,h,m}\in\cG_{k,h}$, $\EE[\xi_{k,h,m}|\cG_{k,h}]=f_*(z_{k,h,m})$. Recall the definition of $\barsigma_{k,h,m}$ in \eqref{line:barsigma} of Algorithm~\ref{algo:variance}, for $m\in\seq{M-1}$, we have
\begin{align*}
    \barsigma_{k,h,m}^{-2}\Var[\xi_{k,h,m}|\cG_{k,h,m}] &= \barsigma_{k,h,m}^{-2} \ind\{f_*\in\hat\cB_{k,m}\cap\hat\cB_{k,m+1}\} \Var[y_{k,h,m}|\cG_{k,h,m}] \\
    &= \barsigma_{k,h,m}^{-2} \ind\{f_*\in\hat\cB_{k,m}\cap\hat\cB_{k,m+1}\} [\VV V_{k,h+1}^{2^m}](s^k_h,a^k_h) \\
    &= \barsigma_{k,h,m}^{-2} \ind\{f_*\in\hat\cB_{k,m}\cap\hat\cB_{k,m+1}\} [[\PP V_{k,h+1}^{2^{m+1}}](s^k_h,a^k_h) - [\PP V_{k,h+1}^{2^m}]^2(s^k_h,a^k_h)] \\
    &= \barsigma_{k,h,m}^{-2} \ind\{f_*\in\hat\cB_{k,m}\cap\hat\cB_{k,m+1}\} [f_*(z_{k,h,m+1}) - f_*^2(z_{k,h,m})] \\
    &\le \barsigma_{k,h,m}^{-2} [f_{k,m+1}(z_{k,h,m+1}) - f_{k,m}^2(z_{k,h,m}) + \min\{1,\hatbeta_{k}\cD_\cF(z_{k,h,m+1};D_{k,m+1})\\
    &\qquad \min\{1,2\hatbeta_{k}\cD_\cF(z_{k,h,m};D_{k,m})\}\}] \\
    &= \barsigma_{k,h,m}^{-2} [[\bar{\VV}_{k,m}V_{k, h+1}^{2^m}](s_h^k, a_h^k) + E_{k,h,m}] \\
    &\le 1,
\end{align*}
where the first inequality holds due to Lemma~\ref{lem:event-bonus} and $f_{k,m}(z_{k,h,m})\in[0,1]$. And
\[
    \barsigma_{k,h,M}^{-2}\Var[\xi_{k,h,M}|\cG_{k,h,m}]\le\barsigma_{k,h,M}^{-2}\le1.
\]
Furthermore, for all $m\in\seq{M}$, we have
\[
    \barsigma_{k,h,m}^{-2} \cD_\cF^2(z_{k,h,m};D_{k,m}) \le 1/\gamma^2.
\]
For each $m\in\seq{M}$, we define
\begin{equation}
\label{eq:tildef}
    \tilde{f}_{k+1,m} = \argmin_{f\in\cF} \sum_{i=1}^k\sum_{h=1}^H (f(z_{i,h,m}) - \xi_{i,h,m})^2.
\end{equation}
Applying Theorem~\ref{thm:wls} using $X_t = z_{k,h,m}$, $Y_t = \xi_{k,h,m}$, $w_t=1/\barsigma_{k,h,m}$, together with a union bound, with probability at least $1-(M+1)\delta$, we have for all $m\in\seq{M},k\in[K]$,
\begin{equation}
\label{eq:concentration-proxy}
    \sum_{i=1}^{k-1} \sum_{h=1}^H \frac{1}{\barsigma_{i,h,m}^2} (\tilde{f}_{k,h,m}(z_{i,h,m})-f_*(z_{i,h,m}))^2
    \le 3\sqrt{\iota_k} + 2\frac{\iota_k}{\gamma^2} + \sqrt{\lambda} + \sqrt{6kH\epsilon/\sigmamin^2}
    = \hatbeta_k^2,
\end{equation}
where $\iota_k = 16\log\frac{2\cN_\cF(\epsilon)k^2H^2 (\log(kH/\sigmamin^2)+2) (\log(1/\sigmamin^2)+2)}{\delta}$.

We continue the proof by induction over $(k,m)\in[K]\times\seq{M}$.
First, for $k=1$, $m\in\seq{M}$, the result holds trivially.

Next, for $k\in[K],m=M$, we have $\xi_{k,h,M}=y_{k,h,M}$, therefore $\tilde{f}_{k,M}=\hatf_{k,M}$ by \eqref{eq:tildef}, which implies $\hatf_{k,M}\in\hat\cB_{k,M}$ by \eqref{eq:concentration-proxy}.

Last, for all $k\in[k],m\in\seq{M-1}$, we have the following observations:
\begin{equation}
\label{eq:event-reduction}
\begin{aligned}
    &f_* \in \hat\cB_{k,m}\cap\hat\cB_{k,m+1} \\
    \Longrightarrow{} &\xi_{k,h,m} = y_{k,h,m}\text{ for all }h\in[H]\\
    \overset{\eqref{eq:tildef}}{\Longrightarrow}{} &\tilde{f}_{k+1,m} = \hatf_{k+1,m} \\
    \overset{\eqref{eq:concentration-proxy}}{\Longrightarrow}{} &\hatf_{k+1,m}\in\hat\cB_{k+1,m}.
\end{aligned}
\end{equation}
For any $k\in K$, we assume $\hatf_{k',m}\in\hat\cB_{k',m}$ for all $k'\le k,m\in\seq{M}$. Notice $\hatf_{k+1,M}\in\hat\cB_{k+1,M}$, using \eqref{eq:event-reduction}, we have $\hatf_{k+1,m}\in\hat\cB_{k+1,m}$ for all $m\in\seq{M}$. Then the proof is completed by induction. 
\end{proof}

\subsection{Proof of Lemma~\ref{lem:optimism}}
\label{proof:optimism}
\begin{proof}[Proof of Lemma~\ref{lem:optimism}]
We prove the optimism by induction. When $h=H+1$, we have $V_{k,H+1}(\cdot)=V^*_{H+1}(\cdot)=0$, and the result holds trivially.
We assume the statement is true for all $h+1$, and prove the case of $h$.
For any $(s,a)$, if $Q_{k,h}(s,a)=1$, then $Q_{k,h}(s,a)=1\ge Q^*_h(s,a)$. Otherwise, we have
\begin{align*}
    Q_{k,h}(s,a) - Q^*_h(s,a) &= f_{k,0}(s,a,V_{k,h+1}) - f_*(s,a,V^*_{h+1}) + \hatbeta_k \cD_\cF(s,a,V_{k,h+1};D_{k,0}) \\
    &\ge f_{k,0}(s,a,V_{k,h+1}) - f_*(s,a,V_{k,h+1}) + \hatbeta_k \cD_\cF(s,a,V_{k,h+1};D_{k,0}) \\
    &\ge 0,
\end{align*}
where the first inequality holds due to $V_{k,h+1}(\cdot)\ge V^*_{h+1}(\cdot)$ and the second holds due to Lemma~\ref{lem:event-bonus}. That is, we have $Q_{k,h}(\cdot,\cdot)\ge Q^*_h(\cdot,\cdot)$ and therefore $V_{k,h}(\cdot)\ge V^*_h(\cdot)$. Then the proof is completed by induction.
\end{proof}

\subsection{Proof of Lemma~\ref{lem:G}}
\label{proof:G}
\begin{proof}[Proof of Lemma~\ref{lem:G}]
Recall the definition of $I_h^k$, we have
\[
    1-I_{H+1}^k = 1 \Leftrightarrow \exists m \in \seq{M},\ \sum_{h=1}^{H} \frac{1}{\barsigma_{k,h,m}^2}\cD_\cF^2(z_{k,h,m};D_{k,h-1,m})>1.
\]
Let $\cD_m$ denote the indices $k$ such that 
\begin{align}
    \cD_m:=\bigg\{k \in [K]: \sum_{h=1}^{H} \frac{1}{\barsigma_{k,h,m}^2}\cD_\cF^2(z_{k,h,m};D_{k,h-1,m})>1 \bigg\}.\notag
\end{align}
Then we have $G \leq |\bigcup_{m=0}^M \cD_m| \leq \sum_{m=0}^M|\cD_m|$. For any $m\in\seq{M}$, we have
\begin{align*}
    |\cD_m| &\le \sum_{k=1}^K\min\cbr{1,\sum_{h=1}^{H} \frac{1}{\barsigma_{k,h,m}^2}\cD_\cF^2(z_{k,h,m};D_{k,h-1,m})}\\
    &\le \sum_{k=1}^K\sum_{h=1}^{H}\min\cbr{1,\frac{1}{\barsigma_{k,h,m}^2}\cD_\cF^2(z_{k,h,m};D_{k,h-1,m})}\\
    &\le \dim_\cF.
\end{align*}
Taking the summation over $m\in\seq{M}$ gives the upper bound of $G$.
\end{proof}

\subsection{Proof of Lemma~\ref{lem:SA}}
\label{proof:SA}
\begin{proof}[Proof of Lemma~\ref{lem:SA}]
We are to bound $\checkS_m,\tildeS_m$ and $S_m$ separately with similar arguments.

\paragraph{Bound $\checkS_m$}
Recall the definition of $\checkS_m$ in \eqref{eq:def:checksm}, we have
\begin{align}
    \checkS_m & = \sum_{k = 1}^K \sum_{h = 1}^H I^k_h [\VV \checkV_{k, h + 1}^{2^m}](s_h^k, a_h^k) \notag \\
    &= \sum_{k = 1}^K \sum_{h = 1}^H I^k_h \left[[\PP \checkV_{k, h + 1}^{2^{m + 1}}](s_h^k, a_h^k) - [\PP \checkV_{k, h + 1}^{2^m}]^2(s_h^k, a_h^k)^2\right] \notag \\
    &= \sum_{k = 1}^K \sum_{h = 1}^H I^k_h \left[[\PP \checkV_{k, h + 1}^{2^{m + 1}}](s_h^k, a_h^k) - \checkV_{k, h + 1}^{2^{m + 1}}(s_{h + 1}^k)\right] + \sum_{k = 1}^K \sum_{h = 1}^H I^k_h \left[\checkV_{k, h}^{2^{m + 1}}(s_{h}^k) - [\PP \checkV_{k, h + 1}^{2^m}]^2(s_h^k, a_h^k)\right] \notag \\
    &\qquad + \sum_{k = 1}^K \sum_{h = 1}^H I^k_h \rbr{\checkV_{k, h + 1}^{2^{m + 1}}(s_{h + 1}^k) - \checkV_{k, h}^{2^{m + 1}}(s_{h}^k)} \notag \\
    &\le \checkA_{m+1} + \sum_{k = 1}^K \sum_{h = 1}^H I^k_h \left[\checkV_{k, h}^{2^{m + 1}}(s_{h}^k) - [\PP \checkV_{k, h + 1}^{2^m}]^2(s_h^k, a_h^k)\right] + \sum_{k=1}^K(1-I^k_{h_k})\checkV_{k, h_k}^{2^{m + 1}}(s_{h_k}^k) \notag \\
    &\le \checkA_{m+1} + G + \sum_{k = 1}^K \sum_{h = 1}^H I^k_h \left[\checkV_{k, h}^{2^{m + 1}}(s_{h}^k) - [\PP \checkV_{k, h + 1}^{2^m}]^2(s_h^k, a_h^k)\right], \label{eq:checkSA-sum}
\end{align}
where $h_k$ is defined in \eqref{eq:def:h_k} and the last inequality holds since $|\checkV_{k, h}(s_{h}^k)|\le1$ and $I^k_{h}$ is monotonically decreasing.
For the third term in \eqref{eq:checkSA-sum}, we have
\begin{align} 
    &\sum_{k = 1}^K \sum_{h = 1}^H I^k_h \left[\checkV_{k, h}^{2^{m + 1}}(s_{h}^k) - [\PP \checkV_{k, h + 1}^{2^m}]^2(s_h^k, a_h^k)\right] \notag \\
    \overset{(a)}{\le}{}& \sum_{k = 1}^K \sum_{h = 1}^H I^k_h \left[\checkV_{k, h}^{2^{m + 1}}(s_{h}^k) - [\PP \checkV_{k, h + 1}]^{2^{m + 1}}(s_h^k, a_h^k)\right] \notag \\
    ={}& \sum_{k = 1}^K \sum_{h = 1}^H I^k_h \sbr{\checkV_{k, h}(s_h^k) - [\PP \checkV_{k, h + 1}](s_h^k, a_h^k)} \prod_{i=0}^m \sbr{\checkV_{k, h}^{2^i}(s_h^k) + [\PP \checkV_{k, h + 1}]^{2^i}(s_h^k, a_h^k)} \notag \\
    \le{}& 2^{m + 1} \sum_{k = 1}^K \sum_{h = 1}^H I_h^k \max\cbr{\checkV_{k, h}(s_h^k) - [\PP \checkV_{k, h + 1}](s_h^k, a_h^k), 0} \notag \\
    \overset{(b)}{\le}{}& 2^{m + 1} \sum_{k = 1}^K \sum_{h = 1}^H I_h^k \max\cbr{V_{k,h}(s_h^k) - r(s_h^k, a_h^k) - [\PP V_{k,h+1}](s_h^k, a_h^k), 0} \notag \\
    \overset{(c)}{\le}{}& 2^{m + 1} \sum_{k = 1}^K \sum_{h = 1}^H I_h^k \cdot 2\min\{1,\hatbeta_k\cD_\cF(z_{k,h,0};D_{k,0})\} \notag \\
    ={}& 2^{m + 1} \cdot (2R_0), \label{eq:checkSA-exp}
\end{align}
where $(a)$ holds due to $\EE[X^2]\ge(\EE[X])^2$, $(b)$ holds due to the definition of $\checkV_{k,h}$ and $V^*_h(s^k_h) \ge r(s^k_h,a^k_h) + [\PP V^*_{h+1}](s^k_h,a^k_h)$, while $(c)$ is due to Lemma~\ref{lem:event-bonus}.
Substituting \eqref{eq:checkSA-exp} into \eqref{eq:checkSA-sum}, we have
\[
    \checkS_m \le \checkA_{m + 1} + G + 2^{m + 1}\cdot (2R_0).
\]

\paragraph{Bound $\tildeS_m$}
Recall the definition of $\tildeS_m$ in \eqref{eq:def:tildesm}, we have
\begin{align}
    \tildeS_m & = \sum_{k = 1}^K \sum_{h = 1}^H I^k_h [\VV \tildeV_{k, h + 1}^{2^m}](s_h^k, a_h^k) \notag \\
    &= \sum_{k = 1}^K \sum_{h = 1}^H I^k_h \left[[\PP \tildeV_{k, h + 1}^{2^{m + 1}}](s_h^k, a_h^k) - [\PP \tildeV_{k, h + 1}^{2^m}]^2(s_h^k, a_h^k)^2\right] \notag \\
    &= \sum_{k = 1}^K \sum_{h = 1}^H I^k_h \left[[\PP \tildeV_{k, h + 1}^{2^{m + 1}}](s_h^k, a_h^k) - \tildeV_{k, h + 1}^{2^{m + 1}}(s_{h + 1}^k)\right] + \sum_{k = 1}^K \sum_{h = 1}^H I^k_h \left[\tildeV_{k, h}^{2^{m + 1}}(s_{h}^k) - [\PP \tildeV_{k, h + 1}^{2^m}]^2(s_h^k, a_h^k)\right] \notag \\
    &\qquad + \sum_{k = 1}^K \sum_{h = 1}^H I^k_h \rbr{\tildeV_{k, h + 1}^{2^{m + 1}}(s_{h + 1}^k) - \tildeV_{k, h}^{2^{m + 1}}(s_{h}^k)} \notag \\
    &\le \tildeA_{m+1} + \sum_{k = 1}^K \sum_{h = 1}^H I^k_h \left[\tildeV_{k, h}^{2^{m + 1}}(s_{h}^k) - [\PP \tildeV_{k, h + 1}^{2^m}]^2(s_h^k, a_h^k)\right] + \sum_{k=1}^K (1-I^k_{h_k})\tildeV_{k, h_k}^{2^{m + 1}}(s_{h_k}^k) \notag \\
    &\le \tildeA_{m+1} + G + \sum_{k = 1}^K \sum_{h = 1}^H I^k_h \left[\tildeV_{k, h}^{2^{m + 1}}(s_{h}^k) - [\PP \tildeV_{k, h + 1}^{2^m}]^2(s_h^k, a_h^k)\right], \label{eq:tildeSA-sum}
\end{align}
where $h_k$ is defined in \eqref{eq:def:h_k} and the last inequality holds since $|\tildeV_{k, h}(s_{h}^k)|\le1$ and $I^k_{h}$ is monotonically decreasing.
For the third term in \eqref{eq:tildeSA-sum}, we have
\begin{align}
    &\sum_{k = 1}^K \sum_{h = 1}^H I^k_h \left[\tildeV_{k, h}^{2^{m + 1}}(s_{h}^k) - [\PP \tildeV_{k, h + 1}^{2^m}]^2(s_h^k, a_h^k)\right] \notag \\
    \overset{(a)}{\le}{}& \sum_{k = 1}^K \sum_{h = 1}^H I^k_h \left[\tildeV_{k, h}^{2^{m + 1}}(s_{h}^k) - [\PP \tildeV_{k, h + 1}]^{2^{m + 1}}(s_h^k, a_h^k)\right] \notag \\
    ={}& \sum_{k = 1}^K \sum_{h = 1}^H I^k_h \sbr{\tildeV_{k, h}(s_h^k) - [\PP \tildeV_{k, h + 1}](s_h^k, a_h^k)} \prod_{i=0}^m \sbr{\tildeV_{k, h}^{2^i}(s_h^k) + [\PP \tildeV_{k, h + 1}]^{2^i}(s_h^k, a_h^k)} \notag \\
    \le{}& 2^{m + 1} \sum_{k = 1}^K \sum_{h = 1}^H I_h^k \max\cbr{\tildeV_{k, h}(s_h^k) - [\PP \tildeV_{k, h + 1}](s_h^k, a_h^k), 0} \notag \\
    \overset{(b)}{=}{}& 2^{m + 1} \sum_{k = 1}^K \sum_{h = 1}^H I_h^k \max\cbr{V^*_h(s_h^k) - r(s_h^k, a_h^k) - [\PP V^*_{h+1}](s_h^k, a_h^k), 0} \notag \\
    \overset{(c)}{\le}{}& 2^{m + 1} \sum_{k = 1}^K \sum_{h = 1}^H I_h^k \max\cbr{V_{k,h}(s_h^k) - r(s_h^k, a_h^k) - [\PP V_{k,h+1}](s_h^k, a_h^k), 0} + |[\PP \checkV_{k,h+1}](s_h^k, a_h^k) - \checkV_{k,h+1}(s_{h+1}^k)| \notag \\
    \overset{(d)}{\le}{}& 2^{m + 1} \sum_{k = 1}^K \sum_{h = 1}^H I_h^k \sbr{2\min\{1,\hatbeta_k\cD_\cF(z_{k,h,0};D_{k,0})\} + |[\PP \checkV_{k,h+1}](s_h^k, a_h^k) - \checkV_{k,h+1}(s_{h+1}^k)|} \notag \\
    \le{}& 2^{m + 1} \cdot (2R_0 + \checkA_0), \label{eq:tildeSA-exp}
\end{align}
where $(a)$ holds due to $\EE[X^2]\ge(\EE[X])^2$, $(b)$ holds due to the definition of $\tildeV_{k,h}$ and $V^{\pi^k}_h(s^k_h) = r(s^k_h,a^k_h) + [\PP V^{\pi^k}_{h+1}](s^k_h,a^k_h)$, $(c)$ holds due to $V^*_h(s_h^k) \ge r(s_h^k, a_h^k) + [\PP V^*_{h+1}](s_h^k, a_h^k)$ and the definition of $\checkV_{k,h}$, while $(d)$ is due to Lemma~\ref{lem:event-bonus}.
Substituting \eqref{eq:tildeSA-exp} into \eqref{eq:tildeSA-sum}, we have
\[
    \tildeS_m \le \tildeA_{m + 1} + G + 2^{m + 1}\cdot (2R_0 + \checkA_0).
\]

\paragraph{Bound $S_m$}
Recall the definition of $S_m$ in \eqref{eq:def:sm}, we have
\begin{align}
    S_m &= \sum_{k = 1}^K \sum_{h = 1}^H I^k_h [\VV V_{k, h + 1}^{2^m}](s_h^k, a_h^k) \notag \\
    &= \sum_{k = 1}^K \sum_{h = 1}^H I^k_h \left[[\PP V_{k, h + 1}^{2^{m + 1}}](s_h^k, a_h^k) - [\PP V_{k, h + 1}^{2^m}]^2(s_h^k, a_h^k)\right] \notag \\
    &= \sum_{k = 1}^K \sum_{h = 1}^H I^k_h \left[[\PP V_{k, h + 1}^{2^{m + 1}}](s_h^k, a_h^k) - V_{k, h + 1}^{2^{m + 1}}(s_{h + 1}^k)\right] +  \sum_{k = 1}^K \sum_{h = 1}^H I^k_h \left[V_{k, h}^{2^{m + 1}}(s_{h}^k) - [\PP V_{k, h + 1}^{2^m}]^2(s_h^k, a_h^k)\right] \notag \\
    &\qquad + \sum_{k = 1}^K \sum_{h = 1}^H I^k_h \rbr{V_{k, h + 1}^{2^{m + 1}}(s_{h + 1}^k) - V_{k, h}^{2^{m + 1}}(s_{h}^k)} \notag \\
    &\le A_{m+1} + \sum_{k = 1}^K \sum_{h = 1}^H I^k_h \left[V_{k, h}^{2^{m + 1}}(s_{h}^k) - [\PP V_{k, h + 1}^{2^m}]^2(s_h^k, a_h^k)\right] + \sum_{k=1}^K (1-I^k_{h_k})V_{k, h_k}^{2^{m + 1}}(s_{h_k}^k) \notag \\
    &\le A_{m+1} + G + \sum_{k = 1}^K \sum_{h = 1}^H I^k_h \left[V_{k, h}^{2^{m + 1}}(s_{h}^k) - [\PP V_{k, h + 1}^{2^m}]^2(s_h^k, a_h^k)\right], \label{eq:SA-sum}
\end{align}
where $h_k$ is defined in \eqref{eq:def:h_k} and the last inequality holds since $|V_{k, h}(s_{h}^k)|\le1$ and $I^k_{h}$ is monotonically decreasing.
For the third term in \eqref{eq:SA-sum}, we have
\begin{align}
    &\sum_{k = 1}^K \sum_{h = 1}^H I^k_h \left[V_{k, h}^{2^{m + 1}}(s_{h}^k) - [\PP V_{k, h + 1}^{2^m}]^2(s_h^k, a_h^k)\right] \notag \\
    \overset{(a)}{\le}{}& \sum_{k = 1}^K \sum_{h = 1}^H I^k_h \left[V_{k, h}^{2^{m + 1}}(s_{h}^k) - [\PP V_{k, h + 1}]^{2^{m + 1}}(s_h^k, a_h^k)\right] \notag \\
    ={}& \sum_{k = 1}^K \sum_{h = 1}^H I^k_h \sbr{V_{k, h}(s_h^k) - [\PP V_{k, h + 1}](s_h^k, a_h^k)} \prod_{i=0}^m \sbr{V_{k, h}^{2^i}(s_h^k) + [\PP V_{k, h + 1}]^{2^i}(s_h^k, a_h^k)} \notag \\
    \le{}& 2^{m + 1} \sum_{k = 1}^K \sum_{h = 1}^H I_h^k \max\cbr{V_{k, h}(s_h^k) - [\PP V_{k, h + 1}](s_h^k, a_h^k), 0} \notag \\
    \overset{(b)}{\le}{}& 2^{m + 1} \sum_{k = 1}^K \sum_{h = 1}^H I_h^k \sbr{r(s^k_h,a^k_h) + 2\min\{1,\hatbeta_k\cD_\cF(z_{k,h,0};D_{k,0})\}} \notag \\
    \le{}& 2^{m + 1} \cdot (V_0 + 2R_0), \label{eq:SA-exp}
\end{align}
where $(a)$ holds due to $\EE[X^2]\ge(\EE[X])^2$ and $(b)$ holds due to Lemma~\ref{lem:event-bonus}.
Substituting \eqref{eq:SA-exp} into \eqref{eq:SA-sum}, we have
\[
    S_m \le A_{m + 1} + G + 2^{m + 1}\cdot(V_0 + 2R_0).
\]
\end{proof}

\subsection{Proof of Lemma~\ref{lem:AS}}
\label{proof:AS}
\begin{proof}[Proof of Lemma~\ref{lem:AS}]
Let $X_{k,h} = I^k_h\sbr{[\PP \checkV_{k,h+1}^{2^m}](s_h^k, a_h^k) - \checkV_{k,h+1}^{2^m}(s_{h+1}^k)}$, then we have $\EE[X_{k,h}|\cG_{k,h}]=0$, $|X_{k,h}|\le2$ and $\EE[X_{k,h}^2|\cG_{k,h}]=I^k_h[\VV \checkV_{k,h+1}^{2^m}](s^k_h,a^k_h)$. Therefore, for any $m\in\seq{M}$, applying variance-aware Freedman's inequality in Lemma \ref{lem:freedman-variance}, with probability at least $1 - \delta$, we have
\[
    \checkA_m \le \sqrt{\zeta\checkS_m} + \zeta.
\]
Thus, taking a union bound over $m \in \seq{M}$, with probability at least $1-(M+1)\delta$, \eqref{eq:checkAS} holds.
The proofs for \eqref{eq:tildeAS} and \eqref{eq:AS} follow the same arguments as \eqref{eq:checkAS}.
\end{proof}

\subsection{Proof of Lemma~\ref{lem:checkA0}}
\label{proof:checkA0}
\begin{proof}[Proof of Lemma~\ref{lem:checkA0}]
On event $\cE\cup\cA$, we have \eqref{eq:checkSA} and \eqref{eq:checkAS} hold by Lemma~\ref{lem:SA} and \ref{lem:AS}. Substituting the bound of $\checkS_m$ in \eqref{eq:checkSA} into \eqref{eq:checkAS}, we have for all $m\in\seq{M}$,
\[
    \checkA_m \le \sqrt{\zeta}\cdot\sqrt{\checkA_{m + 1} + G + 2^{m + 1}\cdot (2R_0)} + \zeta.
\]
And we have for all $m\in\seq{M}$, $\checkA_m\le2KH$. Then the result follows by Lemma~\ref{lem:exp}.
\end{proof}

\subsection{Proof of Lemma~\ref{lem:tildeA0}}
\label{proof:tildeA0}
\begin{proof}[Proof of Lemma~\ref{lem:tildeA0}]
On event $\cE\cup\cA$, we have \eqref{eq:tildeSA} and \eqref{eq:tildeAS} hold by Lemma~\ref{lem:SA} and \ref{lem:AS}. Substituting the bound of $\tildeS_m$ in \eqref{eq:tildeSA} into \eqref{eq:tildeAS}, we have for all $m\in\seq{M}$,
\[
    \tildeA_m \le \sqrt{\zeta}\cdot\sqrt{\tildeA_{m + 1} + G + 2^{m + 1}\cdot (2R_0 + \checkA_0)} + \zeta.
\]
And we have for all $m\in\seq{M}$, $\tildeA_m\le2KH$. Applying Lemma~\ref{lem:exp}, we have
\begin{align*}
    \tildeA_0 &\le 2\sqrt{\zeta(2R_0+\checkA_0)}+3\sqrt{\zeta G} + 7\zeta \\
    &\overset{(a)}{\le} 2\sqrt{2\zeta R_0}+2\sqrt{\zeta\checkA_0}+3\sqrt{\zeta G}+7\zeta \\
    &\overset{(b)}{\le} 2\sqrt{2\zeta R_0}+\zeta+\checkA_0+3\sqrt{\zeta G}+7\zeta \\
    &\overset{(c)}{\le} 4\sqrt{2\zeta R_0}+6\sqrt{\zeta G}+15\zeta,
\end{align*}
where $(a)$ holds due to $\sqrt{a+b}\le\sqrt{a}+\sqrt{b}$ for $a,b\ge0$, $(b)$ holds due to $2\sqrt{ab} \le a + b$ for $a,b\ge0$ and $(c)$ holds due to \eqref{eq:checkA0} in Lemma~\ref{lem:checkA0}.
\end{proof}

\subsection{Proof of Lemma~\ref{lem:V0}}
\label{proof:V0}
\begin{proof}[Proof of Lemma~\ref{lem:V0}]
Recall the definition of $V_0$ and $V^*_1$ in \eqref{eq:def:v0} and \eqref{eq:def:vstar}, we have
\begin{align*}
    V_0 - V^*_1 K &= \sum_{k=1}^K\sum_{h=1}^H I^k_h r(s^k_h,a^k_h) - \sum_{k=1}^K V^*_1(s^k_h) \\
    &\le \sum_{k=1}^K\sum_{h=1}^H I^k_h \sbr{r(s^k_h,a^k_h) - V^*_h(s^k_h) + V^*_{h+1}(s^k_{h+1})} \\
    &\le \sum_{k=1}^K\sum_{h=1}^H I^k_h \sbr{V^*_{h+1}(s^k_{h+1}) - [\PP_h V^*_{h+1}](s^k_h,a^k_h)} \\
    &\le A_0 + \checkA_0,
\end{align*}
where the second inequality holds due to $V^*_h(s^k_h) \ge r(s^k_h,a^k_h) + [\PP V^*_{h+1}](s^k_h,a^k_h)$.
\end{proof}

\subsection{Proof of Lemma~\ref{lem:RS}}
\label{proof:RS}
\begin{proof}[Proof of Lemma~\ref{lem:RS}]
First, for all $m\in\seq{M}$, $R_m\le KH$ holds trivially.
For $(k,h)$ where $I_h^k = 1$, we have $\sum_{j=1}^{h-1}\frac{1}{\barsigma_{k,j,m}^2}\cD_\cF^2(z_{k,j,m};D_{k,j-1,m})\le1$. By Lemma \ref{lem:cD-relation}, it follows that
\begin{align*}
    &\cD_\cF(z_{k,h,m};D_{k,m})\\
    \le{} & \exp\cbr{\frac{1}{2}\sum_{j=1}^{h-1}\frac{1}{\barsigma_{k,j,m}^2}\cD_\cF^2(z_{k,j,m};D_{k,j-1,m})}\cD_\cF(z_{k,h,m};D_{k,h-1,m})\\
    \le{} & 2\cD_\cF(z_{k,h,m};D_{k,h-1,m}).
\end{align*}
Then we have
\[
    R_m \le 2\sum_{k=1}^K\sum_{h=1}^H \min\{1,I^k_h\hatbeta_k\cD_\cF(z_{k,h,m};D_{k,h-1,m})\},
\]
which can be bounded by Lemma \ref{lem:sum-bonus}, with $\beta_t = I_h^k\hat\beta_k$, $\bar\sigma_t = \bar\sigma_{k,h,m}$ and $X_t=z_{k,h,m}$. We have
\begin{align*}
    &\sum_{k=1}^K\sum_{h=1}^H \min\{1,I^k_h\hatbeta_k\cD_\cF(z_{k,h,m};D_{k,h-1,m})\}\\
    \le{} &\dim_\cF + \hatbeta_K\gamma^2\dim_\cF + \hatbeta_K\sqrt{\dim_\cF}\cdot\sqrt{\sum_{k=1}^K\sum_{h=1}^HI_h^k\big[[\bar\VV_{k,m}V_{k, h+1}^{2^m}](s_h^k, a_h^k) + E_{k,h,m}\big] + KH\sigmamin^2}\\
    \le{} &\dim_\cF + \hatbeta_K\gamma^2\dim_\cF + \hatbeta_K\sqrt{\dim_\cF}\cdot\sqrt{\sum_{k=1}^K\sum_{h=1}^HI_h^k\big[[\VV V_{k, h+1}^{2^m}](s_h^k, a_h^k) + 2E_{k,h,m}\big] + KH\sigmamin^2},
\end{align*}
where the last inequality is due to Lemma~\ref{lem:event-bonus}. Note that
\begin{align*}
    \sum_{k=1}^K\sum_{h=1}^HI_h^kE_{k,h,m} &= \sum_{k=1}^K\sum_{h=1}^HI_h^k\min\Big\{1,2\hatbeta_k\cD_\cF(z_{k,h,m};D_{k,m})\Big\}\\
    &\qquad +\sum_{k=1}^K\sum_{h=1}^HI_h^k\min\Big\{1,\hatbeta_k\cD_\cF(z_{k,h,m+1};D_{k,m+1})\Big\}\\
    &\le 2R_m + R_{m+1},
\end{align*}
where the last inequality holds by the definition of $R_m$ in \eqref{eq:def:rm}. Thus, we have
\begin{align*}
    R_m &\le 2\dim_\cF + 2\hatbeta_K \gamma^2\dim_\cF + 2\hat\beta_K\sqrt{\dim_\cF}\cdot\sqrt{S_m + 4R_m + 2R_{m+1} + KH\sigmamin^2} \\
    &\overset{(a)}{\le} 16\hatbeta_K^2\dim_\cF + 4\dim_\cF + 4\hatbeta_K \gamma^2\dim_\cF + 4\hat\beta_K\sqrt{\dim_\cF}\cdot\sqrt{S_m + 2R_{m+1} + KH\sigmamin^2} \\
    &\overset{(b)}{\le} 4\hat\beta_K\sqrt{\dim_\cF}\cdot\sqrt{S_m + 2R_{m+1} + KH\sigmamin^2} + 24\hatbeta_K^2\dim_\cF,
\end{align*}
where $(a)$ holds since $\sqrt{a+b}\le\sqrt{a}+\sqrt{b}$ for any $a,b\ge0$ and $x\le a\sqrt{x}+b$ implies $x\le a^2+2b$ for any $x\ge 0$, while $(b)$ holds due to $\hatbeta_K,\dim_\cF\ge1$ and $\gamma^2\le\hatbeta_K$.
\end{proof}

\subsection{Proof of Lemma~\ref{lem:RcheckA0}}
\begin{proof}[Proof of Lemma~\ref{lem:RcheckA0}]
\label{proof:RcheckA0}
On event $\cE\cup\cA$, we have \eqref{eq:RS} holds by Lemma~\ref{lem:RS}. And we have for all $m\in\seq{M}$,
\begin{equation}
\label{eq:SQcheckS}
    S_m \le 2Q_m + 2\checkS_m,
\end{equation}
where the inequality holds due to $\Var[X+Y]\le2\Var[X]+2\Var[Y]$.
Substituting the bound of $S_m$ in \eqref{eq:SQcheckS} into \eqref{eq:RS}, we have for all $m\in\seq{M-1}$,
\begin{equation}
\label{eq:RQcheckS}
    R_m \le 4\hat\beta_K\sqrt{\dim_\cF}\cdot\sqrt{2Q_m + 2\checkS_m + 2R_{m+1} + KH\sigmamin^2} + 24\hatbeta_K^2\dim_\cF.
\end{equation}
Next, on event $\cE\cup\cA$, we have \eqref{eq:checkSA} holds by Lemma~\ref{lem:SA}. Substituting the bound of $\checkS_m$ in \eqref{eq:checkSA} into \eqref{eq:RQcheckS}, we have
\begin{align}
    R_m &\le 4\hat\beta_K\sqrt{\dim_\cF}\cdot\sqrt{2R_{m+1} + 2\checkA_{m+1} + 2Q_m + 2G + KH\sigmamin^2 + 2^{m + 1}\cdot(4R_0)} + 24\hatbeta_K^2\dim_\cF \notag \\
    &\le 4\hat\beta_K\sqrt{2\dim_\cF}\cdot\sqrt{R_{m+1} + \checkA_{m+1} + 2^{m + 1}\cdot(2R_0)} \notag\\
    &\qquad+ 4\hat\beta_K\sqrt{2\dim_\cF}\cdot\sqrt{\cQ + G + KH\sigmamin^2} + 24\hatbeta_K^2\dim_\cF, \label{eq:RRcheckA}
\end{align}
where the last inequality holds due to the inequality that $\sqrt{a+b}\le\sqrt{a}+\sqrt{b}$ for any $a,b\ge0$, and the definition of $\cQ$ in \eqref{eq:def:cq}.
Then, on event $\cE\cup\cA$, we have \eqref{eq:checkA} holds by Lemma~\ref{lem:checkA0}. Add \eqref{eq:checkA} to \eqref{eq:RRcheckA}, using $\sqrt{a}+\sqrt{b}\le\sqrt{2}\cdot\sqrt{a+b}$ for any $a,b\ge0$ and assume $\zeta\le\hatbeta_K^2\dim_\cF$ (which is trivial since $\zeta$ only consists of logarithmic terms), we have for all $m\in\seq{M-1}$,
\begin{align}
    R_m + \checkA_m &\le 8\hat\beta_K\sqrt{\dim_\cF}\cdot\sqrt{R_{m+1}+2\checkA_{m+1}+2^{m + 1}\cdot(4R_0)} \notag \\
    &\qquad + 5\hat\beta_K\sqrt{2\dim_\cF}\cdot\sqrt{\cQ + G + KH\sigmamin^2} + 25\hatbeta_K^2\dim_\cF \notag \\
    &\le 8\hat\beta_K\sqrt{2\dim_\cF}\cdot\sqrt{R_{m+1}+\checkA_{m+1}+2^{m + 1}\cdot(2R_0)} \notag \\
    &\qquad + 5\hat\beta_K\sqrt{2\dim_\cF}\cdot\sqrt{\cQ + G + KH\sigmamin^2} + 25\hatbeta_K^2\dim_\cF. \label{eq:RcheckA}
\end{align}
And for all $m\in\seq{M}$, $R_m + \checkA_m \le 3HK$. Then the result follows by Lemma~\ref{lem:exp}.
\end{proof}

\subsection{Proof of Lemma~\ref{lem:RA0}}
\label{proof:RA0}
\begin{proof}[Proof of Lemma~\ref{lem:RA0}]
On event $\cE\cup\cA$, we have \eqref{eq:SA}, \eqref{eq:AS} and \eqref{eq:RS} holds by Lemma~\ref{lem:SA}, \ref{lem:AS} and \ref{lem:RS}. Substituting the bound of $S_m$ in \eqref{eq:SA} into \eqref{eq:AS} and \eqref{eq:RS} respectively, we have for all $m\in\seq{M-1}$,
\begin{align}
    A_m &\le \sqrt{\zeta}\cdot\sqrt{A_{m + 1} + G + 2^{m + 1}\cdot (V_0 + 2R_0)} + \zeta, \label{eq:A} \\
    R_m &\le 4\hat\beta_K\sqrt{\dim_\cF}\cdot\sqrt{2R_{m+1} + A_{m + 1} + G + KH\sigmamin^2 + 2^{m + 1}\cdot (V_0 + 2R_0)} + 24\hatbeta_K^2\dim_\cF \notag \\
    &\le 4\hat\beta_K\sqrt{\dim_\cF}\cdot\sqrt{2R_{m+1} + A_{m + 1} + 2^{m + 1}\cdot (V_0 + 2R_0)} \notag \\
    &\qquad+ 4\hat\beta_K\sqrt{\dim_\cF}\cdot\sqrt{G + KH\sigmamin^2} + 24\hatbeta_K^2\dim_\cF. \label{eq:RRA}
\end{align}
Add \eqref{eq:A} to \eqref{eq:RRA}, using $\sqrt{a}+\sqrt{b}\le\sqrt{2}\cdot\sqrt{a+b}$ for any $a,b\ge0$ and assume $\zeta\le\hatbeta_K^2\dim_\cF$, we have for all $m\in\seq{M-1}$,
\begin{align}
    R_m + A_m &\le 4\hat\beta_K\sqrt{2\dim_\cF}\cdot\sqrt{2R_{m+1} + 2A_{m + 1} + 2^{m + 1}\cdot (2V_0 + 4R_0)} \notag \\
    &\qquad+ 5\hat\beta_K\sqrt{\dim_\cF}\cdot\sqrt{G + KH\sigmamin^2} + 25\hatbeta_K^2\dim_\cF \notag \\
    &= 8\hat\beta_K\sqrt{\dim_\cF}\cdot\sqrt{R_{m+1} + A_{m + 1} + 2^{m + 1}\cdot (V_0 + 2R_0)} \notag \\
    &\qquad+ 5\hat\beta_K\sqrt{\dim_\cF}\cdot\sqrt{G + KH\sigmamin^2} + 25\hatbeta_K^2\dim_\cF. \label{eq:RA}
\end{align}
And for all $m\in\seq{M}$, $R_m + \checkA_m \le 2HK$. Applying Lemma~\ref{lem:exp}, we have
\begin{equation}
\label{eq:RAV0}
    R_0 + A_0 \le 16\hatbeta_K\sqrt{\dim_\cF}\cdot\sqrt{V_0+2R_0} + 15\hatbeta_K\sqrt{\dim_\cF}\cdot\sqrt{G+KH\sigmamin^2}+331\hatbeta_K^2\dim_\cF.
\end{equation}
We have \eqref{eq:V0} holds by Lemma~\ref{lem:V0}.
Substituting the bound of $V_0$ in \eqref{eq:V0} into \eqref{eq:RAV0}, we have
\begin{align*}
    R_0 + A_0 &\le 16\hatbeta_K\sqrt{\dim_\cF}\cdot\sqrt{V^*_1K+A_0+\checkA_0+2R_0} + 15\hatbeta_K\sqrt{\dim_\cF}\cdot\sqrt{G+KH\sigmamin^2}+331\hatbeta_K^2\dim_\cF \\
    &\le 32\hatbeta_K\sqrt{\dim_\cF}\cdot\sqrt{V^*_1K+\checkA_0} + 30\hatbeta_K\sqrt{\dim_\cF}\cdot\sqrt{G+KH\sigmamin^2}+1174\hatbeta_K^2\dim_\cF,
\end{align*}
where the last inequality holds since $x\le a\sqrt{x}+b$ implies $x\le a^2+2b$ for any $x\ge0$.
\end{proof}

\subsection{Proof of Lemma~\ref{lem:regret-gap}}
\label{proof:regret-gap}
\begin{proof}[Proof of Lemma~\ref{lem:regret-gap}]
First, we decompose $V_{k,1}(s^k_1)$ and $V^{\pi^k}_1(s^k_1)$ as follows
\begin{align*}
    V_{k,1}(s^k_1) &= \sum_{h=1}^H I^k_h[V_{k,h}(s^k_h)-V_{k,h+1}(s^k_{h+1})] + \sum_{h=1}^H (1-I^k_h)[V_{k,h}(s^k_h)-V_{k,h+1}(s^k_{h+1})] \\
    &= \sum_{h=1}^H I^k_h r(s^k_h,a^k_h) + \sum_{h=1}^H I^k_h \sbr{V_{k,h}(s^k_h)-r(s^k_h,a^k_h)-[\PP V_{k,h+1}](s^k_h,a^k_h)} \\
    &\qquad+ \sum_{h=1}^H I^k_h \sbr{[\PP V_{k,h+1}](s^k_h,a^k_h)-V_{k,h+1}(s^k_{h+1})} + (1-I^k_{h_k}) V_{k,h_k}(s^k_{h_k}), \\
    V^{\pi^k}_1(s^k_1) &= \sum_{h=1}^H I^k_h[V^{\pi^k}_h(s^k_h)-V^{\pi^k}_{h+1}(s^k_{h+1})] + \sum_{h=1}^H (1-I^k_h)[V^{\pi^k}_h(s^k_h)-V^{\pi^k}_{h+1}(s^k_{h+1})] \\
    &= \sum_{h=1}^H I^k_h r(s^k_h,a^k_h) + \sum_{h=1}^H I^k_h \sbr{V^{\pi^k}_h(s^k_h)-r(s^k_h,a^k_h)-[\PP V^{\pi^k}_{h+1}](s^k_h,a^k_h)} \\
    &\qquad+ \sum_{h=1}^H I^k_h \sbr{[\PP V^{\pi^k}_{h+1}](s^k_h,a^k_h)-V^{\pi^k}_{h+1}(s^k_{h+1})} + (1-I^k_{h_k}) V^{\pi^k}_{h_k}(s^k_{h_k}) \\
    &= \sum_{h=1}^H I^k_h r(s^k_h,a^k_h) + \sum_{h=1}^H I^k_h \sbr{[\PP V^{\pi^k}_{h+1}](s^k_h,a^k_h)-V^{\pi^k}_{h+1}(s^k_{h+1})} + (1-I^k_{h_k}) V^{\pi^k}_{h_k}(s^k_{h_k}),
\end{align*}
where $h_k$ is defined in \eqref{eq:def:h_k}. Thus it follows that
\begin{align*}
    &V_{k,1}(s^k_1) - V^{\pi^k}_1(s^k_1) \\
    ={} &\sum_{h=1}^H I^k_h \sbr{V_{k,h}(s^k_h)-r(s^k_h,a^k_h)-[\PP V_{k,h+1}](s^k_h,a^k_h)} + (1-I^k_{h_k}) V_{k,h_k}(s^k_{h_k}) - (1-I^k_{h_k}) V^{\pi^k}_{h_k}(s^k_{h_k}) \\
    &\quad+ \sum_{h=1}^H I^k_h \sbr{[\PP V_{k,h+1}](s^k_h,a^k_h)-V_{k,h+1}(s^k_{h+1})} - \sum_{h=1}^H I^k_h \sbr{[\PP V^{\pi^k}_{h+1}](s^k_h,a^k_h)-V^{\pi^k}_{h+1}(s^k_{h+1})} \\
    ={} & \sum_{h=1}^H I^k_h \sbr{V_{k,h}(s^k_h)-r(s^k_h,a^k_h)-[\PP V_{k,h+1}](s^k_h,a^k_h)} + (1-I^k_{h_k}) V_{k,h_k}(s^k_{h_k}) - (1-I^k_{h_k}) V^{\pi^k}_{h_k}(s^k_{h_k}) \\
    &\quad+ \sum_{h=1}^H I^k_h \sbr{[\PP \checkV_{k,h+1}](s^k_h,a^k_h)-\checkV_{k,h+1}(s^k_{h+1})} + \sum_{h=1}^H I^k_h \sbr{[\PP \tildeV_{k,h+1}](s^k_h,a^k_h)-\tildeV_{k,h+1}(s^k_{h+1})}.
\end{align*}
Then we have
\begin{align*}
    &\sum_{k=1}^K \sbr{V_{k,1}(s^k_1)-V^{\pi^k}(s^k_1)} \\
    \overset{(a)}{\le}{} &2\sum_{k=1}^K\sum_{h=1}^H I^k_h\min\{1,\hatbeta_k\cD_\cF(z_{k,h,0};D_{k,0})\} + \sum_{k=1}^K (1-I^k_{h_k}) V_{k,h_k}(s^k_{h_k}) \\
    &\quad+ \sum_{k=1}^K\sum_{h=1}^H I^k_h \sbr{[\PP \checkV_{k,h+1}](s^k_h,a^k_h)-\checkV_{k,h+1}(s^k_{h+1})} + \sum_{k=1}^K\sum_{h=1}^H I^k_h \sbr{[\PP \tildeV_{k,h+1}](s^k_h,a^k_h)-\tildeV_{k,h+1}(s^k_{h+1})} \\
    \overset{(b)}{\le}{} &2R_0 + \checkA_0 + \tildeA_0 + G,
\end{align*}
where $(a)$ is due to Lemma~\ref{lem:event-bonus} and $(b)$ holds since $|V_{k, h}(s_{h}^k)|\le1$ and $I^k_{h}$ is monotonically decreasing.
\end{proof}

\ifarxiv
\section{Auxiliary Lemmas}
\else
\section{AUXILLIARY LEMMAS}
\fi
\label{proof:auxiliary}

\begin{lemma}[Variance-aware and range-aware Freedman's inequality, Corollary~2 in \citet{agarwal2023vo}]
\label{lem:freedman-variant}
Let $M\ge m>0,V\ge v>0$ be fixed constants, and $\{X_s\}_{s\in[t]}$ be a stochastic process adapted to the filtration $\{\cG_s\}_{s\in[t]}$, such that $X_s$ is $\cG_{s}$-measurable. Suppose $\EE[X_s|\cG_{s-1}]=0$, $|X_s|\le M$ and $\sum_{s=1}^t\EE[X_s^2|\cG_{s-1}]\le V^2$ almost surely. Then for any $\delta>0$, with probability at least $1 - (\log(V^2/v^2)+2)(\log(M/m)+2)\delta$, we have
\[
    \sum_{s = 1}^t X_s \le \sqrt{2\rbr{2\sum_{s=1}^t\EE[X_s^2|\cG_{s-1}]+v^2}\log\frac{1}{\delta}}+\frac{2}{3}\rbr{2\max_{s\in[t]}|X_s|+m}\log\frac{1}{\delta}. 
\]
\end{lemma}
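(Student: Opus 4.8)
The plan is to derive this strengthened, self-normalized Freedman inequality from the classical fixed-parameter version by a double dyadic peeling argument: one peeling in the total conditional variance and one in the range of the increments. Recall Freedman's inequality in its time-uniform form: if $(Y_s)_{s\ge1}$ is a martingale difference sequence adapted to $(\cG_s)$ with $|Y_s|\le R$ almost surely for a fixed constant $R>0$, then for every fixed $\Sigma^2>0$ and $\delta>0$, with probability at least $1-\delta$ the inequality $\sum_{s\le t}Y_s \le \sqrt{2\Sigma^2\log(1/\delta)} + \tfrac23 R\log(1/\delta)$ holds for all $t$ with $\sum_{s\le t}\EE[Y_s^2\mid\cG_{s-1}]\le\Sigma^2$. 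The only obstacle to quoting this directly is that the quantities we want to appear in the final bound, $W:=\sum_{s=1}^t\EE[X_s^2\mid\cG_{s-1}]$ and $\bar X:=\max_{s\le t}|X_s|$, are data-dependent and not known in advance; peeling is the standard device to trade them in.

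First I would peel the variance. Set $\Sigma_j^2=2^j v^2$ for $j=0,1,\dots,J$ with $J=\lceil\log_2(V^2/v^2)\rceil$, so there are $J+1\le\log_2(V^2/v^2)+2$ levels. Applying the fixed-parameter bound at threshold $\Sigma_j^2$ for each $j$ and taking a union bound, on an event of probability at least $1-(J+1)\delta$ one has, for the minimal index $j$ with $W\le\Sigma_j^2$, both $\Sigma_j^2\le\max\{v^2,2W\}\le 2W+v^2$ and $\sum_{s\le t}X_s\le\sqrt{2\Sigma_j^2\log(1/\delta)}+\tfrac23 R\log(1/\delta)$, which already yields the variance term $\sqrt{2(2W+v^2)\log(1/\delta)}$. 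It then remains to make the range adaptive: I would introduce a second dyadic grid $R_k=2^k m$, $k=0,\dots,K$ with $K=\lceil\log_2(M/m)\rceil$, and for each $k$ apply the variance-peeled bound not to $X_s$ itself but to a modified sequence --- either $X_s$ stopped just before the first time the running maximum exceeds $R_k$, or the compensated truncation $X_s\ind\{|X_s|\le R_k\}-\EE[X_s\ind\{|X_s|\le R_k\}\mid\cG_{s-1}]$ --- which is again a martingale difference with conditional second moment at most $\EE[X_s^2\mid\cG_{s-1}]$ and increments controlled at scale $R_k$, and whose partial sums coincide with $\sum_{s\le t}X_s$ on the event $\{\bar X\le R_k\}$. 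A further union bound over $k$ and the minimal choice (giving $R_k\le\max\{m,2\bar X\}\le 2\bar X+m$) then replaces $R$ by $2\bar X+m$ in the second term. The combined union bound is over the $(J+1)(K+1)\le(\log_2(V^2/v^2)+2)(\log_2(M/m)+2)$ grid points, matching the failure probability in the statement.

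The main obstacle is making the range peeling fully rigorous. The subtlety is that the event ``the running maximum exceeds $R_k$'' first becomes measurable at time $s$ through $X_s$ itself, i.e.\ it is $\cG_s$- rather than $\cG_{s-1}$-measurable, so the naive stopped partial sum can still contain one increment as large as $M$ rather than $R_k$; the fix --- stopping strictly before the offending step, or passing to the compensated truncation --- introduces a bias term $\sum_s\EE[X_s\ind\{|X_s|>R_k\}\mid\cG_{s-1}]$ which one must show is of lower order and gets absorbed into the variance term (this is where the a.s.\ bound $|X_s|\le M$ enters), alongside re-checking the variance and increment hypotheses for the modified sequence. Once this bookkeeping is in place the two peelings compose cleanly and produce the stated inequality; this is exactly the route of Corollary~2 in \citet{agarwal2023vo}, which I would invoke for the precise constant tracking.
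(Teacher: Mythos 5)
The paper offers no proof of this lemma at all; it is imported verbatim as Corollary~2 of \citet{agarwal2023vo}, so the only ``paper proof'' to compare against is that citation. Your double dyadic peeling (one grid in the accumulated conditional variance, one in the range) is the standard route to a statement of this shape, and your bookkeeping is right: the union-bound count $(\log_2(V^2/v^2)+2)(\log_2(M/m)+2)$, and the minimal-level choices $\Sigma_j^2\le 2W+v^2$ and $R_k\le 2\max_{s}|X_s|+m$, all match the statement.

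However, the step you yourself flag as the main obstacle is a genuine gap, and neither of the two fixes you gesture at closes it as written. Stopping ``strictly before the offending step'' makes the indicator $\cG_s$-measurable rather than $\cG_{s-1}$-measurable, which destroys the martingale-difference property; stopping \emph{at} the offending step preserves it but leaves one increment bounded only by $M$, so the range parameter in Freedman does not improve. For the compensated truncation, the bias you must control on the event $\{\max_{s\le t}|X_s|\le R_k\}$ is $\sum_s \EE[|X_s|\ind\{|X_s|>R_k\}\mid\cG_{s-1}]$, and the only deterministic bound on it is $W/R_k$ with $W=\sum_s\EE[X_s^2\mid\cG_{s-1}]$; this is \emph{not} dominated by the right-hand side of the lemma (take $W$ of order $100$ and $R_k=2\max_s|X_s|+m$ of order $10^{-2}$). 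The bias is only small \emph{with high probability}: writing $\Lambda_k=\sum_s\Pr(|X_s|>R_k\mid\cG_{s-1})$, Cauchy--Schwarz gives bias $\le\sqrt{W\Lambda_k}$, and one needs a separate concentration bound for the counting martingale $\sum_s\big(\ind\{|X_s|>R_k\}-\Pr(|X_s|>R_k\mid\cG_{s-1})\big)$ showing that $\Lambda_k=O(\log(1/\delta))$ on the event that no exceedance occurs; only then does the bias fold into the $\sqrt{W\log(1/\delta)}$ term. That extra estimate, together with its cost in the union bound and constants, is the real content of the range peeling; it is not supplied by the almost-sure bound $|X_s|\le M$ as you suggest. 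Since you ultimately invoke the cited corollary, the lemma itself is not in doubt, but as a self-contained argument the range-adaptive half is missing its key step.
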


\begin{lemma}[Variance-aware Freedman's inequality]
\label{lem:freedman-variance}
Let $M>0$ be fixed constants, and $\{X_s\}_{s\in[t]}$ be a stochastic process adapted to the filtration $\{\cG_s\}_{s\in[t]}$, such that $X_s$ is $\cG_{s}$-measurable. Suppose $\EE[X_s|\cG_{s-1}]=0$ and $|X_s|\le M$ almost surely. Then for any $\delta>0$, with probability at least $1-2(\log t + 2)\delta$, we have
\begin{align*}
    \sum_{s=1}^t X_s\le2\sqrt{\sum_{s=1}^t\EE[X_s^2|\cG_{s-1}]\log\frac{1}{\delta}}+4M\log\frac{1}{\delta}.
\end{align*}
\end{lemma}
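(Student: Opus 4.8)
The plan is to derive this variance-aware bound directly from the sharper Lemma~\ref{lem:freedman-variant} (the variance-and-range-aware Freedman inequality stated just above), with essentially no new probabilistic content: all that is needed is a judicious choice of the deterministic envelope parameters appearing there, followed by bookkeeping to land on the clean constants $2$ and $4$.

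First I would record the deterministic consequences of $|X_s|\le M$: namely $\EE[X_s^2\mid\cG_{s-1}]\le M^2$ for each $s$, hence $\sum_{s=1}^t\EE[X_s^2\mid\cG_{s-1}]\le tM^2$ almost surely. These let me invoke Lemma~\ref{lem:freedman-variant} with the substitutions $m\mapsto M$ (so the range factor $\log(M/m)+2$ equals $2$), $V^2\mapsto tM^2$, and $v^2\mapsto M^2$ (so the variance factor $\log(V^2/v^2)+2$ equals $\log t+2$). Its conclusion then holds with probability at least $1-2(\log t+2)\delta$ and reads
\[
    \sum_{s=1}^t X_s \le \sqrt{2\Big(2\sum_{s=1}^t\EE[X_s^2\mid\cG_{s-1}]+M^2\Big)\log\tfrac1\delta} + \tfrac23\Big(2\max_{s\in[t]}|X_s|+M\Big)\log\tfrac1\delta .
\]

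The remaining steps are purely algebraic. For the last term I would use $\max_{s\in[t]}|X_s|\le M$ to bound it by $2M\log(1/\delta)$; for the first term I would split the square root via $\sqrt{a+b}\le\sqrt a+\sqrt b$ into $2\sqrt{\sum_s\EE[X_s^2\mid\cG_{s-1}]\log(1/\delta)}+\sqrt2\,M\sqrt{\log(1/\delta)}$, and then absorb the stray $\sqrt2\,M\sqrt{\log(1/\delta)}$ into the range term using $\sqrt2\sqrt{\log(1/\delta)}\le 2\log(1/\delta)$ (valid once $\log(1/\delta)\ge 1/2$; the complementary regime of large $\delta$ never arises in the applications, and can anyway be dispatched by observing that when $\sum_s\EE[X_s^2\mid\cG_{s-1}]\le M^2$ the Bernstein term is itself dominated by the range term). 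Collecting these estimates yields exactly $\sum_{s=1}^t X_s\le 2\sqrt{\sum_{s=1}^t\EE[X_s^2\mid\cG_{s-1}]\log(1/\delta)}+4M\log(1/\delta)$.

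I do not expect a genuine obstacle here: the substance is entirely contained in Lemma~\ref{lem:freedman-variant}, and the only mild annoyance is the constant-chasing together with the harmless tacit smallness assumption on $\delta$ needed for the high-probability statement to be informative. Should one prefer a self-contained route that avoids citing Lemma~\ref{lem:freedman-variant}, the alternative is to start from the scalar Freedman supermartingale $\exp\!\big(\lambda\sum_{s\le u}X_s-(e^{\lambda M}-1-\lambda M)M^{-2}\sum_{s\le u}\EE[X_s^2\mid\cG_{s-1}]\big)$ and run a dyadic peeling over the $O(\log t)$ possible scales of the predictable quadratic variation $\sum_s\EE[X_s^2\mid\cG_{s-1}]\in[0,tM^2]$, paying the $\log t$ factor in the failure probability; but reusing the already-proved Lemma~\ref{lem:freedman-variant} is strictly cleaner and is the route I would take.
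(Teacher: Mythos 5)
Your proposal is correct and matches the paper's proof exactly: the paper's argument is the one-line application of Lemma~\ref{lem:freedman-variant} with $V^2=M^2t$ and $m=v=M$, with the constant-chasing left implicit. Your explicit bookkeeping (including the observation that absorbing the $\sqrt{2}\,M\sqrt{\log(1/\delta)}$ term requires $\log(1/\delta)\ge 1/2$, in whose complement the probability bound $1-2(\log t+2)\delta$ is vacuous anyway) is a faithful and slightly more careful rendering of the same route.
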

\begin{proof}
The result follows by applying Lemma~\ref{lem:freedman-variant} with $V^2=M^2 t,m=v=M$.
\end{proof}

\begin{lemma}[Elliptical Potential Lemma, Lemma 11 in \citet{abbasi2011improved}]
\label{lem:elliptical}
Let $\{\bx_t\}_{t\in[T]} \subset \RR^d$ and assume $\|\bx_t\|_2 \le L$ for all $t\in[T]$. Set $\bSigma_t = \sum_{s=1}^t \bx_t \bx_t^\top + \lambda \bI$. Then it follows that
\[
    \sum_{t=1}^T \min \cbr{1, \|\bx_t\|^2_{\bSigma_{t-1}^{-1}}} \le 2d \log\rbr{1 + \frac{T L^2}{d\lambda}}.
\]
\end{lemma}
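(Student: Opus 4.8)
The plan is to run the standard log-determinant potential argument. First I would record the elementary inequality $\min\{1,x\} \le 2\log(1+x)$, valid for all $x \ge 0$: on $[0,1]$ concavity of $\log(1+x)$ gives $\log(1+x) \ge x\log 2 \ge x/2$, while for $x \ge 1$ we have $1 \le 2\log 2 \le 2\log(1+x)$. Applying this termwise with $x = \|\bx_t\|^2_{\bSigma_{t-1}^{-1}}$ reduces the claim to bounding $\sum_{t=1}^T \log\!\big(1 + \|\bx_t\|^2_{\bSigma_{t-1}^{-1}}\big)$.

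Next I would exploit the rank-one update structure. Since $\bSigma_t = \bSigma_{t-1} + \bx_t\bx_t^\top$ and $\bSigma_{t-1} \succeq \lambda\bI \succ 0$ is invertible, the matrix determinant lemma yields $\det(\bSigma_t) = \det(\bSigma_{t-1})\,(1 + \bx_t^\top\bSigma_{t-1}^{-1}\bx_t) = \det(\bSigma_{t-1})\,(1 + \|\bx_t\|^2_{\bSigma_{t-1}^{-1}})$. Taking logarithms and telescoping over $t = 1,\dots,T$, and using $\bSigma_0 = \lambda\bI$ (the empty sum), gives $\sum_{t=1}^T \log\!\big(1 + \|\bx_t\|^2_{\bSigma_{t-1}^{-1}}\big) = \log\det(\bSigma_T) - \log\det(\lambda\bI) = \log\!\big(\det(\bSigma_T)/\lambda^d\big)$.

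Finally I would bound $\det(\bSigma_T)$ by AM--GM on its positive eigenvalues: $\det(\bSigma_T) \le \big(\mathrm{tr}(\bSigma_T)/d\big)^d$, and $\mathrm{tr}(\bSigma_T) = d\lambda + \sum_{s=1}^T \|\bx_s\|_2^2 \le d\lambda + TL^2$, so $\det(\bSigma_T)/\lambda^d \le \big(1 + TL^2/(d\lambda)\big)^d$. Combining the three displays gives $\sum_{t=1}^T \min\{1, \|\bx_t\|^2_{\bSigma_{t-1}^{-1}}\} \le 2\log\!\big((1+TL^2/(d\lambda))^d\big) = 2d\log\!\big(1 + TL^2/(d\lambda)\big)$, as claimed. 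There is no serious obstacle here: the argument is entirely routine, and the only points needing a moment of care are verifying the constant $2$ in the elementary inequality and correctly identifying $\bSigma_0 = \lambda\bI$ so that the telescoping base case is clean.
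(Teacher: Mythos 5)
Your proof is correct and is exactly the standard log-determinant potential argument used in the cited source (Abbasi-Yadkori et al., Lemma 11), which the paper invokes without reproducing: the elementary bound $\min\{1,x\}\le 2\log(1+x)$, the matrix determinant lemma with telescoping from $\bSigma_0=\lambda\bI$, and the AM--GM bound on $\det(\bSigma_T)$ via the trace. All three steps check out, including the constant $2$.
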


\begin{lemma}
\label{lem:cD-relation}
Let $\cD_\cF$ in Definition~\ref{def:general-eluder-RL}, for any $t>t_0\ge1$, we have
\[
    \cD_\cF^2(X_t;X_{[t_0]},\sigma_{[t_0]}) \le \exp\cbr{\sum_{s=t_0+1}^{t-1} \frac{1}{\sigma_s^2}\cD_\cF^2(X_s;X_{[s-1]},\sigma_{[s-1]})} \cD_\cF^2(X_t;X_{[t-1]},\sigma_{[t-1]}).
\]
\end{lemma}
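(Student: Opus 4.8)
The plan is to reformulate the uncertainty $\cD_\cF$ in terms of the difference class $\cG := \cF-\cF$ and then exploit a telescoping growth estimate for the normalizing ``potential'' appearing in its denominator — the nonlinear analogue of the elliptical potential argument behind Lemma~\ref{lem:elliptical}. For $g = f_1-f_2$ with $f_1,f_2\in\cF$ and any $r\ge0$, set $Z_r(g):=\sum_{s=1}^{r}\frac{1}{\sigma_s^2}g(X_s)^2+\lambda$. By Definition~\ref{def:general-eluder-RL} one then has $\cD_\cF^2(X_r;X_{[r-1]},\sigma_{[r-1]}) = \sup_{g\in\cG} g(X_r)^2/Z_{r-1}(g)$ and likewise $\cD_\cF^2(X_t;X_{[t_0]},\sigma_{[t_0]}) = \sup_{g\in\cG} g(X_t)^2/Z_{t_0}(g)$.

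First I would record a one-step recursion for the potential along the indices $s=t_0+1,\dots,t-1$. Fix any $g\in\cG$. Since $\cD_\cF^2(X_s;X_{[s-1]},\sigma_{[s-1]})$ is itself a supremum over all of $\cG$, it dominates the ratio for our particular $g$, so $g(X_s)^2 \le \cD_\cF^2(X_s;X_{[s-1]},\sigma_{[s-1]})\,Z_{s-1}(g)$. Hence
\[
    Z_s(g) = Z_{s-1}(g) + \frac{1}{\sigma_s^2}g(X_s)^2 \le Z_{s-1}(g)\Bigl(1+\frac{1}{\sigma_s^2}\cD_\cF^2(X_s;X_{[s-1]},\sigma_{[s-1]})\Bigr).
\]
Multiplying these bounds for $s=t_0+1,\dots,t-1$ and using $1+x\le e^x$ yields
\[
    Z_{t-1}(g) \le Z_{t_0}(g)\cdot\exp\cbr{\sum_{s=t_0+1}^{t-1}\frac{1}{\sigma_s^2}\cD_\cF^2(X_s;X_{[s-1]},\sigma_{[s-1]})},
\]
and crucially this exponential factor is independent of $g$.

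It then remains to divide $g(X_t)^2$ through. The display above gives $1/Z_{t_0}(g)\le \exp\{\cdots\}/Z_{t-1}(g)$, so $g(X_t)^2/Z_{t_0}(g)\le \exp\{\cdots\}\cdot g(X_t)^2/Z_{t-1}(g)\le \exp\{\cdots\}\cdot\cD_\cF^2(X_t;X_{[t-1]},\sigma_{[t-1]})$, and taking the supremum over $g\in\cG$ on the left-hand side — legitimate since the right-hand side no longer involves $g$ — produces exactly the claimed inequality. I do not expect a serious obstacle here: the only point requiring care is the order of quantifiers, namely that the per-step bound $g(X_s)^2 \le \cD_\cF^2(X_s;\cdots)\,Z_{s-1}(g)$ holds for one and the same $g$ across all $s\in\{t_0+1,\dots,t-1\}$, which is fine precisely because each $\cD_\cF^2(X_s;\cdots)$ is a supremum over the whole difference class $\cG$. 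When $t=t_0+1$ the sum is empty and the statement reduces to a trivial identity, so no separate base case is needed.
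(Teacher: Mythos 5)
Your proof is correct and follows essentially the same route as the paper's: a one-step recursion on the normalizing denominator via the defining supremum of $\cD_\cF$, telescoped over $s=t_0+1,\dots,t-1$ and closed with $1+x\le e^x$. Your bookkeeping via $Z_r(g)$ is in fact slightly cleaner, since carrying the $+\lambda$ inside the potential makes the one-step bound $Z_s(g)\le Z_{s-1}(g)(1+\tfrac{1}{\sigma_s^2}\cD_\cF^2(X_s;X_{[s-1]},\sigma_{[s-1]}))$ immediate, whereas the paper's displayed intermediate inequality silently drops the $\lambda$ before reinstating it in the conclusion.
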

\begin{proof}
Note
\begin{align*}
    &\cD_\cF^2(X_t;X_{[t-1]},\sigma_{[t-1]})\\
    ={} &\sup_{f_1,f_2\in\cF}\frac{(f_1(X_t)-f_2(X_t))^2}{\sum_{s=1}^{t-1} \frac{1}{\sigma_s^2}(f_1(X_s)-f_2(X_s))^2+\lambda}\\
    \ge{} &\sup_{f_1,f_2\in\cF}\frac{(f_1(X_t)-f_2(X_t))^2}{(1+\frac{1}{\sigma_{t-1}^2}\cD_\cF^2(X_{t-1};X_{[t-2]},\sigma_{[t-2]}))\sum_{s=1}^{t-2} \frac{1}{\sigma_s^2}(f_1(X_s)-f_2(X_s))^2+\lambda}\\
    \ge{} &\frac{1}{1+\frac{1}{\sigma_{t-1}^2}\cD_\cF^2(X_{t-1};X_{[t-2]},\sigma_{[t-2]})} \sup_{f_1,f_2\in\cF}\frac{(f_1(X_t)-f_2(X_t))^2}{\sum_{s=1}^{t-2} \frac{1}{\sigma_s^2}(f_1(X_s)-f_2(X_s))^2+\lambda}\\
    ={} &\frac{1}{1+\frac{1}{\sigma_{t-1}^2}\cD_\cF^2(X_{t-1};X_{[t-2]},\sigma_{[t-2]})} \cD_\cF^2(X_t;X_{[t-2]},\sigma_{[t-2]})\ge\dots\\
    \ge{} &\frac{1}{\prod_{s=t_0+1}^{t-1} (1+\frac{1}{\sigma_{s}^2}\cD_\cF^2(X_s;X_{[s-1]},\sigma_{[s-1]}))} \cD_\cF^2(X_t;X_{[t_0]},\sigma_{[t_0]}),
\end{align*}
where the first inequality holds due to
\begin{align*}
    &\sum_{s=1}^{t-1} \frac{1}{\sigma_s^2}(f_1(X_s)-f_2(X_s))^2\\
    ={}& \sum_{s=1}^{t-2} \frac{1}{\sigma_s^2}(f_1(X_s)-f_2(X_s))^2 + \frac{1}{\sigma_{t-1}^2}(f_1(X_{t-1})-f_2(X_{t-1}))^2\\
    \le{}& \sum_{s=1}^{t-2} \frac{1}{\sigma_s^2}(f_1(X_s)-f_2(X_s))^2 + \frac{1}{\sigma_{t-1}^2}\cD_\cF^2(X_{t-1};X_{[t-2]},\sigma_{[t-2]}) \sum_{s=1}^{t-2} \frac{1}{\sigma_s^2}(f_1(X_s)-f_2(X_s))^2\\
    ={}& \rbr{1+\frac{1}{\sigma_{t-1}^2}\cD_\cF^2(X_{t-1};X_{[t-2]},\sigma_{[t-2]})}\sum_{s=1}^{t-2} \frac{1}{\sigma_s^2}(f_1(X_s)-f_2(X_s))^2.
\end{align*}
Thus, we have
\begin{align*}
    \cD_\cF^2(X_t;X_{[t_0]},\sigma_{[t_0]}) &\le \prod_{s=t_0+1}^{t-1}\rbr{1+\frac{1}{\sigma_{s}^2}\cD_\cF^2(X_s;X_{[s-1]},\sigma_{[s-1]})} \cD_\cF^2(X_t;X_{[t-1]},\sigma_{[t-1]})\\
    &\le \exp\cbr{\sum_{s=t_0+1}^{t-1} \frac{1}{\sigma_{s}^2}\cD_\cF^2(X_s;X_{[s-1]},\sigma_{[s-1]})} \cD_\cF^2(X_t;X_{[t-1]},\sigma_{[t-1]}),
\end{align*}
where the second inequality holds due to the inequality $1+x\le \exp\{x\}$.
\end{proof}

\begin{lemma}
\label{lem:sum-bonus}
Let $\{\sigma_t, \beta_t\}_{t \ge 1}$ be a sequence of non-negative numbers, $\sigmamin, \gamma,\lambda>0$, $\{X_t\}_{t\ge1}\subset\cX$ and $\{\bar\sigma_k\}_{k \geq 1}$ be recursively defined:
\[
    \barsigma_t^2 = \max\{\sigma_t^2, \sigmamin^2, \gamma^2\cD_\cF(X_t;X_{[t-1]},\sigma_{[t-1]})\}.
\]
Then we have
\[
    \sum_{t=1}^T \min\{1,\beta_t\cD_\cF(X_t;X_{[t-1]},\sigma_{[t-1]})\} \le \dim_\cF + \max_{t\in[T]}\beta_t\cdot\gamma^2\dim_\cF + \sqrt{\dim_\cF}\cdot\sqrt{\sum_{t=1}^T\beta_t^2(\sigma_t^2+\sigmamin^2)},
\]
where $\cD_\cF$ and $\dim_\cF=\dim_\cF(\sigmamin,T)$ are in Definition~\ref{def:general-eluder-RL}.
\end{lemma}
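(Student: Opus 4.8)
The plan is to bound the sum by a three-way partition of $[T]$ according to how the recursively defined weight $\barsigma_t$ compares with the uncertainty, which I abbreviate $D_t := \cD_\cF(X_t;X_{[t-1]},\barsigma_{[t-1]})$ (I read the conditioning sequence inside $\cD_\cF$ as the running $\barsigma$'s, matching the dataset $D_{k,h-1,m}$ used in Algorithm~\ref{algo:variance}; the $\sigma_{[t-1]}$ written in the statement appears to be a notational slip). The single fact used throughout is that $\barsigma_t^2\ge\sigmamin^2$ for every $t$ by construction, so the pair $(\bX,\{\barsigma_t\}_{t\in[T]})$ is admissible in the supremum defining $\dim_\cF(\sigmamin,T)$; hence
\[
    \sum_{t=1}^T\min\{1,D_t^2/\barsigma_t^2\}\le\dim_\cF .
\]
Call this inequality $(\star)$; it is the only place the generalized Eluder dimension enters.

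First I would set $\cA:=\{t\in[T]:D_t^2\ge\barsigma_t^2\}$. Every summand $\min\{1,\beta_tD_t\}$ is at most $1$, and on $\cA$ each term $\min\{1,D_t^2/\barsigma_t^2\}$ equals $1$, so $|\cA|\le\dim_\cF$ by $(\star)$; this produces the leading term $\dim_\cF$. Next, among $t\notin\cA$ (so $D_t<\barsigma_t$), let $\cB$ collect those for which $\gamma^2 D_t$ attains the maximum in the recursion, i.e.\ $\barsigma_t^2=\gamma^2 D_t$. The key observation is the identity $D_t^2/\barsigma_t^2=D_t/\gamma^2$ on $\cB$, which turns $\beta_tD_t=\gamma^2\beta_t\cdot(D_t^2/\barsigma_t^2)\le\gamma^2\big(\max_{s\in[T]}\beta_s\big)(D_t^2/\barsigma_t^2)$; summing over $\cB$ and applying $(\star)$ (noting $D_t^2/\barsigma_t^2<1$ there) yields $\sum_{t\in\cB}\min\{1,\beta_tD_t\}\le\gamma^2\max_{s\in[T]}\beta_s\cdot\dim_\cF$.

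Finally, on $\cC:=[T]\setminus(\cA\cup\cB)$ the maximum in the recursion is not attained by the $\gamma^2 D_t$ term, so $\barsigma_t^2=\max\{\sigma_t^2,\sigmamin^2\}\le\sigma_t^2+\sigmamin^2$, while still $D_t<\barsigma_t$. Writing $\beta_tD_t=(\beta_t\barsigma_t)(D_t/\barsigma_t)$ and applying Cauchy--Schwarz together with $(\star)$,
\[
    \sum_{t\in\cC}\min\{1,\beta_tD_t\}\le\sqrt{\sum_{t\in\cC}\beta_t^2\barsigma_t^2}\cdot\sqrt{\sum_{t\in\cC}\frac{D_t^2}{\barsigma_t^2}}\le\sqrt{\sum_{t=1}^T\beta_t^2(\sigma_t^2+\sigmamin^2)}\cdot\sqrt{\dim_\cF}.
\]
Since $\cA,\cB,\cC$ partition $[T]$, adding the three contributions gives exactly the claimed inequality.

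I do not expect a genuine obstacle: the argument is essentially bookkeeping against Definition~\ref{def:general-eluder-RL}. The one non-obvious step is recognizing that on $\{\barsigma_t^2=\gamma^2 D_t\}$ one has $D_t^2/\barsigma_t^2=D_t/\gamma^2$, which is precisely what lets the ``uncertainty-dominated but $\gamma$-damped'' indices be folded into $\gamma^2\max_s\beta_s\cdot\dim_\cF$ via $(\star)$ instead of accumulating uncontrollably; and, as flagged above, the proof must be written with $\barsigma_{[t-1]}$ (not $\sigma_{[t-1]}$) inside $\cD_\cF$ so that $(\star)$ is a legitimate instance of the definition.
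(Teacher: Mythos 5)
Your proof is correct and follows essentially the same route as the paper's: the identical three-way partition of $[T]$ (uncertainty exceeds the weight; weight set by $\sigma_t$ or $\sigmamin$; weight set by the $\gamma^2\cD_\cF$ branch), the same identity $\cD_\cF^2/\barsigma_t^2=\cD_\cF/\gamma^2$ on the third set, and the same Cauchy--Schwarz step on the second. Your side remark that the conditioning sequence inside $\cD_\cF$ should be read as $\barsigma_{[t-1]}$ for the reduction to $\dim_\cF(\sigmamin,T)$ to be a legitimate instance of Definition~\ref{def:general-eluder-RL} is a fair observation about the paper's notation and does not change the argument.
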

\begin{proof}
We decompose $[T]$ as the union of three disjoint sets $\cJ_1,\cJ_2,\cJ_3$:
\begin{align*}
    \cJ_1 &= \cbr{t\in[T]\Big|\frac{1}{\barsigma_t}\cD_\cF(X_t;X_{[t-1]},\sigma_{[t-1]})>1},\\
    \cJ_2 &= \cbr{t\in[T]\Big|\frac{1}{\barsigma_t}\cD_\cF(X_t;X_{[t-1]},\sigma_{[t-1]})\le1,\barsigma_t\in\{\sigma_t,\sigmamin\}},\\
    \cJ_3 &= \cbr{t\in[T]\Big|\frac{1}{\barsigma_t}\cD_\cF(X_t;X_{[t-1]},\sigma_{[t-1]})\le1,\barsigma_t=\gamma\sqrt{\cD_\cF(X_t;X_{[t-1]},\sigma_{[t-1]})}}.
\end{align*}
For the summation over $\cJ_1$, we have
\[
    \sum_{t\in\cJ_1}\min\{1,\beta_t\cD_\cF(X_t;X_{[t-1]},\sigma_{[t-1]})\}
    \le |\cJ_1|
    \le \sum_{t\in\cJ_1}\min\cbr{1,\frac{1}{\barsigma_t^2}\cD_\cF^2(X_t;X_{[t-1]},\sigma_{[t-1]})}
    \le \dim_\cF.
\]
Next, for the summation over $\cJ_2$, we have
\begin{align*}
    &\sum_{t\in\cJ_2}\min\{1,\beta_t\cD_\cF(X_t;X_{[t-1]},\sigma_{[t-1]})\}\\
    \le{} &\sum_{t\in\cJ_2}\beta_t\barsigma_t\cdot\frac{1}{\barsigma_t}\cD_\cF(X_t;X_{[t-1]},\sigma_{[t-1]})\}\\
    \le{} &\sum_{t=1}^T\beta_t\max\{\sigma_t,\sigmamin\}\frac{1}{\barsigma_t}\cD_\cF(X_t;X_{[t-1]},\sigma_{[t-1]})\}\\
    \overset{(a)}{\le}{} &\sqrt{\sum_{t=1}^T\beta_t^2(\sigma_t^2+\sigmamin^2)}\cdot\sqrt{\sum_{t=1}^T\min\cbr{1,\frac{1}{\barsigma_t^2}\cD_\cF^2(X_t;X_{[t-1]},\sigma_{[t-1]})}}\\
    \le{} &\sqrt{\dim_\cF}\cdot\sqrt{\sum_{t=1}^T\beta_t^2(\sigma_t^2+\sigmamin^2)},
\end{align*}
where $(a)$ holds due to Cauchy-Schwartz inequality and $\frac{1}{\barsigma_t}\cD_\cF(X_t;X_{[t-1]},\sigma_{[t-1]})\le1$.
Then, for the summation over $\cJ_3$, we have
\begin{align*}
    &\sum_{t\in\cJ_3}\min\{1,\beta_t\cD_\cF(X_t;X_{[t-1]},\sigma_{[t-1]})\}\\
    \le{} &\sum_{t\in\cJ_3}\beta_t\barsigma_t\cdot\frac{1}{\barsigma_t}\cD_\cF(X_t;X_{[t-1]},\sigma_{[t-1]})\\
    \overset{(a)}{\le}{} &\max_{t\in[T]}\beta_t\cdot\gamma^2\sum_{t=1}^T\min\cbr{1,\frac{1}{\barsigma_t^2}\cD_\cF^2(X_t;X_{[t-1]},\sigma_{[t-1]})}\\
    \le{} &\max_{t\in[T]}\beta_t\cdot\gamma^2\dim_\cF,
\end{align*}
where $(a)$ holds due to $\barsigma_t=\gamma^2\frac{1}{\barsigma_t}\cD_\cF(X_t;X_{[t-1]},\sigma_{[t-1]})$ and $\frac{1}{\barsigma_t}\cD_\cF(X_t;X_{[t-1]},\sigma_{[t-1]})\le1$.
Finally, putting pieces together finishes the proof.
\end{proof}

\begin{lemma}[Modified from Lemma~2 in \citealt{zhang2021reinforcement}]
\label{lem:exp}
Let $\lambda_1, \lambda_2, \lambda_4  > 0$, $\lambda_3 \ge 1$ and $i' = \lceil \log_2 \lambda_1 \rceil$. Let $a_0, a_1, a_2, \dots, a_{i'}$ be non-negative reals such that $a_i \le \lambda_1$ for any $0 \le i \le i'$, and $a_i \le \lambda_2 \sqrt{a_{i + 1} + 2^{i + 1} \cdot \lambda_3} + \lambda_4$ for any $0 \le i < i'$. Then we have
\begin{align*}
    a_0 &\le \max\cbr{\rbr{\lambda_2 + \sqrt{\lambda_2^2 + \lambda_4}}^2, \lambda_2 \sqrt{4 \lambda_3} + \lambda_4} \le \lambda_2 \sqrt{4 \lambda_3} + 4 \lambda_2^2 + 3 \lambda_4, \\
    a_1 &\le \max\cbr{\rbr{\lambda_2 + \sqrt{\lambda_2^2 + \lambda_4}}^2, \lambda_2 \sqrt{8 \lambda_3} + \lambda_4} \le \lambda_2 \sqrt{8 \lambda_3} + 4 \lambda_2^2 + 3 \lambda_4.
\end{align*}
\end{lemma}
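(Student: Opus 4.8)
The plan is to establish, by backward induction on $i$, the pointwise estimate
\[
    a_i \le \max\cbr{w^2,\ \lambda_2\sqrt{2^{i+2}\lambda_3}+\lambda_4}
    \quad\text{for every } 0\le i\le i'-1,
\]
where $w:=\lambda_2+\sqrt{\lambda_2^2+\lambda_4}$ is the positive root of $x^2-2\lambda_2x-\lambda_4=0$, so that the identity $w^2=2\lambda_2w+\lambda_4$ is available throughout. Granting this, both displayed bounds follow by specializing to $i=0$ and $i=1$, using $\max\{A,B\}\le A+B$ for $A,B\ge0$, and noting $w^2=2\lambda_2^2+\lambda_4+2\lambda_2\sqrt{\lambda_2^2+\lambda_4}\le4\lambda_2^2+2\lambda_4$ (via $2\lambda_2\sqrt{\lambda_2^2+\lambda_4}\le\lambda_2^2+(\lambda_2^2+\lambda_4)$); the degenerate cases $i'\in\{0,1\}$ are immediate since then $\lambda_1=O(1)$, and in every application in this paper $i'=M=\lceil\log_2(3KH)\rceil\ge2$.

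For the base case $i=i'-1$: since $i'=\lceil\log_2\lambda_1\rceil$ we have $a_{i'}\le\lambda_1\le2^{i'}\le2^{i'}\lambda_3$ (using $\lambda_3\ge1$), so the recursion at index $i'-1$ gives
\[
    a_{i'-1}\le\lambda_2\sqrt{a_{i'}+2^{i'}\lambda_3}+\lambda_4
    \le\lambda_2\sqrt{2^{i'+1}\lambda_3}+\lambda_4
    =\lambda_2\sqrt{2^{(i'-1)+2}\lambda_3}+\lambda_4,
\]
which lies inside the claimed maximum. For the inductive step, suppose the estimate holds at index $i+1\le i'-1$, so $a_{i+1}$ is bounded either by $w^2$ or by $\lambda_2\sqrt{2^{i+3}\lambda_3}+\lambda_4$. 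Substituting into $a_i\le\lambda_2\sqrt{a_{i+1}+2^{i+1}\lambda_3}+\lambda_4$ and writing $v:=\sqrt{2^{i+1}\lambda_3}$ (so that $\sqrt{2^{i+3}\lambda_3}=2v$), I would split on whether $v\ge w$ or $v<w$. When $v\ge w$, the definition of $w$ yields $v^2\ge2\lambda_2v+\lambda_4$; since then $a_{i+1}\le2\lambda_2v+\lambda_4\le v^2$ in the first subcase and $a_{i+1}\le w^2\le v^2$ in the second, the quantity under the square root is at most $v^2+2^{i+1}\lambda_3=2^{i+2}\lambda_3$, whence $a_i\le\lambda_2\sqrt{2^{i+2}\lambda_3}+\lambda_4$. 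When $v<w$, I would instead bound the quantity under the square root by $2w^2$, using $v^2<w^2$, $2\lambda_2v<2\lambda_2w$, and $w^2=2\lambda_2w+\lambda_4$, which gives $a_i\le\lambda_2\sqrt{2w^2}+\lambda_4=\sqrt2\,\lambda_2w+\lambda_4\le2\lambda_2w+\lambda_4=w^2$. In every branch $a_i$ stays inside the maximum, closing the induction.

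The argument is entirely elementary, so I do not anticipate a genuine obstacle; the two points requiring care are (i) choosing the correct form of the inductive hypothesis, in particular the exponent $2^{i+2}\lambda_3$ — one doubling ahead of the term $2^{i+1}\lambda_3$ that appears in the recursion — which is exactly what lets the inductive step close, and (ii) organizing the two-by-two case split (the bound on $a_{i+1}$ combined with the comparison of $2^{i+1}\lambda_3$ against $w^2$) so that each branch either reproduces the growing term $\lambda_2\sqrt{2^{i+2}\lambda_3}+\lambda_4$ or collapses to the constant $w^2$, without dropping a factor of $\sqrt2$ or of $\lambda_3\ge1$.
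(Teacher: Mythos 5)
Your proof is correct, and since the paper itself gives no proof of this lemma (it is imported from \citet{zhang2021reinforcement}), your backward induction with the invariant $a_i \le \max\{w^2,\,\lambda_2\sqrt{2^{i+2}\lambda_3}+\lambda_4\}$ is exactly the standard argument one would supply; all the case splits check out, including the key facts $v\ge w \Rightarrow v^2\ge 2\lambda_2 v+\lambda_4$ and $\sqrt{2}\,\lambda_2 w+\lambda_4\le w^2$. The only caveat is the one you already flag: for $i'\le 1$ the stated conclusion does not actually follow from the hypotheses (e.g.\ $i'=0$ forces only $a_0\le\lambda_1\le 1$, which can exceed the claimed bound when $\lambda_2,\lambda_4$ are tiny), so this is a defect of the lemma statement inherited from the original rather than of your proof, and it is harmless here since every invocation has $i'=M\ge 2$ and $\lambda_2\ge 1$.
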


\end{document}